\theoremstyle{plain}
\newtheorem{theorem}{Theorem}[section]
\newtheorem{lemma}[theorem]{Lemma}
\theoremstyle{definition}
\theoremstyle{remark}
\newcommand{\norm}[1]{\left\lVert#1\right\rVert}
\newcommand{\pdet}[1]{\det\inparen{#1}}
\newcommand{\infnorm}[1]{\lVert#1\rVert_{\infty}}
\newcommand{\lnorm}[1]{\left\lVert#1\right\rVert_2}
\newcommand{\fnorm}[1]{\left\lVert#1\right\rVert_F}
\DeclareMathOperator{\diag}{diag}
\newcommand{\inparen }[1]{\left(#1\right)}             
\newcommand{\inbrace }[1]{\left\{#1\right\}}           
\newcommand{\inangle }[1]{\left\langle#1\right\rangle} 
\def\eqref#1{equation~\ref{#1}}
\def\1{\bm{1}}
\DeclareMathAlphabet{\mathsfit}{\encodingdefault}{\sfdefault}{m}{sl}
\SetMathAlphabet{\mathsfit}{bold}{\encodingdefault}{\sfdefault}{bx}{n}
\def\tH{{\tens{H}}}
\def\tN{{\tens{N}}}
\newcommand{\R}{\mathbb{R}}
\DeclareMathOperator*{\argmax}{arg\,max}
\DeclareMathOperator*{\argmin}{arg\,min}
\DeclareMathOperator{\Tr}{Tr}
\newcommand{\Sai}[1]{}
\newcommand{\inderjit}[1]{}
\newcommand{\vineet}[1]{}
\newcommand{\G}{\mathcal{G}}
\DeclareMathOperator{\neigG}{neig_{\mathcal{G}}}
\newcommand{\Gtil}{\tilde{\mathcal{G}}}
\newcommand{\inabs}[1]{\left|#1\right|}
\newcommand{\bigO}{\mathcal{O}}
\DeclareMathOperator{\logdetdiv}{D_{\ell d}}
\DeclareMathOperator{\bregdiv}{D_{\phi}}
\DeclareMathOperator{\rank}{rank}
\newcommand{\tikzmark}[1]{\tikz[overlay,remember picture] \node (#1) {};}
\newcommand{\DrawBox}[1][]{%
    \tikz[overlay,remember picture]{
    \draw[red,#1]
      ($(left)+(-0.2em,0.9em)$) rectangle
      ($(right)+(0.2em,-0.3em)$);}
}
\newcommand{\DrawBoxNarrow}[1][]{%
    \tikz[overlay,remember picture]{
    \draw[red,#1]
      ($(left)+(-0.8em,1.4em)$) rectangle
      ($(right)+(0.2em,-0.6em)$);}
}
\newcommand{\DrawBoxWide}[1][]{%
    \tikz[overlay,remember picture]{
    \draw[red,#1]
      ($(left)+(-0.5em,1.4em)$) rectangle
      ($(right)+(0.5em,-0.6em)$);}
}
\title{A Computationally Efficient Sparsified Online Newton Method}
\author{%
  Devvrit 
  \thanks{equal contribution, $^\dagger$ Work done while at Google} $\; ^\dagger$ \\
  Department of Computer Science\\
  The University of Texas at Austin\\
  \texttt{devvrit.03@gmail.com} \\
  \And
  Sai Surya Duvvuri$^{*}$ \\
  Department of Computer Science\\
  The University of Texas at Austin\\
  \texttt{subramanyamdvss@gmail.com} \\
  \And
  Rohan Anil \\
  Google DeepMind \\
  \texttt{rohananil@google.com} \\
  \And
  Vineet  Gupta \\
  Google \\
  \texttt{vineet@google.com} \\
  \And
  Cho-Jui Hsieh \\
  CS Department, UCLA \& Google \\
  \texttt{chohsieh@cs.ucla.edu} \\
  \And
  Inderjit Dhillon \\
  Google\\
  \texttt{isd@google.com} \\
}
\begin{document}

\maketitle

\begin{abstract}
  Second-order methods hold significant promise for enhancing the convergence of deep neural network training; however, their large memory and computational demands have limited their practicality. Thus there is a need for scalable second-order methods that can efficiently train large models. In this paper, we introduce the Sparsified Online Newton~(SONew) method, a memory-efficient second-order algorithm that yields a sparsified yet effective preconditioner. The algorithm emerges from a novel use of the LogDet matrix divergence measure; we combine it with sparsity constraints to minimize regret in the online convex optimization framework.
  Empirically, we test our method on large scale benchmarks of up to 1B parameters. We achieve up to $30\%$ faster convergence, $3.4\%$ relative improvement in validation performance, and $80\%$ relative improvement in training loss, in comparison to memory efficient optimizers including first order methods.
  Powering the method is a surprising fact -- imposing structured sparsity patterns, like tridiagonal and banded structure, requires little to no overhead, making it as efficient and parallelizable as first-order methods. In wall-clock time, tridiagonal SONew is only about $3\%$ slower per step than first-order methods but gives overall gains due to much faster convergence.
  In contrast, one of the state-of-the-art (SOTA) memory-intensive second-order methods, Shampoo, is unable to scale to large benchmarks. Additionally, while Shampoo necessitates significant engineering efforts to scale to large benchmarks, SONew offers a more straightforward implementation, increasing its practical appeal. SONew code is available at: \href{https://github.com/devvrit/SONew}{https://github.com/devvrit/SONew}
  \looseness=-1
\end{abstract}

\section{Introduction}
Stochastic first order methods which use the negative gradient direction to update parameters have become the standard for training deep neural networks~(DNNs). Gradient-based preconditioning involves finding an update direction, by multiplying the gradient with a preconditioner matrix carefully chosen from gradients observed in previous iterations, to improve convergence. (Full-matrix) Adagrad~\cite{duchi2011adaptive}, online Newton method~\cite{hazan2007logarithmic} and natural gradient descent~\cite{amari1998natural} use a full-matrix preconditioner, but computing and storing the full matrix is infeasible when there are millions of parameters. Thus, diagonal versions  such as diagonal Adagrad, Adam~\cite{kingma2014adam}, and RMSprop~\cite{hinton2012neural} are now widely used to train DNNs due to their scalability.
\looseness=-1

Several higher-order methods have previously been applied to deep learning (\cite{gupta2018shampoo,anil2020scalable,goldfarb2020practical,martens2015optimizing}). All these methods use Kronecker product factorizations that reduce computational and storage costs to make them feasible for training neural networks. However, to precondition a $d_1\times d_2$ parameter matrix, these methods require matrix inverse operations, which take $\bigO(d_1^3+d_2^3)$ time and $\bigO(d_1^2+d_2^2)$ space. In comparison, first-order methods use $\bigO(d_1d_2)$ time and memory, which is linear in the number of parameters. For instance, when $d_1=kd_2$, the memory used by Shampoo, $d_1^2+d_2^2$ floating point numbers is $\bigO(k)$ times the number of parameters, which could be arbitrarily large depending on $k$. This calls for further research in developing efficient second-order optimization techniques to train DNNs with memory and time complexity linear in the number of parameters.


In this paper, we present a novel Sparsified Online Newton~(SONew) method, which only requires linear time and space complexity, to train large-scale DNNs. We derive the algorithm through two steps, classical regret analysis followed by a sparsification step. In more detail, regret analysis when using a preconditioner reveals that the error is bounded by two terms, the first depends on the change in the preconditioning matrix, while the second depends on the generalized gradient norm (see \cref{sec:sonew} for more details). We take a novel approach of minimizing the second term while regularizing two successive preconditioners to be close in the LogDet matrix divergence measure~\cite{kulis2009low} (see \Cref{sec:sonew} for the intuition behind choosing LogDet divergence). This analysis naturally yields us an Online Newton method~\cite{hazan2007logarithmic}. To make it computationally efficient, we further sparsify the preconditioner by finding a sparse approximation that is close in LogDet divergence. Thus we are consistent in using the same measure (LogDet divergence) in both the regularization and sparsification steps. This gives us our SONew method, which achieves linear complexity by leveraging structured sparsity patterns, such as tridiagonal and banded, in the preconditioner.
This is unlike most existing online Newton methods that require quadratic space and cubic time complexity. 
By making each step linear time, the SONew method can be applied to train modern DNNs as efficiently as first order methods.
Further, our method is embarrassingly parallelizable thus making negligible the overhead of computing the preconditioner. We also show that introducing sparsity allows us to reduce the condition number of the problem dynamically to improve numerical stability. 


We strengthen the relationship between sparse LogDet divergence minimization and online convex optimization by establishing an optimal $\bigO(\sqrt{T})$ regret upper bound for tridiagonal sparsity pattern. In our experiments on an MLP Autoencoder and Graph Neural Network (GNN), we found that our method outperformed first-order methods in terms of training loss within the same training time, while Shampoo (second-order method) takes significantly longer.  In our experiments on Vision Transformers on Imagenet and GNN on OGBG-molpcba, we achieve a target validation performance using 10\% and 30\% fewer iterations respectively compared to Adam, the SOTA optimizer for both benchmarks. Furthermore, using the same number of iterations as Adam we observe 0.7\% and 3.4\% relative improvement for ViT and GNN respectively in validation performance. From an optimization point of view, SONew achieves 9\% and 80\% better relative training loss for ViT and GNN respectively. It is worth noting that Shampoo statistics required $\sim 7\times \#params$ for ViT whereas tridiag-SONew uses only $2\times \#params$ for its statistics. We also test another recently proposed memory efficient second order optimizer, rfdSON~\cite{rfdson}, but found its performance suboptimal to the best performing first order method. Owing to SONew's scalability, we train a Large Language Model (LLM) with 1 billion parameters and compare it with AdaFactor~\cite{adafactor}, a popularly used first order optimizer to train LLMs~\cite{palm}. SONew achieves the same performance as AdaFactor using $26\%$ fewer steps, resulting in a $1.35\times$ faster training. When using the same number of steps, SONew obtained a $1.7\%$ relative better train loss. In terms of implementation, SONew is just a few lines of code (\Cref{eq:tridiag_update}) without complex engineering challenges, rendering it even more useful and practical.
\looseness=-1


\section{Background}
\label{sec:background}
The inner product between matrices is defined as $\inangle{A,B}=\Tr(A^TB)$, where $\Tr(.)$ denotes the matrix trace. The Frobenius norm of a matrix $A$ is $\fnorm{A} = \sqrt{\Tr(A^TA)}$, while its spectral norm is $\lnorm{A} = \max_{x} \lnorm{Ax}/ \lnorm{x}$.
We use $I_n\in \R^{n\times n}$ to denote an identity matrix. We use $S_n$, $S_n^{++}$ to denote the set of symmetric, and positive definite matrices respectively. The generalized norm of a vector $x\in \R^n$ with respect to matrix $A \in S_n^{++}$ is defined as $\norm{x}_A = \sqrt{x^TAx}$. We use $\pdet{A}$ to denote the determinant of matrix $A$, and $\diag(A)$ to denote the diagonal matrix with $\diag(A)_{ii}=A_{ii}$. We use $\G$ and $\Gtil$ to denote a graph and its sub-graph with a vertex set $[n]=\inbrace{1,\ldots,n}$. Let $E_\G$  denote set of edges in graph $\G$, and $\neigG(i)$ denote neighbours of vertex $i$ in graph $\G$. 
A sparse symmetric matrix $A\in \R^{n\times n}$ follows a sparsity structure graph $\G$ if $A_{i,j}=0$ $\forall (i,j)\notin E_{\G}$, . Note that set of all such matrices form  a linear subspace. 
We use $S_n(\G)^{++}$ to denote the set of positive definite matrices with sparsity structure given by graph~$\G$, i.e, if $X\in S_n(\G)^{++}$, then $X_{ij}=0$ $\forall (i,j)\notin E(\G)$. $S_n(\G)^{++}$ is an open convex set. 
Given an index set $I = \inbrace{i_1,i_2,..,i_n}$, we use $A_{II}$ to denote the corresponding principal sub-matrix of $A$. 

\subsection{LogDet matrix divergence}
\label{subsec:logdet}
Let $\phi: S_n^{++} \rightarrow \R $ be a strictly convex, differentiable function. The Bregman matrix divergence between $X, Y \in S_n^{++}$ is defined as~\cite{bregman1967relaxation,kulis2009low}: $\bregdiv(X,Y) = \phi(X) - \phi(Y) - \Tr(\nabla \phi(Y)^T(X-Y))$.
Since $\phi$ is convex,  $\bregdiv (X,Y) \geq 0$ for all $X,Y \in S_n^{++}$.  For example if $\phi(X) = \fnorm{X}^2$, the corresponding Bregman divergence $\bregdiv(X,Y) = \fnorm{X-Y}^2$ is the squared Frobenius distance.
In this paper, we extensively use the convex function $\phi(X) = -\log \pdet{X}$; the corresponding divergence measure $\logdetdiv(X,Y)$ is called the \textit{LogDet matrix divergence}:
\begin{align}
\label{eqn:logdetdiv}
\logdetdiv (X,Y) =  -\log \pdet{XY^{-1}} + \Tr(XY^{-1}) -n.
\end{align}
The LogDet divergence is scale invariant to invertible matrices $A$, i.e. $\logdetdiv(A^TXA, A^TYA) = \logdetdiv(X,Y)$.  LogDet divergence can be written in terms of eigendecompositions of $X = V\Sigma V^T$ and $Y=U\Theta U^T$~\cite{kulis2009low}:
\begin{align}
\label{eqn:eiglogdet}
\logdetdiv(X\!,\!Y) \!=\! \sum_i \!\sum_j \! (v_i^T u_j)^2 \!(\sigma_i/\theta_j \!-\! \log(\sigma_i/\theta_j) \!-\!1).
\end{align}
These two properties are later used in \Cref{sec:sonew} to highlight the significance of LogDet divergence in our algorithm.




\section{SONew: Sparsified Online Newton Method}
\label{sec:sonew}
We now present our proposed algorithm SONew. 

\subsection{Regret minimization via LogDet divergence}
\label{subsec:regminlogdet}

We set up our problem under the online convex optimization framework~(OCO)~\cite{shalev2012online,hazan2016introduction}, where at each round the learner makes a prediction $w_t$ and receives a convex loss $f_t(w_t)$ and gradient $g_t = \nabla f_t(w_t)$ as feedback. The goal of the learner is to reduce regret $R_T$ by predicting $w_t$ so that a low aggregate loss $\sum_{t=1}^T f_t(w_t)$ is achieved compared to the best possible, $w^{*} =\argmin_w \sum_{t=1}^T f_t(w)$. Formally, regret is given by
\begin{align*}
R_T(w_1,\ldots,w_T) = \sum_{t=1}^T f_t(w_t) -   \sum_{t=1}^T f_t(w^*).
\end{align*}
Using \cite{cesa2001generalization}, $R$ regret in online setting yields $R/T$ convergence rate in the stochastic setting.
To upper bound this regret, we proceed as in~\cite{hazan2016introduction} by analyzing the error in the iterates  for the update $w_{t+1} \coloneqq  w_t-\eta X_tg_t$, where $X_t\in \R^{n\times n}$. Then $\norm{w_{t+1} - w^*}_{X_t^{-1}}^2 = \norm{w_t-\eta X_tg_t -w^*}_{X_t^{-1}}^2
=\norm{w_t-w^*}_{X_t^{-1}}^2 +\eta^2 g_t^TX_tg_t -2\eta (w_t-w^*)^T g_t$. The convexity of $f_t$ implies that $f_t(w_t)- f_t(w^*) \leq (w_t-w^*)^T g_t$ leading to $f_t(w_t) - f_t(w^*) \leq   
\frac{1}{2\eta}(\norm{w_t-w^*}_{X_t^{-1}}^2-\norm{w_{t+1} - w^*}_{X_t^{-1}}^2 +\eta^2 g_t^TX_tg_t)$. Summing over all $t\in[T]$ and rearranging reveals the following upper bound on overall regret:
\begin{align}
\label{eqn:regdecomp}
R_T &\leq 
\frac{1}{2\eta}\norm{w_1 - w^*}_{X_1^{-1}}^2   + \frac{\eta}{2} \sum_{t=1}^T g_t^TX_tg_t +
   \frac{1}{2\eta}\sum_{t=2}^{T} (w_{t} - w^*)^T(X_{t}^{-1}-X_{t-1}^{-1})(w_{t} - w^*).
\end{align}

Since $w^*$ is unknown, finding $X_t$ which minimizes~(\ref{eqn:regdecomp}) is infeasible. So to minimize regret, we attempt to minimize the second term in~(\ref{eqn:regdecomp}) while regularizing $X_{t}^{-1}$ to be ``close'' to $X_{t-1}^{-1}$. The nearness measure we choose is the LogDet matrix divergence, thus leading to the following objective
\begin{align}
\label{eqn:subprob}
X_t &= \argmin_{X \in S_n^{++}} g_t^T X g_t,  \text{\small such that }\logdetdiv{(X,X_{t-1})} \le c_t, 
\end{align}
where $\logdetdiv$ is as in~(\ref{eqn:logdetdiv}). Why do we use the LogDet divergence?
From~(\ref{eqn:eiglogdet}), due to the term $\lambda_i/\theta_j$, $\logdetdiv(X,X_{t-1})$ prioritizes matching the smaller eigenvalues of $X_{t-1}$ with those of $X$, i.e., matching the larger eigenvalues of $X_{t-1}^{-1}$ and $X^{-1}$.  As a consequence, LogDet divergence regularizes $X$ by matching up its large eigenvalues with those of $X_{t-1}$.  For example if smallest and largest eigenvalue of $X_{t-1}$ are $\theta_n$ and $\theta_1$, then for an eigenvalue $\sigma$ of $X$, when   $\sigma > \theta_n,\ \theta_1$, the penalty from (\ref{eqn:eiglogdet}) for $\theta_n$  is higher than for $\theta_1$, $(\sigma/\theta_n - \log(\sigma/\theta_n) -1)> (\sigma/\theta_1 - \log(\sigma/\theta_1) -1)$.
This intuition leads us to formulate~(\ref{eqn:subprob}) as our objective. We recall that there is precedence of using the LogDet divergence in the optimization literature; indeed the celebrated BFGS algorithm~\cite{broyden1967quasi,fletcher1970new,goldfarb1970family,shanno1970conditioning} can be shown to be the unique solution obtained when the LogDet divergence between successive preconditioners, subject to a secant constraint, is minimized~(as shown in the 4-page paper by~\cite{fletcher1991new}).

The optimization problem in (\ref{eqn:subprob}) is convex in $X$ since the LogDet divergence is convex in its first argument. The  Lagrangian $\mathcal{L}(X,\lambda_t)=g_t^TXg_t +\lambda_t (\logdetdiv(X,X_{t-1})-c_t)= \Tr(Xg_tg_t^T)+\lambda_t(-\log\pdet{XX_{t-1}^{-1}}+\Tr(XX_{t-1}^{-1})-n)) - \lambda_t c_t$. Setting  $\nabla \mathcal{L}(X,\lambda_t)=0$, and using the fact that $\nabla \log \pdet{X} = X^{-1}$ we get the following update rule:
\vspace{-0.2em}
\begin{equation}
\label{eqn:update}
X_t^{-1} = X_{t-1}^{-1}+  g_t g_t^T/\lambda_t.
\end{equation}
\vspace{-0.05em}
We \emph{emphasize} that the update rule (\ref{eqn:update}) arises naturally from our novel use of LogDet divergence to minimize the regret. Moreover, \Cref{eqn:update} can be seen as a general update rule applicable to numerous existing optimizers. For example, setting $c_t=0$ (equivalently $\lambda_t=\infty$) $\forall t\in [n]$ in  (\ref{eqn:subprob}) results in no change to the preconditioner in any round. In this case, with $X_0=I_n$, we get online gradient descent~\cite{zinkevich2003online}. On the other hand, setting $\lambda_t = 1$ gives the update rule of the online Newton method~\cite{hazan2007logarithmic}.  Our update rule differs from (full-matrix) Adagrad~\cite{duchi2011adaptive} which has $X_t^{-2} = X_{t-1}^{-2}+  g_t g_t^T$.
\looseness=-1

Maintaining and updating $X_t$ as in~(\ref{eqn:update}) is possible by using Sherman-Morrison formula but requires $\bigO(n^2)$ storage and time complexity. This becomes impractical when $n$ is in the order of millions which is typically the case in DNNs.

\subsection{Sparsifying the Preconditioner} \label{subsec:sparsesonew}
To minimize the memory needed for maintaining and updating $X_t$ using (\ref{eqn:update}), we adopt the strategy of sparsifying the preconditioner. For existing optimizers such as (full-matrix) Adagrad or the Online Newton method, it is unclear how to sparsify a given preconditioner. Specifically, there is no intuitive approach to assessing the quality of a sparse preconditioner compared to a full-matrix preconditioner. However, since our update rule (\ref{eqn:update}) originates from using LogDet divergence in the regret bound analysis, it gives us a natural metric to measure the quality of a sparse preconditioner. Let's consider the following problem: find a sparse positive definite $X$ with $\norm{X}_0\leq \alpha n$, $\alpha > 1$, such that the objective  $\logdetdiv(X,(X_{t-1}^{-1}+  g_t g_t^T/\lambda_t)^{-1})$ is minimized. Essentially, this problem imposes a sparsity constraint while requiring the sparse preconditioner to remain close to the full-matrix preconditioner in terms of LogDet divergence.

Due to the $L_0$-norm constraint, this is a non-convex problem, which makes it difficult to solve exactly. Since $L_1$-norm serves as a convex relaxation for the $L_0$ norm, we could use it instead, resulting in the following optimization problem also known as graphical lasso estimator~\cite{friedman2008sparse}:
\begin{align}
\min_{X\in S_n^{++}} \logdetdiv{(X,(X_{t-1}^{-1}+  g_t g_t^T/\lambda_t)^{-1})} +\gamma \norm{X}_1. \nonumber
\end{align}
However, the time taken to solve the above problem, even with the current best methods~\cite{bollhofer2019large,hsieh2013big,fattahi2019graphical,zhang2018large}, can still be too large (as these methods take several minutes for a matrix of size million), making it impractical to embed in DNN training.


In this paper, we take a different direction where we use fixed sparsity pattern constraints, specified by a fixed undirected graph $\G$. To sparsify the solution in (\ref{eqn:update}), we formulate the subproblem
\vspace{-0.1em}
\begin{align}
\label{eqn:sonewsubprob}
X_t = \argmin_{X\in S_n(\G)^{++}} \logdetdiv{(X,(X_{t-1}^{-1}+  g_t g_t^T/\lambda_t)^{-1})},
\end{align}
\vspace{-0.1em}
where $S_n(\G)^{++}$ denotes the set of positive definite matrices with the fixed sparsity pattern corresponding to the adjacency matrix of graph~$\G$.  
Note that  both steps (\ref{eqn:subprob}) and (\ref{eqn:sonewsubprob}) use the same LogDet measure.
\looseness=-1

Owing to the structure of LogDet divergence, (\ref{eqn:sonewsubprob}) can be surprisingly solved in $\bigO(n)$ and easily parallelizable, for certain sparsity structures $\G$. Algorithm~\ref{alg:sonew} and \ref{alg:explsoln} presents an instantiation of the proposed SONew method, which solves (\ref{eqn:sonewsubprob}) using $\bigO(n)$ time and memory for banded matrices with band size $b$. In particular a tridiagonal matrix, corresponding to a chain graph, is a banded matrix with bandsize 1. 
\looseness=-1

\begin{minipage}{0.48\textwidth}
\begin{algorithm}[H]
\small
\hspace*{\algorithmicindent} \textbf{Inputs}:  $\lambda_t\coloneqq$  coefficient in the update (\ref{eqn:sparseupdate}), \\
\hspace*{\algorithmicindent} $\G \coloneqq$ sparsity graph (banded/tridiagonal),\\
\hspace*{\algorithmicindent} $\epsilon \coloneqq$ damping parameter,\\
 \hspace*{\algorithmicindent} $T \coloneqq$ total number of iterations/mini-batches, \\
\hspace*{\algorithmicindent} $\eta_t \coloneqq$ step size/learning rate.\\
\hspace*{\algorithmicindent} \textbf{Output}:  $w_{T+1}$
\begin{algorithmic}[1]
\State \label{line:h0} $H_0 = \epsilon I_d$, $w_1 = 0$
\For {$t \in \{1,\ldots, T\}$}
\State compute $g_t = \nabla f_t(w_t)$
\State \label{line:ht}\parbox[t]{\dimexpr\linewidth-\algorithmicindent}{$H_t \coloneqq H_{t-1} + P_{\G}(g_t g_t^T/\lambda_t) \in S_n(\G)$  with $P_{\G}$ as in (\ref{eqn:pgdef}). \Comment{ $\bigO(n)$ time \& memory} \strut}
\State  \label{line:xt}\parbox[t]{\dimexpr\linewidth-\algorithmicindent}{Get $L,D=\Call{Sparsified\_Inverse }{H_t,\G}$, where $X_t = LDL^T$ solves~(\ref{eqn:gensubprob}).\strut}
\State Compute descent direction $ u_t =LDL^Tg_t$,
\State $w_{t+1} = w_t - \eta_t u_t$
\EndFor
\State \Return $w_{T+1}$
\end{algorithmic}
\caption{Sparsified Online Newton (SONew) Algorithm}
\label{alg:sonew}
\end{algorithm}
\end{minipage}
\hfill
\begin{minipage}{0.48\textwidth}
\begin{algorithm}[H]
\small
\hspace*{\algorithmicindent} \textbf{Inputs}:$H \in S_n(\G)$, is as (\ref{eqn:sparseupdate}).\\
\hspace*{\algorithmicindent} $\G\coloneqq$ the banded graph of band size $b\ll n$\\
\hspace*{\algorithmicindent} \parbox[t]{\dimexpr0.95\linewidth-\algorithmicindent}{\textbf{Outputs}: lower triangular banded $L \in \R^{n\times n}$ and diagonal matrix $D\in \R^{n\times n}$ \strut}
\begin{algorithmic}[1]
\Function{Sparsified\_Inverse}{$H$, $\G$}
\State $L\coloneqq 0$, $D\coloneqq0$
\State $L_{jj}\coloneqq 1$, $\forall j\in[n]$ 
\For {$j \in \{1,\ldots, n\}$}\Comment{parallelizable}
\State \parbox[t]{\dimexpr0.88\linewidth-\algorithmicindent}{Let $H_{jI_j}$ and $H_{I_jI_j}$ be defined as in \Cref{sec:background}, where $I_j = \inbrace{j+1,\ldots,j+b}\cap [n]$, \strut}
\State \parbox[t]{\dimexpr0.91\linewidth-\algorithmicindent}{Solve for $L_{I_j j}$ in the linear system $H_{I_jI_j}L_{I_j j} = -H_{I_j j}$  \Comment{$\bigO(b^3)$ time}. \strut}
\State  \label{algline:schurcompl} $D_{jj} \coloneqq 1/(H_{jj}+ H_{I_jj}^TL_{I_jj})$
\EndFor
\State \Return $L,D$
\EndFunction
\end{algorithmic}
\caption{\textsc{Sparsified\_Inverse}($H,\G$) in $\bigO(n)$ flops}
\label{alg:explsoln}
\end{algorithm}
\end{minipage}

\textbf{Maintaining $H_t\in S_n(\G)$ in line \ref{line:ht}}. Solving the subproblem in (\ref{eqn:sonewsubprob}) naively is impractical since $X_{t-1}^{-1}$ is a dense matrix. However, the structure of the LogDet divergence comes to the rescue; the optimization problem in (\ref{eqn:sonewsubprob}) can be expanded as follows:
\begin{align}
    \label{eqn:sonewsubprob2prev}
    \argmin_{X\in S_n(\G)^{++}} \!\!-\!\log\pdet{X} \!+\! \Tr(X(X_{t-1}^{-1}+ g_t g_t^T/\lambda_t)).
\end{align}

Let us define the projection onto $S_n(\G)$, $P_{\G}:\R^{n\times n} \to \R^{n\times n}$ as:
\begin{align}
\label{eqn:pgdef}
    P_{\G}(M)_{ij} = \begin{cases}M_{ij}&\text{ if }(i,j)\in E_{\G},\\
    0&\text{otherwise.}\end{cases}
\end{align} 

Note that the $\Tr(.)$ term in (\ref{eqn:sonewsubprob2prev}) is dependent only on the non-zero elements of $X\in S_n(\G)^{++}$, since $\Tr(AB)= \langle A,B\rangle$, for symmetric matrices $A$ and $B$. Hence, (\ref{eqn:sonewsubprob2prev}) can be written as 
\begin{align}
\label{eqn:sonewsubprob2}
\argmin_{X\in S_n(\G)^{++}} -\log \pdet{X} +\langle X,P_{\G}(X_{t-1}^{-1}+ g_t g_t^T/\lambda_t)\rangle,
\end{align}
Computing the entire matrix $X_{t-1}^{-1}$ can be avoided by analyzing the optimality condition of (\ref{eqn:sonewsubprob2}). Let $g(X)=-\log \pdet{X} +\langle X,P_{\G}(X_{t-1}^{-1}+ g_t g_t^T/\lambda_t)\rangle$ denote the objective function in (\ref{eqn:sonewsubprob2}), then the optimality condition of (\ref{eqn:sonewsubprob2}) is $P_{\G}(\nabla g(X))=P_{\G}(\nabla (-\log \det(X)+\langle X,P_{\G}(X_{t-1}^{-1}+g_tg_t^T/\lambda_t))\rangle=0$, since gradients with respective nonzero entries of $X$ should be zero, $\frac{\partial g(X)}{\partial X_{i,j}}=(\nabla_X(g(X)))_{i,j}=0$, $\forall (i,j)\in E_{\G}$. Using $\nabla(-\log \det(X)) = -X^{-1}$, $\nabla_X(\langle X, Y \rangle) = Y$,  and setting $X=X_t$ gives:
\begin{align}
    P_{\G}(X_t^{-1}) &- P_{\G}(X_{t-1}^{-1} + g_t g_t^T/\lambda_t) 
    = 0, \nonumber\\
    \label{eqn:sparseupdate}
 H_t &=  H_{t-1} +P_{\G}(g_tg_t^T/\lambda_t),\quad\text{where } H_t = P_{\G}(X_{t}^{-1})
\end{align}
Thus we only need to maintain $H_t = P_{\G}(X_{t}^{-1})$.  This matrix is updated as $H_t = H_{t-1}+P_{\G}(g_tg_t^T/\lambda_t)$. Since $H_t \in S_n(\G)$, the update can be done in $\bigO(\inabs{E_{\G}})$ memory and time, while computing the matrix $X_t^{-1}$ would have cost $\bigO(n^2)$. In SONew (Algorithm~\ref{alg:sonew}), this key observation is used to maintain $H_t$ in line~\ref{line:ht}.


\textbf{Computing $X_t$ in line \ref{line:xt}}.
 Now that $H_t$ is known at every round $t$, we can replace $P_{\G}(X_{t-1}^{-1}+g_tg_t^T/\lambda_t)$ in~(\ref{eqn:sonewsubprob2}) with $H_t$ as:
\begin{align}
\label{eqn:gensubprob}
X_t = \argmin_{X\in S_n(\G)^{++}} -\log \pdet{X}+ \Tr(XH_t).
\end{align}

For an arbitrary graph $\G$, solving (\ref{eqn:gensubprob}) might be difficult. Theorems \ref{thm:explsoln1} and \ref{thm:explsoln2} show \textit{embarrassingly parallelizable} explicit solutions to the subproblem (\ref{eqn:gensubprob}) for tridiagonal and banded sparsity patterns. 
\looseness=-1
\begin{theorem} [Explicit solution of (\ref{eqn:gensubprob}) for tridiagonal structures/chain graph]
\label{thm:explsoln1}
Let the sparsity structure $\G$ be a chain with edges $E_{\G} = \inbrace{(i,j):|i-j|\leq 1, 1\leq i,j\leq n}$. Also, let $H\in S_n(\G)$ be such that any submatrix of $H$ corresponding to a complete subgraph of $\G$ is positive definite, then the solution of~(\ref{eqn:gensubprob}) is given by $\hat{X} = LDL^T$, where the unit lower bidiagonal matrix $L$ and diagonal matrix $D$ have the following non-zero entries:
\begin{align}
\label{eqn:explsolntridiag}
L_{jj} &= 1,\ L_{j+1 j } = - \frac{H_{j+1 j}}{H_{j+1j+1}}, \;\; D_{jj}^{-1} = H_{jj}-\frac{H_{j+1 j}^2}{H_{j+1 j+1}},\quad j \leq n-1\text{ \& }
D_{nn}^{-1}=H_{nn}
\end{align}
\end{theorem}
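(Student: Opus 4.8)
## Proof Proposal

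The plan is to verify directly that the proposed $\hat X = LDL^T$ satisfies the first-order optimality condition for the convex problem~(\ref{eqn:gensubprob}), and then argue it lies in the feasible set $S_n(\G)^{++}$, which together with strict convexity of $-\log\pdet{X}$ on the positive definite cone gives uniqueness. Recall from the derivation leading to~(\ref{eqn:sparseupdate}) that the optimality condition of~(\ref{eqn:gensubprob}) is exactly $P_{\G}(X^{-1}) = P_{\G}(H) = H$, since $H \in S_n(\G)$. So the key identity to establish is: if $L, D$ are as defined in~(\ref{eqn:explsolntridiag}), then $\hat X = LDL^T$ is invertible and $P_{\G}(\hat X^{-1}) = H$, i.e. $(\hat X^{-1})_{ij} = H_{ij}$ for all $|i-j|\le 1$.

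First I would compute $\hat X^{-1}$ explicitly. Since $L$ is unit lower bidiagonal it is invertible, and $\hat X^{-1} = L^{-T} D^{-1} L^{-1}$. The cleanest route is to observe that $L^{-1}$ is itself lower triangular with $(L^{-1})_{jj} = 1$, and then form the symmetric tridiagonal-looking product $L^{-T}D^{-1}L^{-1}$; because $D^{-1}$ is diagonal and $L^{-1}$ is lower bidiagonal-generated, the product $L^{-T} D^{-1} L^{-1}$ has a clean banded structure only in a suitable sense — actually it is easier to go the other way. Note $H = $ (the matrix we want $\hat X^{-1}$ to agree with on the band), so I would instead verify $P_{\G}(\hat X^{-1}) = H$ by checking $\hat X \cdot (\text{candidate for } \hat X^{-1}) = I$ where the candidate is built from $H$. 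Concretely: let $M$ be the tridiagonal matrix agreeing with $H$ on the chain and zero elsewhere (so $M = H$ as a matrix, since $H \in S_n(\G)$). The claim $P_{\G}(\hat X^{-1}) = H$ is equivalent to saying $\hat X^{-1}$ and $H$ have the same entries on the three central diagonals. The slick proof: $LDL^T$ is precisely the (block) $LDL^T$ / Cholesky-type factorization one gets by Gaussian elimination on $H$ restricted to consecutive $2\times 2$ principal blocks. I would verify that $L^T H L$, or rather that the entries~(\ref{eqn:explsolntridiag}) are exactly the Schur-complement updates: $L_{j+1,j} = -H_{j+1,j}/H_{j+1,j+1}$ is the multiplier that eliminates the $(j,j+1)$ coupling, and $D_{jj}^{-1} = H_{jj} - H_{j+1,j}^2/H_{j+1,j+1}$ is the resulting Schur complement pivot. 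This is the standard fact that for a chain graph the sparse inverse $P_{\G}(\hat X^{-1})$ can be assembled from the $2\times 2$ principal submatrices $H_{\{j,j+1\}\{j,j+1\}}$ — each of which is positive definite by hypothesis.

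The cleanest rigorous version: for each $j \le n-1$ let $A_j = H_{\{j,j+1\}\{j,j+1\}} = \begin{bmatrix} H_{jj} & H_{j,j+1} \\ H_{j+1,j} & H_{j+1,j+1}\end{bmatrix}$, which is positive definite by assumption, hence invertible with $\det A_j = H_{jj}H_{j+1,j+1} - H_{j+1,j}^2 > 0$. One checks by direct $2\times 2$ computation that the $LDL^T$ factorization of $A_j$ has exactly the $(j,j)$, $(j+1,j)$ entries claimed in~(\ref{eqn:explsolntridiag}) for the $(j,j)$ pivot; the point is that the chain structure decouples consecutive blocks so that the global $LDL^T$ is consistent. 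Then I would confirm $D_{jj} > 0$ for all $j$: for $j \le n-1$, $D_{jj}^{-1} = \det A_j / H_{j+1,j+1} > 0$ since both numerator and denominator are positive ($H_{j+1,j+1}>0$ as the $1\times 1$ principal submatrix of a complete subgraph, namely the single vertex $j+1$, is positive definite), and $D_{nn}^{-1} = H_{nn} > 0$ likewise. Hence $D \succ 0$, so $\hat X = LDL^T \succ 0$, and since $L$ is bidiagonal, $\hat X = LDL^T$ is tridiagonal, i.e. $\hat X \in S_n(\G)^{++}$. Finally, computing $(\hat X^{-1})_{ij}$ for $|i-j|\le 1$ via the block structure recovers $H_{ij}$, establishing the optimality condition; strict convexity of the objective then yields that $\hat X$ is the unique minimizer.

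The main obstacle I anticipate is the bookkeeping in verifying $P_{\G}(\hat X^{-1}) = H$ globally rather than block-by-block: one must show that the local $2\times 2$ eliminations genuinely patch together into a single global $LDL^T$ whose inverse matches $H$ on the entire band, with no cross-terms leaking between non-adjacent blocks. This is where the chain (tridiagonal) structure is essential — for a general graph the analogous eliminations introduce fill-in and the clean formula fails — so the argument should emphasize that eliminating variable $j$ only affects the block indexed by $\{j,j+1\}$ and does not touch earlier or later pivots. A clean way to sidestep heavy computation is to induct on $n$: peel off the last variable, use the Schur complement to reduce to the $(n-1)\times(n-1)$ chain, and invoke the inductive hypothesis; the base case $n=1$ is $D_{11}^{-1} = H_{11}$, trivially $\hat X^{-1} = H$.
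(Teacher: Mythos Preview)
Your proposal is correct: the logic of verifying the first-order condition $P_{\G}(\hat X^{-1})=H$, checking $\hat X\in S_n(\G)^{++}$ via positivity of the Schur complements $D_{jj}^{-1}$, and invoking strict convexity of $-\log\det$ for uniqueness is sound, and both your block-by-block argument and your proposed induction on $n$ can be made to work.

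The paper takes a different and somewhat slicker route. It proves \cref{thm:explsoln1} as the $b=1$ specialization of \cref{thm:explsoln2}, and the proof of the latter avoids direct computation of $\hat X^{-1}$ entirely. Writing $Z=\hat X^{-1}=L^{-T}D^{-1}L^{-1}$, the key identity is $ZL = L^{-T}D^{-1}$; applying this to $e_j$ and restricting to the clique $J_j=\{j,\dots,j+b\}$ gives, using that $L^{-T}$ is unit upper triangular and that column $j$ of $L$ is supported in $J_j$, the local linear system
\[
Z_{J_jJ_j}\begin{bmatrix}1\\ L_{I_jj}\end{bmatrix}=\begin{bmatrix}1/D_{jj}\\ 0\end{bmatrix}.
\]
Since $J_j$ is a clique in $\G$, the optimality condition $P_\G(Z)=H$ forces $Z_{J_jJ_j}=H_{J_jJ_j}$, and solving the system immediately yields~(\ref{eqn:explsolntridiag}). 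This trick directly addresses the obstacle you flagged: the passage from local $2\times2$ blocks to the global statement is handled automatically by the column structure of $ZL$ and $L^{-T}D^{-1}$, with no need to track fill-in or patch blocks together. Your induction on $n$ achieves the same effect by a different mechanism (Schur-complement reduction) and is arguably more elementary, but it is specific to the chain ordering; the paper's argument generalizes uniformly to arbitrary band size $b$.
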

Computing this explicit solution involves conducting paralellizable operations on $2\times 2$ principle submatrices (highlighted in red) of the tridiagonal matrix $H$ to find the $\hat{X}$ as shown in the following $3\times 3$ example:
{
    \footnotesize\begin{align}
        H = &\left(\begin{array}{*3{c}}
\tikzmark{left}H_{11} &H_{12} & 0\\
        H_{21}& H_{22}\tikzmark{right} &H_{23}  \\
        0& H_{32} & H_{33} 
        \end{array} \right)
        \DrawBox[thick] = \left( \begin{array}{*3{c}}
\tilde{H}_{11} &\tilde{H}_{12} & 0\\
        \tilde{H}_{21}& \tilde{H}_{22} &\tilde{H}_{23}  \\
        0& \tilde{H}_{32} & \tilde{H}_{33} 
        \end{array} \right)+\left( \begin{array}{*3{c}}
g_1^2 &g_1g_2 & 0\\
        g_1g_2& g_2^2 &g_2g_3  \\
        0& g_2g_3 & g_3^2 
        \end{array} \right) \label{eq:tridiag_update}\\
     \rightarrow   \hat{X} &= \left(\begin{array}{*3{c}}
             \tikzmark{left}1& 0 & 0  \\
             -\frac{H_{21}}{H_{22}}\tikzmark{right}& 1 & 0\\
             0 & -\frac{H_{32}}{H_{33}} &1
        \end{array}\right)
        \DrawBoxNarrow[thick]
         \left(\begin{array}{*3{c}}
        \tikzmark{left}H_{11}-\frac{H_{21}^2}{H_{22}}\tikzmark{right}&0 &0 \\
            0 & H_{22}-\frac{H_{23}^2}{H_{33}} & 0\\
             0&0 &H_{33}
        \end{array}\right)
        \DrawBoxWide[thick] 
        \left(\begin{array}{*3{c}}
             \tikzmark{left}1& -\frac{H_{21}}{H_{22}}\tikzmark{right} & 0  \\
             0& 1 & -\frac{H_{32}}{H_{33}}\\
             0 & 0 &1
        \end{array}\right)
        \DrawBoxWide[thick]  \nonumber
\end{align}}
Conducting these operations take $\bigO(n)$ time and memory complexity, and similarly the descent direction can be found sequentially by $X_tg_t = L(D(L^Tg_t))$, which can take $\bigO(n)$ time complexity, due to unit lower bidiagonal structure of $L$, furthermore, these operations can be easily parallelized. We also generalize the explicit solution to banded sparsity structures with band size $b$.
\begin{theorem} [Explicit solution of (\ref{eqn:gensubprob}) for banded structures]
\label{thm:explsoln2}
Let the sparsity pattern $\G$ be a banded matrix of band size b, i.e. $E_{\G} = \inbrace{(i,j):|i-j|\leq b, 1 \leq i,j\leq n}$. For every vertex $j$, let $I_j = \inbrace{ j+1,\ldots ,j+b}$.
Then $X_t = LDL^T$ is the solution of (\ref{eqn:gensubprob}) with nonzero entries of $L$ and $D$ defined as follows :
\begin{align}
\label{eqn:explsolnbanded}
L_{jj} = 1,\ L_{I_j j } = -H_{I_jI_j}^{-1} H_{I_j j},\quad D_{jj}^{-1} = (H_{jj}-H_{I_j j}^T H_{I_j I_j}^{-1}H_{I_j j}),\ 1\leq j\leq n.
\end{align}
where, $H\in S_n(\G)$ any submatrix of $H$ corresponding to a complete subgraph of $\G$ is positive definite.
\end{theorem}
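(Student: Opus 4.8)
The plan is to reduce problem~(\ref{eqn:gensubprob}) to a single identity about $\hat X := LDL^T$, namely $P_\G(\hat X^{-1}) = H$, and then establish that identity by induction on $n$. For the reduction, note that $g(X) := -\log\pdet{X} + \Tr(XH)$ is strictly convex on the open convex set $S_n(\G)^{++}$, with $\nabla g(X) = -X^{-1} + H$. Since $S_n(\G)$ is a linear subspace and $P_\G$ is precisely the orthogonal projection onto it with respect to $\inangle{\cdot,\cdot}$, a point $\hat X\in S_n(\G)^{++}$ is a global minimizer of~(\ref{eqn:gensubprob}) iff $P_\G(\nabla g(\hat X)) = 0$, i.e.\ iff $P_\G(\hat X^{-1}) = P_\G(H) = H$ (using $H\in S_n(\G)$). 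A critical point of a convex function is automatically a global minimizer, and strict convexity gives uniqueness, so there is no need to argue existence separately: it suffices to exhibit such an $\hat X$ and check it is well defined and lies in $S_n(\G)^{++}$.

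Well-definedness and membership are immediate from the hypothesis. For each $j$ the index set $\inbrace{j}\cup I_j$ has all pairwise differences at most $b$, hence is a clique of $\G$, so by assumption the corresponding principal submatrix of $H$ is positive definite; consequently $H_{I_jI_j}\succ 0$ (so $L_{I_jj}$ is well defined) and $D_{jj}^{-1}$, which equals the Schur complement of $H_{I_jI_j}$ in that clique submatrix, is strictly positive. Hence $D\succ 0$, and since $L$ is unit lower triangular, $\hat X = LDL^T\succ 0$; and as $L$ has lower bandwidth $b$, so does $LDL^T$, so $\hat X\in S_n(\G)^{++}$.

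It remains to prove $P_\G(\hat X^{-1}) = H$, which I would do by induction on $n$, peeling off vertex $1$. The base case $n=1$ is trivial ($\hat X = (1/H_{11})$, $\hat X^{-1}=(H_{11})$). For the inductive step, apply the theorem's construction to the trailing $(n-1)\times(n-1)$ principal submatrix $B$ of $H$; it is banded with bandwidth $b$ and inherits the clique-positive-definiteness hypothesis (a clique of the reduced banded graph $\G'$ is also a clique of $\G$). A direct check shows the construction on $B$ yields matrices $\tilde L,\tilde D$ for which $L = \inparen{\begin{smallmatrix}1 & 0\\ \ell & \tilde L\end{smallmatrix}}$ and $D = \diag(D_{11},\tilde D)$, where $\ell$ --- the first column of $L$ below the diagonal --- is supported on $I_1$ with $\ell_{I_1} = -H_{I_1I_1}^{-1}H_{I_1 1}$. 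Writing $\hat X' := \tilde L\tilde D\tilde L^T \succ 0$, the factorization regroups into the congruence $\hat X = \inparen{\begin{smallmatrix}1 & 0\\ \ell & I\end{smallmatrix}}\diag(D_{11},\hat X')\inparen{\begin{smallmatrix}1 & \ell^T\\ 0 & I\end{smallmatrix}}$, from which
\begin{align*}
\hat X^{-1} &= \begin{pmatrix} D_{11}^{-1} + \ell^T(\hat X')^{-1}\ell & -\ell^T(\hat X')^{-1}\\ -(\hat X')^{-1}\ell & (\hat X')^{-1}\end{pmatrix}.
\end{align*}
By the inductive hypothesis $P_{\G'}((\hat X')^{-1}) = B$, so the bottom-right block already gives the required band entries indexed within $\inbrace{2,\dots,n}$. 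Since every pair drawn from $I_1$ is an edge of $\G'$ (entries of $I_1$ differ by at most $b-1$), the hypothesis also lets us replace $((\hat X')^{-1})_{I_1I_1}$ by $H_{I_1I_1}$ and $((\hat X')^{-1})_{I_1,i}$ by $H_{I_1,i}$; substituting $\ell_{I_1} = -H_{I_1I_1}^{-1}H_{I_1 1}$ then collapses $D_{11}^{-1} + \ell^T(\hat X')^{-1}\ell$ to $H_{11}$ and $\big(-(\hat X')^{-1}\ell\big)_{I_1}$ to $H_{I_1 1}$. Entries $(i,1)$ with $i>b+1$ lie outside $E_\G$, so $P_\G$ kills them and nothing need be verified there. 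This closes the induction and proves the theorem; Theorem~\ref{thm:explsoln1} is the special case $b=1$.

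The main obstacle is conceptual rather than computational: one must recognize that $(L,D)$ is \emph{not} a Cholesky factor of $H$ but the $LDL^T$ factorization of the banded matrix $\hat X$ whose inverse is the maximum-determinant positive-definite completion of $H$, and then choose the elimination direction --- peeling vertex $1$ --- so that (a) the reduced subproblem is again banded with bandwidth $b$, and (b) the only entries of $(\hat X')^{-1}$ needed to evaluate $\ell^T(\hat X')^{-1}\ell$ and $(\hat X')^{-1}\ell$ are exactly the in-band entries supplied by the inductive hypothesis. Once the induction is framed this way, the remaining block algebra and cancellations are routine.
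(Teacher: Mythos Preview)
Your proof is correct. The reduction to the first–order optimality condition $P_\G(\hat X^{-1})=H$ is the same as the paper's, but from there the two arguments diverge. The paper works directly with $Z=\hat X^{-1}=L^{-T}D^{-1}L^{-1}$: it reads off the $J_j$-block of the vector identity $ZLe_j=L^{-T}D^{-1}e_j$ and, since $Le_j$ is supported on $J_j$ and $L^{-T}e_j$ has its $J_j$-part equal to $e_1$, obtains $Z_{J_jJ_j}\begin{pmatrix}1\\ L_{I_jj}\end{pmatrix}=\begin{pmatrix}D_{jj}^{-1}\\0\end{pmatrix}$; imposing $Z_{J_jJ_j}=H_{J_jJ_j}$ then \emph{solves} for $L_{I_jj}$ and $D_{jj}$. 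This is essentially a derivation of the formula from the optimality condition. You instead \emph{verify} the formula by induction on $n$, peeling off vertex $1$ and using the block expression for $\hat X^{-1}$ together with the key observation that $I_1$ is a clique of the reduced banded graph, so the inductive hypothesis supplies exactly the entries of $(\hat X')^{-1}$ that the computation of $\ell^T(\hat X')^{-1}\ell$ and $((\hat X')^{-1}\ell)_{I_1}$ needs. Your route is self-contained---it simultaneously establishes existence and correctness of the minimizer without invoking chordal-completion results---while the paper's column-wise derivation makes the origin of the formulas~\eqref{eqn:explsolnbanded} more transparent and leans on uniqueness of the convex minimizer to close the argument. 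One small wording note: in your inductive step you only ever need $((\hat X')^{-1})_{I_1I_1}$, not $((\hat X')^{-1})_{I_1,i}$, since $\ell$ is supported on $I_1$; the extra clause is harmless but superfluous.
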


Note that \cref{thm:explsoln1} is a special case of \cref{thm:explsoln2} when $b$ is set to $1$, and the proof for \cref{thm:explsoln2} is given in \Cref{subsec:logdetproofs}. Computing the above solution requires solving $n$ linear systems of size $b$ (which is small) as shown in \Cref{alg:explsoln}, and takes $\bigO((n-b+1)b^3)$ flops. Since $b\ll n$, the number of flops is $\bigO(n)$.

\vspace{-0.3em}
\subsection{Regret bound analysis of SONew}
\label{subsec:regbound}
\vspace{-0.3em}
The following theorem establishes optimal regret guarantee~\cite{hazan2016introduction} for SONew in the online convex optimization framework  mentioned in \Cref{subsec:regminlogdet}. 
\begin{theorem}
\label{thm:regbound}
When $\G=$ tridiagonal/chain graph as defined in \cref{thm:explsoln1}, then setting $\epsilon = \hat{\epsilon}G_{\infty}\sqrt{T}$, $\lambda_t=G_{\infty}\sqrt{t}$ and $\eta_t = \frac{D_2}{\hat{\epsilon}\sqrt{n}}$ in  \Cref{alg:sonew}, where $\lnorm{w_t-w^*}\leq D_2$, $\infnorm{g_t}\leq G_{\infty}$ incurs a  regret $R_T = \bigO(\sqrt{n}G_{\infty}D_2\sqrt{T})$.

\end{theorem}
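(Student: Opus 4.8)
The plan is to start from the regret decomposition~(\ref{eqn:regdecomp}) and bound each of the three terms separately, using the specific structure of the tridiagonal SONew update. The first term is $\frac{1}{2\eta}\norm{w_1-w^*}_{X_1^{-1}}^2$; since $w_1=0$, $\lnorm{w_1-w^*}\le D_2$, and $X_1^{-1}=H_1 = \epsilon I_n + P_{\G}(g_1 g_1^T/\lambda_1)$, I would bound $\norm{w_1-w^*}_{X_1^{-1}}^2 \le \lambda_{\max}(X_1^{-1}) D_2^2$ and check $\lambda_{\max}(H_1) = \bigO(\epsilon + G_\infty^2/\lambda_1) = \bigO(\hat\epsilon G_\infty \sqrt T)$, which with $\eta = \eta_t = D_2/(\hat\epsilon\sqrt n)$ contributes $\bigO(\sqrt n\, G_\infty D_2 \sqrt T)$. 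The main work is the third (telescoping-like) term: here the key observation is that $X_t^{-1} = H_t$ is \emph{not} simply $H_{t-1} + $ rank-one, but $H_t = H_{t-1} + P_{\G}(g_t g_t^T/\lambda_t)$, an increasing sequence in $S_n(\G)$. Because $X_t$ solves~(\ref{eqn:gensubprob}), one has $P_{\G}(X_t^{-1}) = H_t \succeq H_{t-1} = P_{\G}(X_{t-1}^{-1})$; I would argue that monotonicity of the projected inverses, together with the fact that on the subspace $S_n(\G)$ the relevant quadratic form $(w_t-w^*)^T(X_t^{-1}-X_{t-1}^{-1})(w_t-w^*)$ only sees $P_\G$ of the matrices when... — actually the cleaner route is to note $(w_t-w^*)^T(X_t^{-1} - X_{t-1}^{-1})(w_t-w^*) \le D_2^2 \,\lnorm{X_t^{-1}-X_{t-1}^{-1}}$ and then bound $\sum_{t=2}^T \lnorm{H_t - H_{t-1}} = \sum_t \lnorm{P_\G(g_tg_t^T/\lambda_t)} \le \sum_t \fnorm{P_\G(g_tg_t^T)}/\lambda_t$. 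Since $\G$ is tridiagonal, $\fnorm{P_\G(g_tg_t^T)}^2 = \sum_i g_{t,i}^4 + 2\sum_i g_{t,i}^2 g_{t,i+1}^2 \le 3(\sum_i g_{t,i}^2)^2 \le 3 n^2 G_\infty^4$ — wait, that gives $\sqrt 3\, n\, G_\infty^2$ per step; using $\lambda_t = G_\infty\sqrt t$ and $\sum_{t=1}^T 1/\sqrt t = \bigO(\sqrt T)$, this term is $\bigO(n\, G_\infty D_2^2 \cdot \frac{1}{\eta}\sqrt T/G_\infty)$, which with $\eta = D_2/(\hat\epsilon\sqrt n)$ is $\bigO(n^{3/2} \hat\epsilon\, G_\infty D_2 \sqrt T)$ — too big by a factor $n$. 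So I expect the actual argument must exploit that the quadratic form is \emph{not} bounded by the spectral norm times $D_2^2$ naively, but rather telescopes: write $\sum_{t=2}^T (w_t-w^*)^T(H_t - H_{t-1})(w_t-w^*)$ and use that $H_t - H_{t-1} = P_\G(g_tg_t^T/\lambda_t) \preceq$ something controlled, together with $\langle w_t - w^*, P_\G(g_tg_t^T)(w_t-w^*)\rangle = \langle P_\G(g_tg_t^T), (w_t-w^*)(w_t-w^*)^T\rangle$, and bound this using $\infnorm{g_t}$, $\lnorm{w_t-w^*}$ via Cauchy–Schwarz on the $\bigO(n)$ nonzero entries. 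I would need the per-entry bound $|(w_t-w^*)_i (w_t-w^*)_j| \cdot |g_{t,i} g_{t,j}| \le G_\infty^2 \cdot \frac{1}{2}((w_t-w^*)_i^2 + (w_t-w^*)_j^2)$ and sum over the $\le 3n$ edges to get $\bigO(n G_\infty^2 D_2^2/\lambda_t)$ again — so the honest bound seems to require either (a) choosing $\lambda_t$ to grow like $\sqrt{n}$-times-something, or (b) a different grouping. Given the theorem's stated rate $\bigO(\sqrt n G_\infty D_2\sqrt T)$, I suspect the resolution is that $\lnorm{X_t}$ (not $X_t^{-1}$) is what enters the middle term, and the middle term $\frac{\eta}{2}\sum_t g_t^T X_t g_t \le \frac{\eta}{2}\sum_t \lnorm{X_t}\lnorm{g_t}^2$ is where the $n$ appears ($\lnorm{g_t}^2 \le n G_\infty^2$), while $\lnorm{X_t} = \lambda_{\max}((\text{tridiag PD})^{-1})$ is controlled by $1/\lambda_{\min}(H_t) \le 1/\epsilon = 1/(\hat\epsilon G_\infty\sqrt T)$; thus this term is $\bigO(\eta \cdot T \cdot \frac{1}{\hat\epsilon G_\infty \sqrt T} \cdot n G_\infty^2) = \bigO(\eta n G_\infty \sqrt T/\hat\epsilon)$, and with $\eta = D_2/(\hat\epsilon \sqrt n)$ this is $\bigO(\sqrt n\, G_\infty D_2 \sqrt T/\hat\epsilon^2)$ — matching. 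So the real plan is: (i) second term via $\lnorm{X_t} \le 1/\epsilon$; (ii) first term via $\lnorm{X_1^{-1}} = \bigO(\epsilon)$; (iii) third term via $\sum_{t\ge2} (w_t-w^*)^T(X_t^{-1}-X_{t-1}^{-1})(w_t-w^*)$, and here use that $X_t^{-1}-X_{t-1}^{-1} = P_\G(g_tg_t^T/\lambda_t) \succeq 0$ so the sum telescopes against $X_T^{-1}$ bounds.

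Concretely, for step (iii) I would use: $\sum_{t=2}^T(w_t-w^*)^T(X_t^{-1} - X_{t-1}^{-1})(w_t-w^*) \le D_2^2 \sum_{t=2}^T \lnorm{X_t^{-1}-X_{t-1}^{-1}}$, but bound $\lnorm{P_\G(g_tg_t^T/\lambda_t)}$ not by the Frobenius norm but by the \emph{row-sum (Gershgorin)} bound: each row of $P_\G(g_tg_t^T)$ has at most 3 nonzeros, each at most $G_\infty^2$, so $\lnorm{P_\G(g_tg_t^T)} \le 3G_\infty^2$, giving $\sum_t \lnorm{X_t^{-1}-X_{t-1}^{-1}} \le 3G_\infty \sum_{t=2}^T t^{-1/2} = \bigO(G_\infty\sqrt T)$, hence this term is $\bigO(\frac{1}{\eta} D_2^2 G_\infty \sqrt T) = \bigO(\frac{\hat\epsilon\sqrt n}{D_2} D_2^2 G_\infty\sqrt T) = \bigO(\hat\epsilon \sqrt n\, D_2 G_\infty\sqrt T)$, which is of the claimed order (absorbing $\hat\epsilon$ into the constant). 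I would then collect the three bounds, note all are $\bigO(\sqrt n\, G_\infty D_2 \sqrt T)$ up to constants depending on $\hat\epsilon$, and conclude.

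The step I expect to be the main obstacle is establishing that $\lnorm{X_t} = \lnorm{X_t^{-1\,-1}}$ is genuinely bounded by $1/\epsilon$ (or a constant multiple), i.e., that $\lambda_{\min}(X_t^{-1}) \ge \epsilon$ uniformly in $t$. This requires controlling the smallest eigenvalue of the LogDet-optimal sparse $X_t = LDL^T$ from Theorem~\ref{thm:explsoln1}, given only that $H_t = P_\G(X_t^{-1}) \succeq \epsilon I_n$ — but note that $P_\G(X_t^{-1})$ being PD with $\lambda_{\min} \ge \epsilon$ does \emph{not} immediately imply $X_t^{-1} \succeq \epsilon I$, since projecting onto a sparsity pattern can decrease eigenvalues. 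The resolution should come from the explicit formula: $X_t^{-1} = L^{-T} D^{-1} L^{-1}$ with $D_{jj}^{-1} = H_{jj} - H_{j+1,j}^2/H_{j+1,j+1}$ being a Schur complement of the PD $2\times2$ block $\binom{H_{jj}\; H_{j,j+1}}{H_{j+1,j}\; H_{j+1,j+1}}$, so $D_{jj}^{-1} > 0$, and one needs a quantitative lower bound $D_{jj}^{-1} \ge$ (something like $\epsilon$) plus control of $\lnorm{L^{-1}}$. Getting a clean uniform lower bound on $\lambda_{\min}(X_t^{-1})$ — equivalently an upper bound on $\lnorm{X_t}$ — from the chain-graph structure, valid for all $t$, is the technical crux; I would handle it by an induction or by a direct argument that the LogDet-minimizing $X_t$ satisfies $X_t \preceq H_t^{-1}$ in some order sense, or by invoking scale-invariance of LogDet divergence to reduce to the case $H_t \succeq \epsilon I$ normalized. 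Everything else is bookkeeping with $\sum_{t=1}^T t^{-1/2} \le 2\sqrt T$ and the given parameter choices.
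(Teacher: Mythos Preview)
Your proposal contains a genuine error that breaks both your step~(iii) and your step~(i)/(ii) bounds: you repeatedly identify $X_t^{-1}$ with $H_t$, writing for instance ``$X_t^{-1}-X_{t-1}^{-1} = P_{\G}(g_tg_t^T/\lambda_t)$''. This is false. The preconditioner $X_t$ lies in $S_n(\G)^{++}$, so $X_t$ is tridiagonal, which means $X_t^{-1}$ is a \emph{dense} matrix. The optimality condition~(\ref{eqn:sparseupdate}) only gives $P_{\G}(X_t^{-1})=H_t$; the off-band entries of $X_t^{-1}$ are nonzero and must be controlled. Your Gershgorin argument (``each row of $P_{\G}(g_tg_t^T)$ has at most 3 nonzeros'') therefore bounds the wrong matrix, and the resulting $\bigO(G_\infty\sqrt T)$ estimate for $\sum_t\lnorm{X_t^{-1}-X_{t-1}^{-1}}$ is unjustified.

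The paper fills exactly this gap with \cref{lem:invxhat}, which gives a closed form for \emph{every} entry of $X_t^{-1}$: for $|i-j|=k>1$ one has $(X_t^{-1})_{ij}=\prod_{l=0}^{k-1} H_{i+l,i+l+1}\big/\prod_{l=1}^{k-1}H_{i+l,i+l}$, so after diagonal normalization the entries decay like $\beta^{k}$ with $\beta=\max_i |H_{i,i+1}|/\sqrt{H_{ii}H_{i+1,i+1}}<1$. A separate lemma (\cref{lem:beta}) shows $1/(1-\beta)\le 8/\hat\epsilon^2$ under the stated choice of $\epsilon$, and a rank-one perturbation lemma (\cref{lem:tY_Y}) then bounds each $|(X_{t+1}^{-1}-X_t^{-1})_{i,i+k}|$ by $\bigO(\beta^{k-1}/\lambda_t)$; summing the geometric series in $k$ gives $\infnorm{X_{t+1}^{-1}-X_t^{-1}}=\bigO(1/((1-\beta)^2\sqrt t))$ and hence the $T_2$ bound. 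The same $\beta$ mechanism handles your ``main obstacle'' for $T_3$: the paper does \emph{not} prove $\lnorm{X_t}\le 1/\epsilon$ directly, but instead normalizes $\hat X_t=\diag(H_t)^{1/2}X_t\diag(H_t)^{1/2}$, bounds $\infnorm{\hat X_t}\le 2/(1-\beta)$ from the explicit tridiagonal entries, and combines with $\diag(H_t)\succeq\epsilon I_n$. Your high-level three-term plan and parameter bookkeeping are correct; what is missing is precisely the structural lemma for the dense $X_t^{-1}$ and the $\beta$-control that makes the geometric series converge.
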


The proof sketch involves deriving an explicit expression for entries of $X_t^{-1}$ in \cref{lem:invxhat}, to upper bound the term $(w_{t} - w^*)^T(X_{t}^{-1}-X_{t-1}^{-1})(w_{t} - w^*)$ in regret upper bound (\ref{eqn:regdecomp}). Upper bounding $\frac{\eta}{2} \sum_{t=1}^T g_t^TX_tg_t$ involves using the Loewner order $X_t \preceq \lnorm{X_t}I_n \preceq \infnorm{X_t} I_n$.
A detailed proof sketch and proof is given in \Cref{subsec:proofsketch}. We note here that though the regret bound presented here is for convex losses, there are connections 
to non-convex convergence guarantees by using OCO (online convex optimization) learners, presented in \Cref{subsubsec:nonconvex}.
 While our main focus is on deep neural network training, which is typically non-convex, we also conducted convex experiments in \Cref{table:convex-exps}.
\looseness=-1
\begin{SCtable}
\small
\label{table:complexity}
\caption{\small 
Consider preconditioning a $d_1\times d_2$ parameter matrix. Time complexity of tridiag and banded SONew inverse scale linearly with number of parameters, but, Shampoo is cubic in the dimensions of the matrix. Memory used to store second-moments of gradients by tridiag-SONew can be significantly lower than Shampoo, for e.g. if $d_1=4d_2$, then Shampoo takes  $>2$ times more memory. }
\resizebox{0.95\width}{!}{\begin{tabular}{|p{2cm}|p{1.7cm}|p{1.7cm}|}
\toprule 
\textbf{Optimizer}     & \textbf{Time complexity}      & \textbf{Memory complexity} \\
\midrule
Adam          & $\bigO(d_1d_2)$      & $d_1d_2$          \\
rfdSON(m)          & $\bigO(m^2d_1d_2)$      & $md_1d_2$          \\
Shampoo       & $\bigO(d_1^3+d_2^3)$ & $(d_1^2+d_2^2)$   \\
tridiag-SONew & $\bigO(d_1d_2)$       & $2d_1d_2$         \\
band-4-SONew  & $\bigO(d_1d_2)$       & $5d_1d_2$  \\
\bottomrule
\end{tabular}}
\end{SCtable}

\vspace{-0.5em}
\subsection{Numerical Stability of SONew}
\vspace{-0.3em}
In \Cref{thm:explsoln1} and \Cref{thm:explsoln2}, as mentioned, any submatrix of $H_t$ corresponding to a complete subgraph of $\G$ should be positive definite, however, in practice,  due to finite precision, each entry of $H$ is inherently perturbed with an error proportional to $\bigO(\epsilon_{mach})$, where $\epsilon_{mach}$ is machine epsilon~\cite{higham2002accuracy}. We notice in practice that the subtraction operation in $D_{jj}^{-1}=S_{jj}=H_{jj}-H_{j+1 j}^2/H_{j+1 j+1}$ (line \ref{algline:schurcompl} Algorithm \ref{alg:explsoln}), which has a condition number $\kappa_{sub}=|H_{jj}|/|S_{jj}|$,  can be high as $S_{jj}$ can be arbitrarily low due to near singular submatrices $\begin{bmatrix}H_{ii}& H_{ii+1}\\ H_{i+1i}& H_{i+1i+1}\end{bmatrix}$. Thus small perturbation in $H$ can lead to high perturbations in the preconditioner $\hat{X}$. We formalize this notion by deriving an end-to-end componentwise condition number~(pg. 135, problem 7.11 in~\cite{higham2002accuracy})  of \textsc{Sparsified\_Inverse} in \Cref{thm:condnum},\Cref{subsec:numstab}. 
To reduce this condition number upper bound and be robust to perturbations in $H_t$ caused by finite precision, for a tridiagonal graph $\G$, we can remove edges $(j,j+1)$ which correspond to low $S_{jj}<\gamma$, where $\gamma\geq 0$ denotes a tolerance parameter. We show in \Cref{thm:numstab},\Cref{subsec:numstab} that this reduces the condition number upperbound of $\textsc{Sparsified\_Inverse}$. Furthermore, we generalize this to banded sparsity pattern in \Cref{alg:num_stable},\Cref{subsec:numstab}.

\vspace{-0.5em}
\section{Related Work}
\vspace{-0.5em}
\label{sec:relwork}
Online Newton method is a second order method in online convex optimization framework with properties such as scale invariance~\cite{luo2016efficient} and logarithmic regrets in exp-concave and strongly convex functions~\cite{hazan2007logarithmic,hazan2016introduction}. However, it has a time complexity of $\bigO(n^2)$, making it infeasible for large $n$.
However, introduction of LogDet divergence measure in SONew allows us to set different sparsity graphs as $\G$ such as banded graph with band-size $b$, for which our preconditioning process is more computationally efficient with a time complexity of $\bigO(b^3(n-b+1))$ compared to online-newton method $\bigO(n^2)$.

Shampoo~\cite{gupta2018shampoo,anil2020scalable} approximates full gradient
statistics matrix using Kronecker factored preconditioners to reduce the memory and time complexity from $\bigO(n^2)$ to  $\bigO(d_1^2+d_2^2)$ and $\bigO(d_1^3 +d_2^3)$ respectively. Here, $n=d_1 d_2$ denotes number of parameters for a linear layer of dimensions~$d_1\times d_2$. The time complexity of matrix inversion takes a heavy toll in Shampoo’s compute time even with the Kronecker product assumption on the preconditioner, whereas, our method has a time complexity of $\bigO(b^3 d_1 d_2)$ quadratic in dimensions of the linear layer (note that $b=1$ for tridiagonal structure). 

KFAC \cite{martens2015optimizing}, similar to Shampoo, uses Kronecker factored preconditioning, but to approximate the Fisher-information matrix. FishLeg \cite{fishleg} instead approximates the inverse Fisher matrix directly by expressing it in terms of the solution to an optimisation problem. Both these methods have memory and time complexity similar to Shampoo. In this work, we compare with Shampoo among the class of Kronecker factored optimizers due to its widespread testing and adoption within the community~\cite{shi2023distributed}. We also point the readers to Eva \cite{eva}, a concurrent work aimed at devising memory efficient optimizer by maintaining rank one matrix approximation to the Kronecker factors of KFAC matrices. For completeness, we include comparison with KFAC, FishLeg, and Eva on Autoencoder benchmark.

There is prior work~\cite{luo2016efficient,luo2019robust} in reducing the complexity - $\bigO(n^2)$ flops  of Online Newton Step (ONS) to $\bigO(n)$ flops using sketching. These ONS variants maintain a low rank approximation of $H_t$ (as in \Cref{alg:sonew}) and updating it with a new gradient $g_t$ at every iteration requires conducting SVD~\cite{luo2019robust}/orthonormalization~\cite{luo2016efficient} of a tall and thin matrix in $\R^{n\times r}$, where $r$ denotes the rank of approximation of $H_t$. 
In \Cref{sec:experiments}, we conduct large scale experiments and compare SONew against rfdSON~\cite{rfdson} as it's more stable than Oja-SON~\cite{luo2016efficient}.

LogDet problem in~\eqref{eqn:gensubprob} is closely related to the Maximum Determinant Matrix Completion~(MDMC)~\cite{andersen2013logarithmic,vandenberghe2015chordal}. The MDMC problem is the dual of LogDet problem (\ref{eqn:gensubprob}), and has explicit solutions for chordal graphs~\cite{andersen2013logarithmic}. Thus the explicit solutions in (\ref{eqn:explsolnbanded}) are the same as the ones proved in~\cite{andersen2013logarithmic}. Also, we noticed that the tridiagonal explicit solution has been used previously in KFAC~\cite{martens2015optimizing} in the context of a gaussian graphical model interpretation of gradients, specifically, KFAC used a block-tridiagonal preconditioner to incorporate correlation within consecutive layers.
\looseness=-1

\vspace{-0.3em}
\section{Experimental Results} \label{sec:experiments}
\vspace{-0.2em}
We describe our experiments on standard Autoencoder benchmark~\cite{Schmidhuber_2015} trained on MNIST dataset~\cite{deng2012mnist}, Vision Transformer~\cite{vit} on Imagenet training, GraphNetwork~\cite{graphnetwork, jraph2020github} on OGBG-molpcba dataset~\cite{ogb}, and a Large Language Model~\cite{primer}.
For all second order optimizers, we use grafting~\cite{agarwal2022learning}, a technique used to transfer step size between optimization algorithms. Specifically, given an update $v_1$ of Optimizer-1 and $v_2$ of Optimizer-2, grafting allows to use the direction suggested by Optimizer-2 with step size suggested by Optimizer-1. The final update is given by $\frac{\|v_1\|}{\|v_2\|}\cdot v_2$. Grafting has been shown to take advantage of a tuned optimizer step size and improve performance. For SONew and rfdSON, we use Adam grafting
- using Adam optimizer step size $\|v_1\|$ with SONew/rfdSON direction $\nicefrac{v_2}{\|v_2\|}$.
For Shampoo, we use its default RMSProp grafting. We couldn't find rfdSON official implementation, so we use our own implementation using which we reproduced the numbers on convex losses (\Cref{subsec:ablation}) reported in their paper~\cite{rfdson}.
\looseness=-1

\begin{table*}[t]
\caption{\small \textbf{float32 experiments on Autoencoder benchmark.}  We observe that diag-SONew performs the best among all first order methods while taking similar time. tridiag and band-4 SONew perform significantly better than first order methods while requiring similar linear space and time. Shampoo performs best but takes $\bigO(d_1^3+d_2^3)$ time for computing preconditioner of a linear layer of size $d_1\times d_2$, whereas our methods take $\bigO(d_1d_2)$ time, as mentioned in \Cref{table:complexity}. rfdSON takes similar space as SONew but performs considerably worse.
\looseness=-1
}
\vspace{-0.3em}
\centering
\resizebox{0.62\width}{!}{
\begin{tabular}{|c|c|c|c|c|c|c|c|c|c|}
        \hline
        \toprule
        \textbf{Optimizer} & \multicolumn{4}{c|}{\textbf{First Order Methods}} & \multicolumn{5}{c|}{\textbf{Second Order Methods}} \\
        \midrule
         & \textbf{Adagrad} & \textbf{RMSProp} & \textbf{Adam} & \textbf{diag-SONew} & \textbf{Shampoo(20)} & \textbf{rfdSON(1)} & \textbf{rfdSON(4)} & \textbf{tridiag-SONew} & \textbf{band-4-SONew} \\
        \midrule
        Train CE loss & \hfil54.393 & \hfil53.330 & \hfil53.591 & \hfil53.025 & \hfil50.702 &\hfill 56.21 &\hfill 55.55  & \hfil51.723 & \hfil51.357 \\ 
        \midrule
        Time(s) & \hfil62  & \hfil62 & \hfil62 & \hfil63 & \hfil371 &\hfill 85 &\hfill 300 & \hfil70 & \hfil260 \\ 
        \bottomrule
        \end{tabular}
}
    \label{table:auto_fp32}
\end{table*}


\vspace{-0.1em}
\subsection{Autoencoder benchmark}
\vspace{-0.1em}
\noindent \textbf{Setup:} We use three sparsity patterns for SONew - a) diagonal sparsity, resulting in a diagonal preconditioner similar to adaptive first order methods like Adam and Adagrad; b) tridiagonal sparsity, corresponding to a chain graph; and c) banded sparsity, represented by "band-$k$" in tables and figures for band size of $k$.
We compare SONew against widely used first order methods including SGD~\cite{sgd}), SGD with Momentum~\cite{momentum}, Nesterov~\cite{nesterov}, Adagrad~\cite{adagrad}, Adam~\cite{kingma2014adam}, and Rmsprop~\cite{rmsprop}. We also compare with rfdSON~\cite{rfdson}, a recently proposed memory efficient second order optimizer and with Shampoo~\cite{gupta2018shampoo}, a state of the art second-order optimizer used in practice, albeit with considerable memory and time requirements. Because of space constraint, we report only the best performing first order methods while include the entire set in the appendix.
As previously mentioned, rfdSON maintains a low rank approximation of the Online Newton method's statistics matrix $\sum_i g_ig_i^T$. We observed rfdSON with adam grafting always performed better than without grafting, hence report the corresponding numbers. We evaluate rfdSON with rank $m$ approximation, denoted as rfdSON($m$), which requires $(m+1)*\#params$ space when using grafting. For a fair comparison with tridiag-SONew and band-4-SONew, we test rfdSON with $m=1$ and $m=4$, respectively. For shampoo, computing preconditioner at every step could be infeasible, instead it is computed every $t$ steps - referred to as Shampoo($t$). \Cref{table:complexity} compares time and memory complexities of rfdSON, Shampoo, tridiagSONew, band-4-SONew. Note that $d_1^2+d_2^2\geq 2d_1d_2$ $\forall d_1,d_2$, thus memory used by tridiag-SONew is never more than Shampoo. 
We use a $2.72$M parameters Autoencoder and each experiment is performed using one V100 GPU having $16$ GB memory. Further setup details are given in \Cref{subsec:ablation}.
\looseness=-1

\begin{figure}[t!]
    \minipage{0.47\textwidth}
    	\centering
     \caption*{(a) VIT validation error}
    \hfill\includegraphics[width=1.0\textwidth]{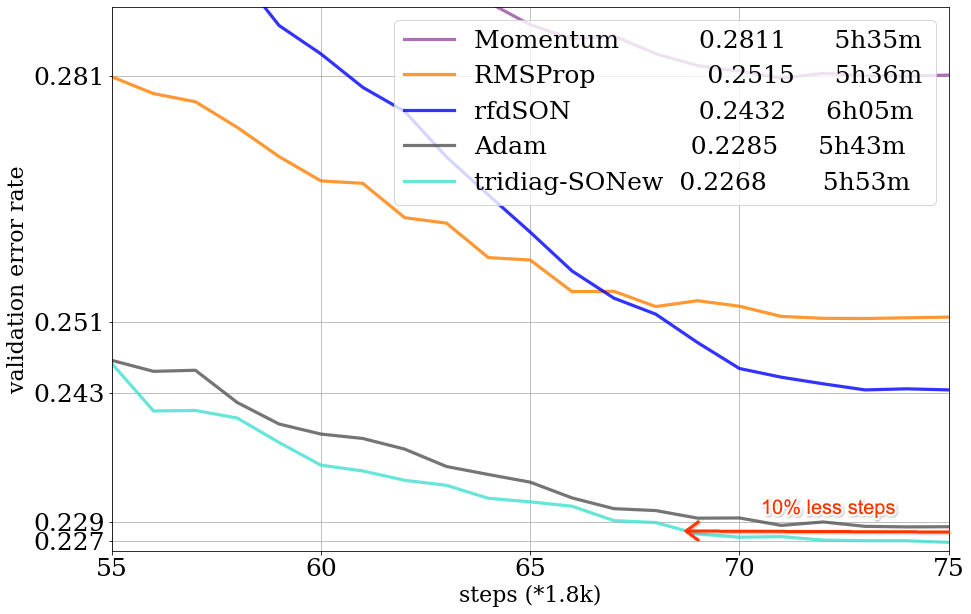}\hspace*{\fill}
    	\label{fig:0a}
     \endminipage \hfill
    \minipage{0.46\textwidth}
     \centering
     \caption*{(b) GraphNetwork validation avg. precision}
     \hfill \includegraphics[width=1.0\textwidth]{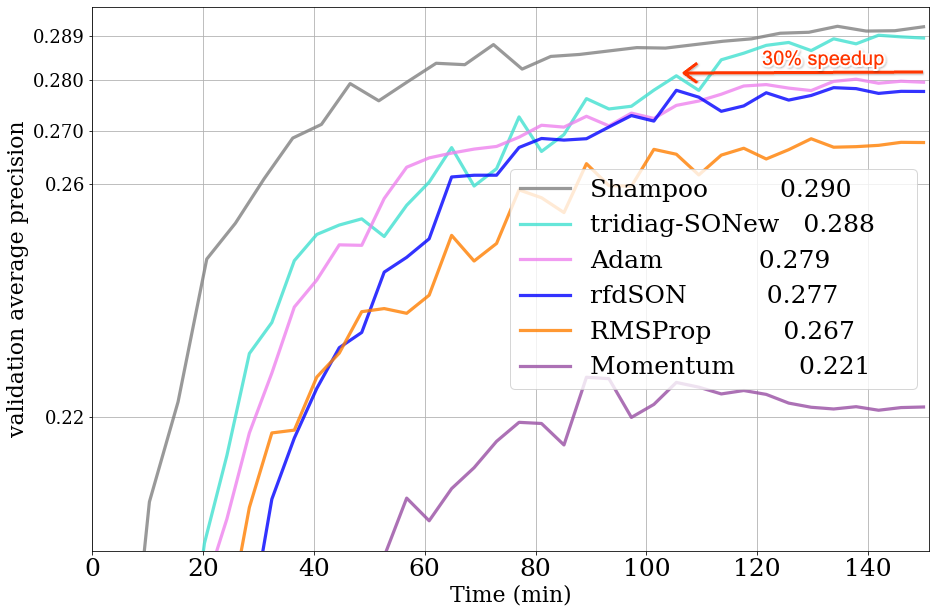}\hspace*{\fill}
    	\label{fig:0a}
    \endminipage \hfill
    \caption{\small (a) Best validation error runs for tridiag-SONew vs Momentum, RMSProp, Adam, rfdSON, and Shampoo on (a) VIT benchmark (b) GraphNetwork benchmark. We notice that tridiag-SONew achieves same performance as Adam, the next best baseline using similar space and time, with 10\% and 30\% less steps/time in ViT and GraphNetwork respectively. While using the same number of steps, SONew achieves relatively $0.7\%$ and $\sim 3.4\%$  better validation error respectively.
    Shampoo doesn't fit in the 16GB memory of TPU v2 for ViT benchmark, hence we couldn't perform hyperparameter tuning on it. On GraphNetwork, compared to Shampoo, tridiag-SONew gives similar performance while being far more memory efficient (Refer \Cref{sec:opt-mem} for more details).}
    \label{fig:large-benchmarks}
    \vspace{-1em}
\end{figure}


\noindent \textbf{Results:} In \Cref{table:auto_fp32} we observe that among first order methods, diag-SONew performs the best while taking same amount of time. Increasing the number of edges in the sparsity graph to tridiag or banded sparsity with band size $4$ enhances the performance further. 
Tridiag-SONew runs $5\times$ faster than Shampoo at a marginal cost to the loss - even when Shampoo updates preconditioner once every 20 steps. Using same space, rfdSON performs considerably worse than SONew.
To test the numerical stability and robustness of SONew, we reduce the precision to bfloat16 and conduct similar experiments in \Cref{sec:add_exp} (\Cref{table:auto_bf16_complete} ). We notice that SONew undergoes the least degradation in performance compared to all other optimizers. We refer the reader to \Cref{sec:add_exp} for a thorough comparison and study of bfloat16 experiments.
In \Cref{fig:autoencoder_plots} we plot the loss curves of all the baselines and SONew for float32 experiments. Moreover, in \Cref{sec:act_ablations} \Cref{table:batch_vs_loss} we provide ablation on performance of SONew with varying batch sizes. 
\looseness=-1

\textbf{Conparison with other baselines:}
We further compare SONew with KFAC~\cite{martens2015optimizing}, FishLeg~\cite{fishleg}, and Eva~\cite{eva} for completeness. Since these methods lack a JAX implementation, we adopted the authors' official Pytorch implementations. When we attempted to integrate their code with our Autoencoder benchmark, the results were unsatisfactory; for instance, FishLeg recorded a loss of approximately $\sim 60.0$. This was notably unexpected as it underperformed Adam, a benchmark that the authors themselves compared against. Given these results and to minimize modifications to the official code, we decided to test our optimizer, SONew, directly on their provided autoencoder architecture. We present the results in \Cref{sec:add_exp} and notice that SONew outperforms these baselines as well by a large margin.

\begin{figure}[t!]%
    \minipage{0.5\textwidth}
    \centering
        \centering
     \hfill \includegraphics[width=0.98\textwidth]{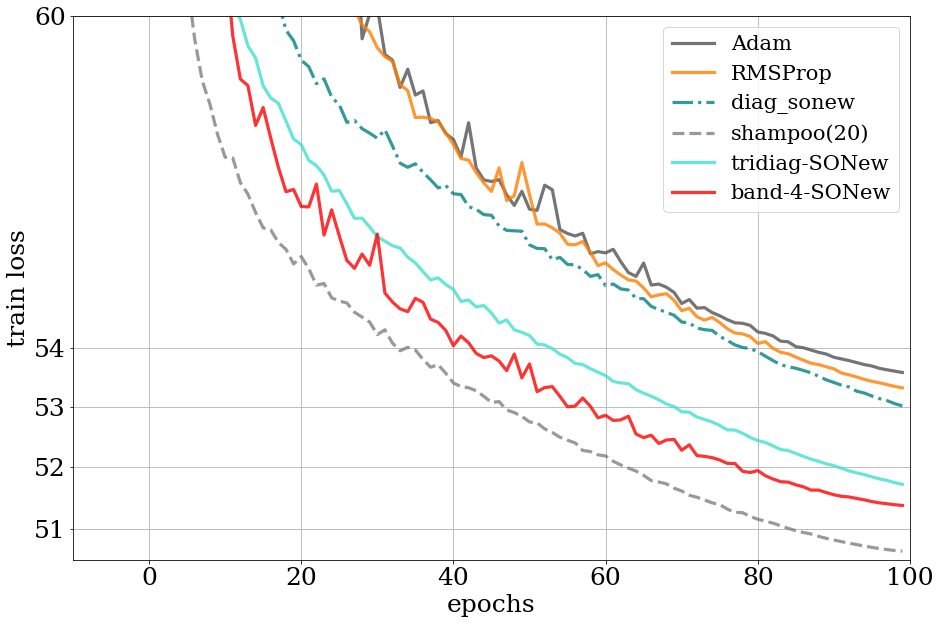} \hspace*{\fill}
    \caption{\small (a) Comparison of SONew with first-order optimizers, rfdSON, and Shampoo on Autoencoder benchmark in float32 training. We observe that SONew performs better than all first order methods, and second order methods using the same memory.}
    \label{fig:autoencoder_plots}
    \endminipage\hfill
    \minipage{0.47\textwidth}
    \centering
     \hfill \includegraphics[width=1\textwidth]{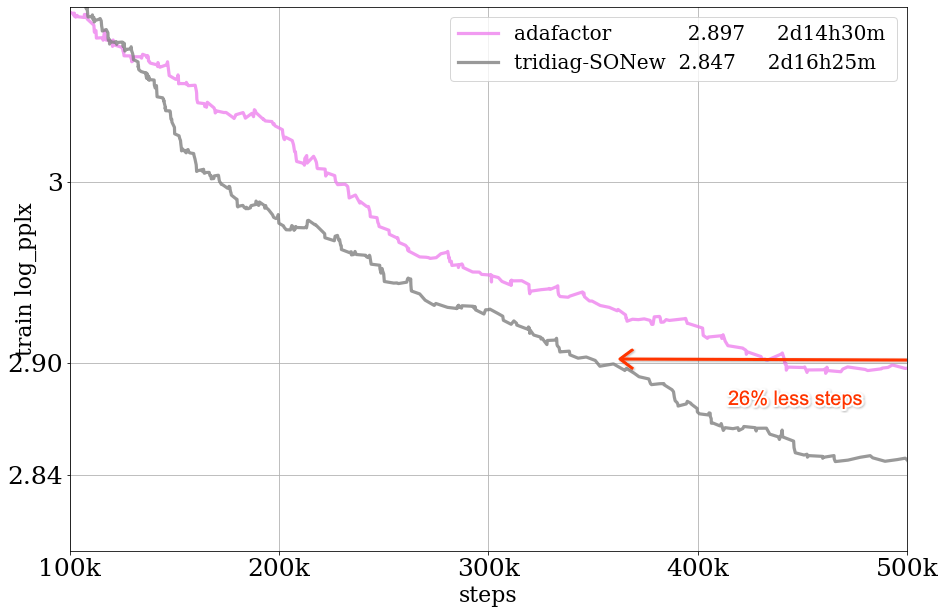} \hspace*{\fill}
     \vspace{-0.5em}
    \caption{\small 
    Comparison of SONew and Adafactor on LLM training. SONew takes 26\% less steps to reach the same performance as AdaFactor. Using the same number of steps, it achieves $\sim 1.7\%$ relative better log perplexity.}
    \label{fig:llm}
    \endminipage\hfill
    \vspace{-1em}
\end{figure}

\subsection{VIT and GraphNetwork benchmark}

\noindent \textbf{Setup:} We compare tridiag-SONew with Momentum, RMSProp, and Adam, on VIT ($\sim$22M parameters) and GraphNetwork ($\sim$3.5M parameters) benchmark. For each experiment, we search over $200$ hyperparameters using $4$ $16$ GB TPUs (v2) for each run. In order to conduct a fair comparison of the running times, we executed the optimal hyperparameter configurations on $4$ 32GB  TPUs (v4)~\cite{tpuv4}. This is because certain operations, including reshapes and transposes, are not optimized on TPUs (v2). Consequently, methods like rfdSON, Shampoo or SONew, which utilize these operations, could potentially be disadvantaged if TPUs (v2) were used, skewing the comparative results. All memory-efficient methods, including rfdSON, first-order methods, and SONew, exhibit similar runtimes, with differences of approximately $\sim 5\%$. For ViT, we evaluate their performance based on the same number of steps, as this also effectively compares wall-clock time. However, for GraphNetwork, we train Shampoo for 20\% fewer steps to achieve a comparable wall-clock time.



\textbf{Results:} We plot the runs that give best validation error rate (for VIT) or validation average precision (for GraphNetwork) in \Cref{fig:large-benchmarks}. tridiag-SONew requires $\sim 10\%$ less steps to reach the same performance as Adam for VIT, and $\sim 30\%$ less steps for GraphNetwork benchmark. Training for the same number of steps, we get $\sim 0.7\%$ better relative validation error for VIT and $\sim 3.4\%$ better relative validation average precision for GraphNetwork.
On GraphNetwork benchmark, tridiag-SONew performs $1.3\%$ relatively worse in average precision compared to Shampoo, while being $1.25\times$ faster. On VIT benchmark, Shampoo doesn't fit in a 16 GB TPU v2. Its statistics require 155M entries ($\sim 7\times \#params$) while tridiag-SONew requires only 44M entries ($2\times \#params$). Hence, we could not tune it. rfdSON takes same memory but slightly more time because of its SVD computations. We also notice rfdSON performs worse than Adam on both benchmarks; we leave a thorough investigation of this behavior as a future work.
\looseness=-1
\\
We show in \Cref{subsec:ablation} that corresponding to the best validation runs, tridiag-SONew optimizer's training loss is also less than that of Adam. Furthermore, from an optimization point of view we also show in \Cref{subsec:ablation} that among all the $200$ hyperparameter sweeps, the best training loss of tridiag-SONew is $9\%$ relatively better on ViT and $80\%$ relatively better on GraphNN than that of Adam.

\subsection{Experiments on Language Models}
\noindent \textbf{Setup:} Owing to SONew's scalability, we test it on a Large Language Model~(LLM)~\cite{primer} with 1 billion parameters. We compare SONew with AdaFactor (without factoring), a commonly used first order optimizer for training LLMs~\cite{wang2022language,palm}. AdaFactor is similar to Adam except that in addition it offers benefits like "parameter scaling", which has an effect of layerwise damping of the learning rate. We defer the reader to~\cite{adafactor} for more details. We trained the LLM for 5B tokens with a batch size of $64k$ tokens. All experiments were performed on $16$ TPU v4s. To support efficient training of large models, we implemented a sharded tridiag-SONew following model parallelism approach.
\looseness=-1

\noindent \textbf{Results:} We report the experiment in Figure~\ref{fig:llm} where we find that SONew beats Adafactor by a large margin. Specifically, SONew achieves the same log perplexity using $26\%$ less steps. Moreover, using the same number of tokens, SONew achieves $1.7\%$ relative better performance on train loss, leading to $1.35\times$ speedup. This shows the potential of SONew as a scalable optimizer that can be used to train large models while using second order information.

\vspace{-0.3em}
\section{Conclusions and Future Work}
In this paper we have introduced a novel Sparsified Online Newtwon~(SONew) method that yields a computationally efficient sparse preconditioner that can effectively train very large DNNs.
The time and memory complexity of SONew is linear in the number of parameters, unlike current Kronecker-factorization based second-order methods for training deep networks. Our experimental results show that SONew uses similar time as first order methods and achieves much better validation and training performance in various benchmarks.
In the future, we plan to explore different sparsity graphs for which efficient solutions exist for the LogDet subproblem~(\ref{eqn:gensubprob}) and develop corresponding regret bound analyses. Some of the limitations of SONew include: 1) explicit solutions akin to Theorem \ref{thm:explsoln1} \& \ref{thm:explsoln2} need not exist for all sparsity graphs $\G$; 2) Not all graphs allow for efficient optimizer implementation; 3) Among graphs permitting efficient optimizer implementation—like tridiagonal sparsity—the ordering of parameters remains unexplored. An alternative ordering might position closely related parameters adjacently, potentially enhancing performance; 4) Comprehensive exploration of methods to scale final updates is needed. While we employ grafting \cite{agarwal2022learning}, other techniques, such as clipping \cite{adafactor, martens2015optimizing}, merit investigation.



\bibliographystyle{abbrvnat}
\bibliography{references}
\newpage
\appendix
\appendix
\section{\centering Supplementary material}
\startcontents[sections]
\printcontents[sections]{l}{0}{\setcounter{tocdepth}{2}}

\addtocontents{toc}{\protect\setcounter{tocdepth}{3}}

\renewcommand{\thesubsection}{A.\arabic{subsection}}
\newcommand{\hatX}{\hat{X}}
\newcommand{\tilX}{\tilde{X}}
\newcommand{\hatY}{\hat{Y}}
\newcommand{\hatXj}{\hat{X}^{(j)}}

\subsection{Properties of LogDet subproblem}
\label{subsec:logdetproofs}


\textit{Proof of \Cref{thm:explsoln2}}

The optimality condition of (\ref{eqn:gensubprob}) is $P_{\G}(X^{-1}) = P_{\G}(H)$, $X\in S_n^{++}(\G)$. Let $Z = L^{-T} D^{-1} L^{-1}$, then $P_{\G}(Z) = H$
\begin{align*}
    ZL &= L^{-T}D^{-1}
\implies ZLe_j = L^{-T}D^{-1}e_j
\end{align*}
Let $J_j = I_j \cup {j}$, where $I_j = \inbrace{ j+1,\ldots ,j+b}$ as defined in the theorem, then select $J_j$ indices of vectors on both sides of the second equality above and selecting the $J_j$ indices :

\begin{align}
\label{eqn:lineq}
    \begin{bmatrix}
    Z_{jj} & Z_{jI_j}\\
    Z_{I_jj} & Z_{J_jJ_j}
    \end{bmatrix} \begin{bmatrix}
    1\\
    L_{I_j}
    \end{bmatrix} = \begin{bmatrix}
    1/d_{jj}\\
    0
    \end{bmatrix}
\end{align}
Note that $L^{-T}$ is an upper triangular matrix with ones in the diagonal hence $J_j^{th}$ block of $L^{-T}e_j$ will be $[1,0,0,\ldots]$. Also, since $P_{\G}(Z) = H$
\begin{align*}
    \begin{bmatrix}
    Z_{jj} & Z_{jI_j}\\
    Z_{I_jj} & Z_{J_jJ_j}
    \end{bmatrix} = \begin{bmatrix}
    H_{jj} & H_{jI_j}\\
    H_{I_jj} & H_{J_jJ_j}
    \end{bmatrix}
\end{align*}
Substituting this in the linear equation \ref{eqn:lineq}
\begin{align*}
    \begin{bmatrix}
    H_{jj} & H_{jI_j}\\
    H_{I_jj} & H_{J_jJ_j}
    \end{bmatrix} \begin{bmatrix}
    1\\
    L_{I_j}
    \end{bmatrix} &= \begin{bmatrix}
    1/d_{jj}\\
    0
    \end{bmatrix}\\
    \begin{bmatrix}
    H_{jj} & H_{jI_j}\\
    H_{I_jj} & H_{J_jJ_j}
    \end{bmatrix} \begin{bmatrix}
    d_{jj}\\
    d_{jj}\cdot L_{I_j}
    \end{bmatrix} &= \begin{bmatrix}
    1\\
    0
    \end{bmatrix}\\
    H_{jj}d_{jj} +d_{jj}H_{I_jj}^TL_{I_jj}&=1\\
    H_{I_jj}d_{jj}+d_{jj}H_{I_jI_j}L_{I_jj}&=0
\end{align*}

The lemma follows from solving the above equations. Note that here we used that lower triangular halves of matrices $L$ and $H$ have the same sparsity patterns, which follows from the fact that banded graph is a chordal graph with perfect elimination order $\{1,2,\ldots,n\}$. Furthermore, $X_t$ is positive definite, since as $(H_{jj}-H_{I_j j}^T H_{I_j I_j}^{-1}H_{I_j j})$ is a schur complement of submatrix of $H$ formed by $J_j = I_j \cup \{j\}$.

\textit{Proof of Theorem \ref{thm:explsoln1}} 
The proof follows trivially from Theorem \ref{thm:explsoln1}, when $b$ is set to $1$. 

\subsection{Regret bound analysis}
\textit{Proof sketch of \Cref{thm:regbound}.} 
\label{subsec:proofsketch}
We decompose the regret into $R_T \leq T_1 +T_2 +T_3$ in \Cref{lem:regdecomp} and individually bound the terms. Term $T_2=\frac{1}{2\eta}\cdot \sum_{t=1}^{T-1} (w_{t+1} - w^*)^T(X_{t+1}^{-1}-X_{t}^{-1})(w_{t+1} - w^*)$ depends on closeness of consecutive inverses of preconditioners, $(X_{t+1}^{-1}-X_t^{-1})$, to upperbound this we first give explicit expressions of $X_{t}^{-1}$ for tridiagonal preconditioner in \Cref{lem:invxhat} in \Cref{subsubsec:tridiagprop}. This explicit expression is later used to bound each entry of $(X_{t+1}^{-1}-X_t^{-1})$ with $O(1/\sqrt{t})$ in \Cref{subsec:regupper}, this gives a $O(\sqrt{T})$ upperbound on $T_2$. To show an upperbound on $T_3=\sum_{t=1}^T\frac{\eta}{2}\cdot g_t^TX_tg_t$, we individually bound $g_t^TX_tg_t$ by using a Loewner order $X_t \preceq \lnorm{X_t} I_n \preceq \infnorm{X_t} I_n$ and show that $\infnorm{X_t} = \bigO(1/\sqrt{T})$ and consequently $T_3 = \bigO(\sqrt{T})$.
\label{subsec:reganaly}
\subsubsection{Regret bound decomposition}
\label{subsubsec:regdecomp}
In this subsection we state \cref{lem:regdecomp} which upper bound the regret $R_T$ using three terms $T_1$, $T_2$, $T_3$.
\begin{lemma}[ \cite{hazan2016introduction} ]
\label{lem:regdecomp}
In the OCO problem setup, if a prediction $w_t \in \R^n$ is made at round $t$ and is updated as $w_{t+1} \coloneqq w_t-\eta X_tg_t$ using a preconditioner matrix $X_t \in S_n^{++}$

\begin{align}
\label{eqn:term1}
R_T &\leq 
\frac{1}{2\eta}\cdot (\norm{w_1 - w^*}_{X_1^{-1}}^2 - \norm{w_{T+1}-w^*}_{X_T^{-1}})\\  
\label{eqn:term2}
&+  \frac{1}{2\eta}\cdot \sum_{t=1}^{T-1} (w_{t+1} - w^*)^T(X_{t+1}^{-1}-X_{t}^{-1})(w_{t+1} - w^*) \\
\label{eqn:term3}
&+ \sum_{t=1}^T\frac{\eta}{2}\cdot g_t^TX_tg_t 
\end{align}
\end{lemma}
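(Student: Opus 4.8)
The plan is to prove this by the textbook one-step ``potential'' argument followed by a telescoping sum, which is exactly the computation already carried out to obtain \eqref{eqn:regdecomp} in \Cref{subsec:regminlogdet}, only organized so that the correction terms land in the positions stated in \eqref{eqn:term1}--\eqref{eqn:term3}. Throughout we only need $X_t\in S_n^{++}$ so that $X_t^{-1}$ exists and $\norm{\cdot}_{X_t^{-1}}$ is a genuine norm; this is part of the hypothesis.

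First I would fix a round $t$ and expand the weighted distance of the next iterate to $w^*$. Substituting $w_{t+1} = w_t - \eta X_t g_t$ and using $X_t^{-1}X_t = I_n$ to collapse the cross term and the quadratic term, one gets
\[
\norm{w_{t+1} - w^*}_{X_t^{-1}}^2 = \norm{w_t - w^*}_{X_t^{-1}}^2 - 2\eta\,(w_t - w^*)^T g_t + \eta^2\, g_t^T X_t g_t .
\]
Solving for $(w_t - w^*)^T g_t$ and invoking convexity of $f_t$ in the form $f_t(w_t) - f_t(w^*) \le (w_t - w^*)^T g_t$ yields the per-round bound
\[
f_t(w_t) - f_t(w^*) \le \frac{1}{2\eta}\left(\norm{w_t - w^*}_{X_t^{-1}}^2 - \norm{w_{t+1} - w^*}_{X_t^{-1}}^2\right) + \frac{\eta}{2}\, g_t^T X_t g_t .
\]

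Next I would sum this inequality over $t = 1,\dots,T$. The rightmost term immediately gives \eqref{eqn:term3}. For the bracketed differences, the sum $\sum_{t=1}^{T}\bigl(\norm{w_t - w^*}_{X_t^{-1}}^2 - \norm{w_{t+1} - w^*}_{X_t^{-1}}^2\bigr)$ would telescope cleanly were the weighting matrix constant; since it is not, I would shift the index in the negative half and regroup, writing it as
\[
\norm{w_1 - w^*}_{X_1^{-1}}^2 - \norm{w_{T+1} - w^*}_{X_T^{-1}}^2 + \sum_{t=1}^{T-1}\Bigl(\norm{w_{t+1}-w^*}_{X_{t+1}^{-1}}^2 - \norm{w_{t+1}-w^*}_{X_t^{-1}}^2\Bigr),
\]
where the last sum equals $\sum_{t=1}^{T-1}(w_{t+1}-w^*)^T(X_{t+1}^{-1} - X_t^{-1})(w_{t+1}-w^*)$. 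Multiplying through by $1/(2\eta)$ identifies the boundary part with \eqref{eqn:term1} and the sum with \eqref{eqn:term2}; adding \eqref{eqn:term3} completes the decomposition.

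The argument is entirely elementary, and the only place requiring care is the telescoping step: one must make sure the ``interface'' terms are collected as the differences $X_{t+1}^{-1}-X_t^{-1}$ evaluated at $w_{t+1}$ (not $w_t$), and that the surviving boundary terms are precisely the $t=1$ and $t=T+1$ endpoints, so that the indices in \eqref{eqn:term1} and \eqref{eqn:term2} come out as stated. This is the expected ``main obstacle,'' but it is purely bookkeeping rather than a genuine difficulty.
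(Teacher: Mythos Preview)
Your proposal is correct and matches the paper's own proof essentially step for step: expand $\norm{w_{t+1}-w^*}_{X_t^{-1}}^2$, use convexity to get the per-round inequality, sum over $t$, and regroup the non-telescoping part into the $(X_{t+1}^{-1}-X_t^{-1})$ differences evaluated at $w_{t+1}$. There is no substantive difference in method or bookkeeping.
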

\begin{proof}
\begin{align*}
\norm{w_{t+1} - w^*}_{X_t^{-1}}^2 &= \norm{w_t-\eta X_tg_t -w^*}_{X_t^{-1}}^2\\
&=\norm{w_t-w^*}_{X_t^{-1}}^2 +\eta^2 g_t^TX_tg_t\\
&-2\eta (w_t-w^*)^T g_t\\
\implies 2\eta (w_t-w^*)^T g_t &= \norm{w_t-w^*}_{X_t^{-1}}^2 - \norm{w_{t+1} - w^*}_{X_t^{-1}}^2 \\
&+ \eta^2 g_t^TX_tg_t 
\end{align*}
\end{proof}
Using the convexity of $f_t$, $f_t(w_t) - f_t(w^*) \leq (w_t-w^*)^Tg_t$, where $g_t = \Delta f_t(w_t)$ and summing over $t\in[T]$ 
\begin{align}
\label{eqn:regdecomp2}
R_T &\leq \sum_{t=1}^T \frac{1}{2\eta}\cdot\inparen{\norm{w_t-w^*}_{X_t^{-1}}^2 - \norm{w_{t+1} - w^*}_{X_t^{-1}}^2} \\
&+ \frac{\eta}{2}\cdot g_t^TX_tg_t 
\end{align}
The first summation can be decomposed as follows
\begin{align*}
    &\sum_{t=1}^T \inparen{\norm{w_t-w^*}_{X_t^{-1}}^2 - \norm{w_{t+1} - w^*}_{X_t^{-1}}^2}\\
    &= 
    \inparen{\norm{w_1-w^*}_{X_1^{-1}} ^2 - \norm{w_{T+1}-w^*}_{X_T^{-1}} ^2} \\
    &+ \sum_{t=1}^{T-1} (w_{t+1} - w^*)^T(X_{t+1}^{-1}-X_{t}^{-1})(w_{t+1} - w^*)
\end{align*}
Substituting the above identity in the \Cref{eqn:regdecomp2} proves the lemma.

Let $R_T \leq T_1+T_2+T_3$, where
\begin{itemize}
    \item $T_1= \frac{1}{2\eta}\cdot (\norm{w_1 - w^*}_{X_1^{-1}}^2 - \norm{w_{T+1}-w^*}_{X_T^{-1}})$
    \item 
    \begin{equation}
    \label{eqn:T2term}
        T_2 = \frac{1}{2\eta}\cdot \sum_{t=1}^{T-1} (w_{t+1} - w^*)^T(X_{t+1}^{-1}-X_{t}^{-1})(w_{t+1} - w^*)
    \end{equation}
    \item $T_3 = \sum_{t=1}^T\frac{\eta}{2}\cdot g_t^TX_tg_t$
\end{itemize}

\subsubsection{Properties of tridiagonal preconditioner}
\label{subsubsec:tridiagprop}
In this subsection, we derive properties of the tridigonal preconditioner obtained from solving the LogDet subproblem (\ref{eqn:gensubprob}) with $\G$ set to a chain graph over ordered set of vertices $\inbrace{1,\ldots,n}$:
\begin{align}
\label{eqn:algoobj}
    X_t &= \argmin_{X\in S_n(\G)^{++}} -\log \pdet{X}+ \Tr(XH_t)\\
\label{eqn:pseudoobj}
    &=\argmin_{X\in S_n(\G)^{++}} \logdetdiv{(X,H_t^{-1})}
\end{align}
The second equality holds true only when $H_t$ is positive definite. Although in \cref{alg:sonew} we maintain a sparse $H_t = H_{t-1} + P_{\G} (g_tg_t^T/\lambda_t)$, $H_0=\epsilon I_n$ which is further used in (\ref{eqn:algoobj}) to find the preconditioner $X_t$, our analysis assumes the full update $H_t = H_{t-1}+g_tg_t^T/\lambda_t$, $H_0=\epsilon I_n$ followed by preconditioner $X_t$ computation using (\ref{eqn:pseudoobj}). Note that the preconditioners $X_t$ generated both ways are the same, as shown in \cref{subsec:sparsesonew}.

The following lemma shows that the inverse of tridiagonal preconditioners used in \cref{alg:sonew},  will restore $H_{i,j}$, when $(i,j)$ fall in the tridiagonal graph, else, the expression is related to product of $H_{i+k,i+k+1}$ corresponding to the edges in the path from node $i$ to $j$ in chain graph. This lemma will be used later in upperbounding $T_2$.
\begin{lemma}[\emph{Inverse of tridiagonal preconditioner}]
\label{lem:invxhat}
If $\G=$ chain/tridiagonal graph and $\hat{X} = \argmin_{X\in S_n(\G)^{++}} \logdetdiv{(X,H^{-1})}$,  then the inverse $\hat{X}^{-1}$ has the following expression
\begin{align}
\label{eqn:invxhat}
(\hat{X}^{-1})_{ij} = \begin{cases}
H_{ij} & \inabs{i-j}\leq 1\\
\frac{H_{ii+1}H_{i+1i+2}\ldots H_{j-1j}}{H_{i+1i+1}\ldots H_{j-1j-1}}
\end{cases}
\end{align}
\end{lemma}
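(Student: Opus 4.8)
The plan is to exploit the $LDL^\top$ factorization of $\hat X$ already provided by Theorem~\ref{thm:explsoln1}: we have $\hat X = LDL^\top$ with $L$ unit lower bidiagonal, $L_{j+1\,j}=-H_{j+1\,j}/H_{j+1\,j+1}$, and $D_{jj}^{-1}=H_{jj}-H_{j+1\,j}^2/H_{j+1\,j+1}$ (with $D_{nn}^{-1}=H_{nn}$). Hence $\hat X^{-1}=L^{-\top}D^{-1}L^{-1}$, and the entire lemma reduces to computing the entries of $L^{-1}$ and then assembling the triple product. First I would observe that for a unit lower bidiagonal $L$, the inverse $M:=L^{-1}$ is unit lower triangular with an explicit product formula: $M_{ij}=\prod_{k=j}^{i-1}(-L_{k+1\,k})=\prod_{k=j}^{i-1}\frac{H_{k+1\,k}}{H_{k+1\,k+1}}$ for $i\ge j$ (and $M_{ii}=1$). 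This is a one-line induction on $i-j$ using $L M = I$, i.e. $M_{ij}+L_{i\,i-1}M_{i-1\,j}=0$ for $i>j$.

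Next I would plug this into $\hat X^{-1}=M^\top D^{-1}M$, whose $(i,j)$ entry is $\sum_{k} M_{ki}D_{kk}^{-1}M_{kj}$. Since $M_{ki}$ is nonzero only for $k\ge i$ and $M_{kj}$ only for $k\ge j$, assuming WLOG $i\le j$ the sum runs over $k\ge j$, giving $(\hat X^{-1})_{ij}=\sum_{k=j}^{n} M_{ki}D_{kk}^{-1}M_{kj}$. For $i=j$ one checks directly this telescopes to $H_{jj}$ (using the explicit $D_{kk}^{-1}$ and the product form of $M_{kj}$). For $j=i+1$ one similarly gets $H_{i\,i+1}$. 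For $j\ge i+2$ the claim is that the sum collapses to $\frac{H_{i\,i+1}H_{i+1\,i+2}\cdots H_{j-1\,j}}{H_{i+1\,i+1}\cdots H_{j-1\,j-1}}$; I would verify this by factoring $M_{ki}=M_{ji}\cdot\frac{H_{i+1\,i+1}\cdots H_{j\,j}}{\text{(adjust)}}$… more cleanly, by noting $M_{ki}=M_{ji}\,M_{kj}\cdot(\text{ratio of }H\text{'s})$ is false in general, so instead I would pull out the common factor $M_{ji}$: since $M_{ki}=M_{ji}\cdot\frac{M_{ki}}{M_{ji}}$ and $\frac{M_{ki}}{M_{ji}}=\prod_{l=i}^{k-1}\frac{H_{l+1\,l}}{H_{l+1\,l+1}}\big/\prod_{l=i}^{j-1}\frac{H_{l+1\,l}}{H_{l+1\,l+1}}=\prod_{l=j}^{k-1}\frac{H_{l+1\,l}}{H_{l+1\,l+1}}=M_{kj}$, we get $(\hat X^{-1})_{ij}=M_{ji}\sum_{k=j}^{n}M_{kj}^2 D_{kk}^{-1}=M_{ji}\,(\hat X^{-1})_{jj}=M_{ji}H_{jj}$. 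Then $M_{ji}H_{jj}=H_{jj}\prod_{l=i}^{j-1}\frac{H_{l+1\,l}}{H_{l+1\,l+1}}=\frac{H_{i\,i+1}\cdots H_{j-1\,j}}{H_{i+1\,i+1}\cdots H_{j-1\,j-1}}$ after cancelling $H_{jj}$ with the $l=j-1$ denominator and using symmetry $H_{l+1\,l}=H_{l\,l+1}$, which is exactly the claimed formula.

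The main obstacle is the off-diagonal telescoping: showing $(\hat X^{-1})_{jj}=H_{jj}$ requires the identity $\sum_{k=j}^{n}M_{kj}^2D_{kk}^{-1}=H_{jj}$, which is a backward induction on $j$ using $D_{jj}^{-1}=H_{jj}-H_{j\,j-1}^2/H_{j\,j-1}$… wait, rather $D_{jj}^{-1}=H_{jj}-H_{j+1\,j}^2/H_{j+1\,j+1}$; the clean way is to note $(\hat X^{-1})_{jj}$ is the $j$th diagonal of $P_\G(X^{-1})=H$ which holds by the optimality condition $P_\G(\hat X^{-1})=P_\G(H)$ from (\ref{eqn:sparseupdate}) — so in fact the diagonal and first-off-diagonal cases are immediate from the optimality condition, and only the $|i-j|\ge 2$ case needs the factorization argument above. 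So really the only substantive computation is the ratio identity $M_{ki}=M_{ji}M_{kj}$ for $i\le j\le k$, which is elementary. I would therefore structure the proof as: (1) invoke optimality to get the $|i-j|\le 1$ cases for free; (2) write $\hat X^{-1}=L^{-\top}D^{-1}L^{-1}$; (3) derive the product formula for $L^{-1}$; (4) use the multiplicative identity of its entries to reduce $(\hat X^{-1})_{ij}$ to $M_{ji}(\hat X^{-1})_{jj}=M_{ji}H_{jj}$; (5) simplify the product to the stated ratio.
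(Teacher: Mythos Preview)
Your proposal is correct and takes a genuinely different route from the paper. The paper defines $\hat Y$ to be the right-hand side of~(\ref{eqn:invxhat}) and directly verifies $\hat Y\hat X=I$: since $\hat X$ is tridiagonal, each entry $(\hat Y\hat X)_{ij}$ has only three terms, and the paper checks $(\hat Y\hat X)_{jj}=1$ using the explicit form~(\ref{eqn:trisoln}) of $\hat X$, and $(\hat Y\hat X)_{ij}=0$ for $i\neq j$ using the recursion $\hat Y_{i,j+1}=\hat Y_{ij}\,H_{j,j+1}/H_{jj}$ built into the product formula. Your approach instead \emph{derives} the formula: you invert the $LDL^\top$ factorization from Theorem~\ref{thm:explsoln1}, write down the product formula for $M=L^{-1}$, and use the multiplicative identity $M_{ki}=M_{ji}M_{kj}$ (for $i\le j\le k$) to collapse $(\hat X^{-1})_{ij}$ to $M_{ji}(\hat X^{-1})_{jj}$, then invoke the optimality condition $P_\G(\hat X^{-1})=P_\G(H)$ to read off $(\hat X^{-1})_{jj}=H_{jj}$ without any telescoping computation. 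The paper's verification is shorter once the formula is in hand, but your argument is more constructive and explains where the product-of-ratios expression comes from; your use of the optimality condition for the $|i-j|\le 1$ base cases is also cleaner than computing those sums directly.
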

\begin{proof}
\begin{align*}
\hat{X}^{-1} \hat{X}^{(j)} = e_j
\end{align*}
Where $\hat{X}^{(j)}$ is the $j^{th}$ column of $\hat{X}$. Let $\hat{Y}$ denote the right hand side of \Cref{eqn:invxhat}. 

\begin{align*}
(\hat{Y} \hat{X})_{jj} &= \hatX_{jj}\hatY_{jj}+\hatX_{j-1j}\hatY_{j-1j}+\hatX_{jj+1}\hatY_{jj+1}\\
&= \hatX_{jj} H_{jj} +\hatX_{j-1j} H_{j-1j} +\hatX_{jj+1} H_{jj+1}\\
&= 1 
\end{align*} 
The third equality is by using the following alternative form of \Cref{eqn:explsolntridiag}:
\begin{align}
\small
\label{eqn:trisoln}
(\hat{X}^{(1)})_{i,j} = \begin{cases}
0,\text{ if } j-i>1\\
\frac{-H_{i,i+1}}{(H_{ii}H_{i+1,i+1}-H_{i+1,i+1}^2)},\text{ if } j=i+1\\
\frac{1}{H_{ii}}\inparen{1+\sum_{j\in \neigG(i)} \frac{H_{ij}^2}{H_{ii}H_{jj}-H_{ij}^2}}, \text{ if } i=j
\end{cases},
\end{align}
where $i<j$. Similarly, the offdiagonals of $\hat{Y}\hat{X}$ can be evaluated to be zero as follows.
\begin{align*}
    (\hatY \hatX)_{ij} &= \hatY_{ij} \hat{X_jj}+\hatY_{ij-1}\hatX_{j-1j}+ \hatY_{ij+1}\hatX_{j+1j}\\
    &= \hatY_{ij} \hatX_{jj} + \hatY_{ij} \frac{H_{j-1j-1}}{H_{j-1j}} + \hatY_{ij} \frac{H_{jj+1}}{H_{jj}} \hatX_{j+1j}\\
    &= 0
\end{align*}
\end{proof}

\begin{lemma}
\label{lem:xnorm}
Let $y\in \R^{n}$, \\ $\beta = \max_t \max_{i\in [n-1]} \inabs{(H_t)_{ii+1}}/\sqrt{(H_t)_{ii}(H_t)_{i+1i+1}} < 1$, then
\begin{align*}
    y^TX_t^{-1}y \leq \lnorm{y}^2\lnorm{\diag(H_t)} \inparen{\frac{1+\beta}{1-\beta}}, 
\end{align*}
where $X_t$ is defined as in \cref{lem:invxhat}.
\end{lemma}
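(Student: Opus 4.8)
The plan is to diagonalize the problem by conjugating $X_t^{-1}$ with $\diag(H_t)^{1/2}$, reducing the bound to controlling the spectral norm of a symmetric matrix whose diagonal entries are $1$ and whose off-diagonal entries are the normalized quantities $(H_t)_{ii+1}/\sqrt{(H_t)_{ii}(H_t)_{i+1i+1}}$, all bounded by $\beta<1$. Concretely, write $D = \diag(H_t)$ and factor $X_t^{-1} = D^{1/2}(D^{-1/2}X_t^{-1}D^{-1/2})D^{1/2}$. Setting $z = D^{1/2}y$ we get $y^TX_t^{-1}y = z^T(D^{-1/2}X_t^{-1}D^{-1/2})z \le \lnorm{D^{-1/2}X_t^{-1}D^{-1/2}}\,\lnorm{z}^2$, and $\lnorm{z}^2 = y^TDy \le \lnorm{D}\lnorm{y}^2 = \lnorm{\diag(H_t)}\lnorm{y}^2$. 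So it remains to show $\lnorm{D^{-1/2}X_t^{-1}D^{-1/2}} \le (1+\beta)/(1-\beta)$.

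Next I would identify the matrix $M := D^{-1/2}X_t^{-1}D^{-1/2}$ explicitly using the formula for $X_t^{-1}$ from \cref{lem:invxhat}. Since $X_t^{-1}$ agrees with $H_t$ on the tridiagonal band and has the multiplicative ``path'' form off the band, after the symmetric scaling by $D^{-1/2}$ every entry of $M$ becomes a product of normalized off-diagonals: $M_{ii} = 1$, and for $i<j$, $M_{ij} = \prod_{k=i}^{j-1} r_k$ where $r_k := (H_t)_{kk+1}/\sqrt{(H_t)_{kk}(H_t)_{k+1k+1}}$ (this is a direct computation from \eqref{eqn:invxhat}, telescoping the $\sqrt{(H_t)_{kk}}$ factors). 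Hence $|M_{ij}| \le \beta^{|i-j|}$ for all $i,j$. The bound on $\lnorm{M}$ then follows from Gershgorin / the triangle inequality on row sums: for each row $i$, $\sum_{j}|M_{ij}| \le \sum_{m\in\mathbb{Z}} \beta^{|m|} = \frac{1+\beta}{1-\beta}$, and the spectral norm of a symmetric matrix is at most its maximum absolute row sum. Combining the three displays gives the claim.

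The main obstacle I anticipate is the bookkeeping in the second step: verifying that the symmetric rescaling of the off-band entries of $X_t^{-1}$ really does telescope to $\prod_k r_k$ with $|r_k| \le \beta$. One has to be careful that \cref{lem:invxhat} is stated for the ``full'' update $H_t$ (not the sparsified $P_\G$ version), but the preceding text already notes these coincide, so the lemma applies as written. A minor subtlety is that $\beta<1$ is exactly what makes the geometric series $\sum_m \beta^{|m|}$ converge and gives a finite, $t$-uniform bound; if $\beta$ were allowed to equal $1$ the argument would break, which is consistent with the hypothesis. Everything else — the $D^{1/2}$ conjugation, extracting $\lnorm{D}$, and the row-sum bound on $\lnorm{M}$ — is routine.
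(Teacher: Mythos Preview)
Your proposal is correct and follows essentially the same approach as the paper: the same diagonal conjugation $M=\diag(H_t)^{-1/2}X_t^{-1}\diag(H_t)^{-1/2}$, the same use of \cref{lem:invxhat} to see that $|M_{ij}|\le\beta^{|i-j|}$, and the same row-sum/$\ell_\infty$ bound $\lnorm{M}\le\infnorm{M}\le 1+2\sum_{k\ge1}\beta^k=(1+\beta)/(1-\beta)$. Your telescoping computation is in fact more explicit than the paper's, which simply cites \cref{lem:invxhat} for the row-sum inequality.
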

\begin{proof}
Let $\tilX_t^{-1} = \diag(H_t)^{-1/2}\hatX_t\diag(H_t)^{-1/2}$
\begin{align}
\label{eqn:xnorm}
    y^T X_t^{-1} y &\leq \lnorm{\diag(H_t)^{1/2}y}^2 \lnorm{\tilde{X}_t^{-1}}
\end{align}
 Using the identity of spectral radius $\rho(X)\leq \infnorm{X}$ and since $\tilX$ is positive definite, $\lnorm{\tilX_t^{-1}} \leq \infnorm{\tilX_t^{-1}}$
 \begin{align*}
     \lnorm{\tilX_t^{-1}} &\leq \max_i \inbrace{\sum_{j} \inabs{(\tilX_t^{-1})_{ij}}} \\
     &\leq 1+ 2(\beta + \beta^2+\ldots)\\
     &\leq \frac{1+\beta}{1-\beta}
 \end{align*}
 The second inequality is using \Cref{lem:invxhat}. Substituting this in \Cref{eqn:xnorm} will give the lemma.
\end{proof}

\subsubsection{Upperbounding Regret}
The following Lemma is used in upperbounding both $T_1$ and $T_3$. In next subsection, we'll upper bound $T_2$ as well.

\begin{lemma}
\label{lem:beta}
    Let $\beta = \max_{t\in[T]} \max_{i\in[n-1]} \inabs{(H_t)_{ii+1}}/\sqrt{(H_t)_{ii}(H_t)_{i+1i+1}}$, then 
    \begin{align*}
        1/(1-\beta) \leq 8/\hat{\epsilon}^2,
    \end{align*}
where, $\hat{\epsilon}$ is a constant in parameter $\epsilon=\hat{\epsilon}G_{\infty}\sqrt{T}$  and consequently used in initializing $H_0=\epsilon I_n$ in line \ref{line:h0} in \Cref{alg:sonew},
\end{lemma}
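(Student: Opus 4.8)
The plan is to bound $\beta$ away from $1$ by showing that every off-diagonal entry $(H_t)_{i,i+1}$ is controlled by the accumulated gradient energy, while the diagonal entries $(H_t)_{ii}$ carry a guaranteed contribution of size $\epsilon$ from the initialization $H_0=\epsilon I_n$. Recall from the (full, unprojected) update used in the analysis that $H_t = \epsilon I_n + \sum_{s=1}^{t} g_s g_s^T/\lambda_s$, so that $(H_t)_{ii} = \epsilon + \sum_{s=1}^t (g_s)_i^2/\lambda_s$ and $(H_t)_{i,i+1} = \sum_{s=1}^t (g_s)_i (g_s)_{i+1}/\lambda_s$. First I would apply Cauchy–Schwarz to the sum defining the off-diagonal: writing $a_s = (g_s)_i/\sqrt{\lambda_s}$ and $b_s = (g_s)_{i+1}/\sqrt{\lambda_s}$, we get $\inabs{(H_t)_{i,i+1}} = \inabs{\sum_s a_s b_s} \le \sqrt{\sum_s a_s^2}\sqrt{\sum_s b_s^2} = \sqrt{(H_t)_{ii} - \epsilon}\,\sqrt{(H_t)_{i+1\,i+1} - \epsilon}$.

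The next step is purely elementary: set $x = (H_t)_{ii}$, $y = (H_t)_{i+1\,i+1}$, both $\ge \epsilon > 0$. Then the quantity being maximized satisfies
\begin{align*}
\frac{\inabs{(H_t)_{i,i+1}}}{\sqrt{(H_t)_{ii}(H_t)_{i+1\,i+1}}} \le \frac{\sqrt{(x-\epsilon)(y-\epsilon)}}{\sqrt{xy}} = \sqrt{\inparen{1-\frac{\epsilon}{x}}\inparen{1-\frac{\epsilon}{y}}}.
\end{align*}
To turn this into a bound on $1/(1-\beta)$, I would bound the diagonal entries from above: $(H_t)_{ii} = \epsilon + \sum_{s=1}^t (g_s)_i^2/\lambda_s \le \epsilon + \sum_{s=1}^t G_\infty^2/(G_\infty\sqrt{s}) = \epsilon + G_\infty \sum_{s=1}^t 1/\sqrt{s} \le \epsilon + 2G_\infty\sqrt{t} \le \epsilon + 2G_\infty\sqrt{T}$, using $\infnorm{g_s}\le G_\infty$, $\lambda_s = G_\infty\sqrt{s}$, and $\sum_{s=1}^t s^{-1/2}\le 2\sqrt{t}$. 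With $\epsilon = \hat\epsilon G_\infty\sqrt{T}$ this gives $(H_t)_{ii} \le (\hat\epsilon + 2)G_\infty\sqrt{T}$, hence $\epsilon/(H_t)_{ii} \ge \hat\epsilon/(\hat\epsilon+2)$, so $1 - \epsilon/(H_t)_{ii} \le 2/(\hat\epsilon+2)$. Feeding this into the displayed bound, $\beta^2 \le \inparen{2/(\hat\epsilon+2)}^2$, i.e. $\beta \le 2/(\hat\epsilon+2)$, and therefore $1-\beta \ge \hat\epsilon/(\hat\epsilon+2)$, which yields $1/(1-\beta) \le (\hat\epsilon+2)/\hat\epsilon = 1 + 2/\hat\epsilon$.

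Finally I would reconcile this with the stated bound $1/(1-\beta) \le 8/\hat\epsilon^2$. If $\hat\epsilon \le 2$ (the regime that matters, since $\hat\epsilon$ is a small damping constant), then $1 + 2/\hat\epsilon \le 2/\hat\epsilon + 2/\hat\epsilon = 4/\hat\epsilon \le 8/\hat\epsilon^2$ using $\hat\epsilon\le 2$ once more; if instead $\hat\epsilon$ were large the bound is trivial. I do not expect any real obstacle here — the only mildly delicate point is making sure the Cauchy–Schwarz step is applied to the \emph{full} update $H_t = \epsilon I_n + \sum_s g_sg_s^T/\lambda_s$ (as the analysis assumes, per the remark preceding \Cref{lem:invxhat}) rather than the sparsified iterate, and keeping track of which constant convention for $\sum s^{-1/2}$ is used so that the final numeric constant comes out as claimed.
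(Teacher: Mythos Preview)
Your proposal is correct. The core ingredients match the paper's: (i) the positive semidefiniteness of $H_t-\epsilon I_n=\sum_s g_sg_s^T/\lambda_s$ to control the off-diagonal relative to the diagonals, and (ii) the upper bound $(H_t)_{ii}\le \epsilon+2G_\infty\sqrt{t}$ from $\lambda_s=G_\infty\sqrt{s}$ and $\sum_{s\le t}s^{-1/2}\le 2\sqrt t$.

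The packaging differs. The paper does not invoke Cauchy--Schwarz on the sum; instead it writes $1/(1-\beta)=(1+\beta)/(1-\beta^2)\le 2(H_t)_{ii}(H_t)_{i+1\,i+1}/\det\!\bigl[\begin{smallmatrix}(H_t)_{ii}&(H_t)_{i,i+1}\\(H_t)_{i+1,i}&(H_t)_{i+1,i+1}\end{smallmatrix}\bigr]$, then uses the Loewner bound $\bigl[\begin{smallmatrix}(H_t)_{ii}&(H_t)_{i,i+1}\\(H_t)_{i+1,i}&(H_t)_{i+1,i+1}\end{smallmatrix}\bigr]\succeq \epsilon I_2$ to get the denominator $\ge \epsilon^2$ and the diagonal bound to get the numerator $\le 8G_\infty^2 t$, landing directly on $8/\hat\epsilon^2$. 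Your Cauchy--Schwarz inequality $\inabs{(H_t)_{i,i+1}}\le\sqrt{((H_t)_{ii}-\epsilon)((H_t)_{i+1,i+1}-\epsilon)}$ is exactly the $2\times2$ principal-minor consequence of that same Loewner inequality, so the two arguments are equivalent at that step. Where you genuinely diverge is in the arithmetic afterward: bounding $\beta$ first and obtaining $1/(1-\beta)\le 1+2/\hat\epsilon$, which is sharper than the paper's $8/\hat\epsilon^2$ and only needs the extra $\hat\epsilon\le 2$ assumption to match the stated constant. Either route is fine; yours gives a cleaner intermediate bound, while the paper's determinant formulation makes the role of the $\epsilon I$ initialization more visibly ``two-dimensional.''
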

\begin{proof}
    
\begin{align}
    1/(1-\beta)&= \max_t\max_{i\in[n-1]}\frac{1}{1-\inabs{(\hat{H}_t)_{ii+1}}}\\
    &= \max_t\max_{i\in[n-1]}\frac{1+\inabs{(\hat{H}_t)_{ii+1}}}{1-(\hat{H}_t)_{ii+1}^2}\nonumber && (\text{where } (\hat{H}_t)_{ii+1} = \frac{(H_t)_{ii+1}}{\sqrt{(H_t)_{ii}(H_t)_{i+1i+1}})}\\
    &\leq \max_t \max_{i\in [n-1]} \frac{2(H_t)_{ii}(H_t)_{i+1i+1}}{(H_t)_{ii}(H_t)_{i+1i+1}-(H_t)_{ii+1}^2}\nonumber && (\text{since }\inabs{(H_t)_{ii+1}}\leq \sqrt{(H_t)_{ii}(H_t)_{i+1i+1}})\\
    &\leq \max_t \max_{i\in [n-1]} \frac{2(H_t)_{ii}(H_t)_{i+1i+1}}{\det\left(\begin{bmatrix} (H_t)_{ii} 
    \label{eqn:betabound}&(H_t)_{ii+1}\\(H_t)_{i+1i} & (H_t)_{i+1i+1}  \end{bmatrix}\right)}
\end{align}
Note that $\begin{bmatrix} (H_t)_{ii} &(H_t)_{ii+1}\\(H_t)_{i+1i} & (H_t)_{i+1i+1}  \end{bmatrix} \succeq \epsilon \begin{bmatrix}1&0 \\ 0 &1 \end{bmatrix}$ (using line \ref{line:h0} in \Cref{alg:sonew}), thus $\det\left(\begin{bmatrix} (H_t)_{ii} &(H_t)_{ii+1}\\(H_t)_{i+1i} & (H_t)_{i+1i+1}  \end{bmatrix}\right) \geq \det\left(\epsilon \begin{bmatrix}1&0 \\ 0 &1 \end{bmatrix}\right) = \epsilon^2$. 
The numerator last inequality can be upperbounded by bounding $(H_{t})_{ii}$ individually as follows:
\begin{align}
    (H_t)_{ii} &= \sum_{s=1}^t (g_s)_i^2/\lambda_s\nonumber\\
    &= \sum_{s=1}^t (g_s)_i^2/\lambda_s\nonumber\\
    &= \sum_{s=1}^t (g_s)_i^2/(G_{\infty}\sqrt{s})\nonumber\\
    &\leq  \sum_{s=1}^t G_{\infty}^2/(G_{\infty}\sqrt{s})\nonumber\\
    &\leq \sum_{s=1}^t \frac{G_{\infty}}{\sqrt{s}}\nonumber\\
    \label{eqn:upperdiaght}
    &\leq 2G_{\infty}\sqrt{t}
\end{align}
Substituting the above in (\ref{eqn:betabound}) gives

\begin{align*}
    1/(1-\beta) &\leq \max_t\frac{8G_{\infty}^2t}{\hat{\epsilon}^2G_{\infty}^2T}\\
    &\leq \frac{8}{\hat{\epsilon}^2}
\end{align*}
\end{proof}

\begin{lemma} [\textit{Upperbound of $T_1$}]
\label{lem:t1}
\begin{equation}
    T_1 \leq \frac{16D_2^2G_{\infty}\sqrt{T}}{\hat{\epsilon}^2\eta},
\end{equation}
 where  $D_2 = \max_{t\in[T]}\lnorm{w_t-w^*}$ and $G_{\infty}  = \max_{t}\infnorm{g_t}$
\end{lemma}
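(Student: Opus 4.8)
\emph{Proof proposal for \cref{lem:t1}.}
The plan is to discard the (nonnegative) second term of $T_1$ and then estimate the first term $\norm{w_1-w^*}_{X_1^{-1}}^2$ by feeding $y=w_1-w^*$ into the quadratic-form bound of \cref{lem:xnorm}, absorbing the resulting factors with the diagonal estimate and the bound on $1/(1-\beta)$ already established in the proof of \cref{lem:beta}.

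Concretely, the first step is to note that $X_T^{-1}\in S_n^{++}$, so $\norm{w_{T+1}-w^*}_{X_T^{-1}}^2\ge 0$ may simply be dropped, giving
\[
T_1 \;\le\; \frac{1}{2\eta}\,\norm{w_1-w^*}_{X_1^{-1}}^2 .
\]
Since \cref{lem:beta} gives $1/(1-\beta)\le 8/\hat\epsilon^2<\infty$, in particular $\beta<1$, so the hypothesis of \cref{lem:xnorm} is met and, with $t=1$ and $y=w_1-w^*$,
\[
\norm{w_1-w^*}_{X_1^{-1}}^2 \;\le\; \lnorm{w_1-w^*}^2\,\lnorm{\diag(H_1)}\,\frac{1+\beta}{1-\beta}.
\]
Then I would bound the three factors separately: $\lnorm{w_1-w^*}^2\le D_2^2$ by definition of $D_2$; $\lnorm{\diag(H_1)}=\max_i (H_1)_{ii}\le 2G_\infty\sqrt{T}$ by the diagonal estimate derived in \cref{eqn:upperdiaght} (specialized to $t=1$, using $\sqrt 1\le\sqrt T$); and $\frac{1+\beta}{1-\beta}\le \frac{2}{1-\beta}\le \frac{16}{\hat\epsilon^2}$, using $1+\beta\le 2$ together with \cref{lem:beta}. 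Substituting back,
\[
T_1 \;\le\; \frac{1}{2\eta}\cdot D_2^2\cdot 2G_\infty\sqrt{T}\cdot \frac{16}{\hat\epsilon^2} \;=\; \frac{16D_2^2G_\infty\sqrt{T}}{\hat\epsilon^2\eta},
\]
which is exactly the claimed bound.

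The whole argument is essentially bookkeeping of constants; the only point requiring a moment's care is verifying $\beta<1$ before invoking \cref{lem:xnorm}, which is precisely what \cref{lem:beta} supplies (one could also note the mild abuse of notation in the statement of $T_1$, where the subtracted term is a squared generalized norm as in \cref{lem:regdecomp}). I do not anticipate any genuine obstacle here.
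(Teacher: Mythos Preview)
Your proposal is essentially the paper's proof: drop the subtracted (nonnegative) term by positive-definiteness of $X_T$, apply \cref{lem:xnorm} with $y=w_1-w^*$, and then combine $\lnorm{w_1-w^*}^2\le D_2^2$, a bound on $\lnorm{\diag(H_1)}$, and \cref{lem:beta} for $(1+\beta)/(1-\beta)\le 16/\hat\epsilon^2$.

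The one caveat is your justification of $\lnorm{\diag(H_1)}\le 2G_\infty\sqrt T$. The chain in \cref{eqn:upperdiaght} begins with $(H_t)_{ii}=\sum_{s=1}^t (g_s)_i^2/\lambda_s$, silently omitting the initialization $H_0=\epsilon I_n$; specializing that to $t=1$ gives only $2G_\infty$, which for large $T$ is far smaller than the actual $(H_1)_{ii}=\epsilon+(g_1)_i^2/\lambda_1$ since $\epsilon=\hat\epsilon G_\infty\sqrt T$. The paper instead bounds $(H_1)_{ii}\le G_\infty^2/\lambda_1+\epsilon=G_\infty+\hat\epsilon G_\infty\sqrt T$ directly from line~\ref{line:ht} of \cref{alg:sonew}, and then uses $\hat\epsilon\le 1$ together with $T\ge 1$ to reach $2G_\infty\sqrt T$. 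Your target inequality is thus correct but not for the reason you cite; once you account for the $\epsilon I_n$ term explicitly, the rest of your argument goes through verbatim and matches the paper.
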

\begin{proof}
Since $X_T$ is positive definite
\begin{align*}
    T_1 &\leq \frac{\norm{w_1-w^*}_{X_1^{-1}}^2}{2\eta}\\
    &= \frac{(y^{(1)})^T X_1^{-1} y^{(1)}}{2\eta}&& (\text{where } y^{(1)} = w_1-w^*)\\
    &\leq \frac{\lnorm{y^{(1)}}^2 \lnorm{\diag(H_1)}}{2\eta}\cdot \frac{1+\beta}{1-\beta} && (\text{\Cref{lem:xnorm}})\\
    &\leq \frac{D_2^2(G_{\infty}^2/\lambda_1+\epsilon)}{2\eta}\cdot \frac{1+\beta}{1-\beta} && (\text{line \ref{line:ht} in \Cref{alg:sonew}})\\
    &\leq \frac{8D_2^2(G_{\infty}^2/\lambda_1+\epsilon)}{\hat{\epsilon}^2\eta} && (\text{Lemma \ref{lem:beta}})\\
    &\leq \frac{8D_2^2(G_{\infty}+\hat{\epsilon}G_{\infty}\sqrt{T})}{\hat{\epsilon}^2\eta} && (\text{Since }\lambda_t=G_{\infty}\sqrt{t}\text{ and }\epsilon=\hat{\epsilon}G_{\infty}\sqrt{T})\\
    &\leq \frac{16D_2^2G_{\infty}\sqrt{T}}{\hat{\epsilon}^2\eta} && (\hat{\epsilon}<1)
\end{align*}

\end{proof}

\noindent

\begin{lemma}[$O(\sqrt{T})$ upperbound on $T_3$]
\label{lem:t3}
\begin{align*}
T_3 = \sum_{t=1}^T\frac{\eta}{2}\cdot g_t^TX_tg_t \leq \frac{4nG_{\infty}\eta}{\hat{\epsilon}^3}\sqrt{T}
\end{align*}

where, $\lVert g_t \rVert_{\infty} \leq  G_{\infty}$ and parameters $\epsilon= \hat{\epsilon}G_{\infty}\sqrt{T}$, $\lambda_t = G_{\infty}\sqrt{t}$ in \Cref{alg:sonew}.
\end{lemma}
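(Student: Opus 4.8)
\emph{Proof plan for \Cref{lem:t3}.} The starting point is to control each summand $g_t^T X_t g_t$ by $\infnorm{X_t}\lnorm{g_t}^2$: since $X_t$ is symmetric and positive definite, its spectral norm is bounded by its maximum absolute row sum, giving the Loewner chain $X_t \preceq \lnorm{X_t} I_n \preceq \infnorm{X_t} I_n$, and $\lnorm{g_t}^2 = \sum_i (g_t)_i^2 \le n\infnorm{g_t}^2 \le nG_\infty^2$. Thus $g_t^T X_t g_t \le nG_\infty^2\,\infnorm{X_t}$, and the whole claim reduces to a \emph{uniform} (in $t$) bound $\infnorm{X_t} = \bigO\!\bigl(1/(\hat{\epsilon}^3 G_\infty\sqrt T)\bigr)$; summing over $t\in[T]$ then gives $T_3 \le \tfrac{\eta}{2}\cdot T\cdot nG_\infty^2\cdot\bigO\!\bigl(1/(\hat{\epsilon}^3 G_\infty\sqrt T)\bigr) = \bigO\!\bigl(nG_\infty\eta\sqrt T/\hat{\epsilon}^3\bigr)$, and I would chase the constants to land on the stated $4nG_\infty\eta\sqrt T/\hat{\epsilon}^3$.

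The core step is therefore bounding $\infnorm{X_t}$. Because $\G$ is a chain, $X_t\in S_n(\G)^{++}$ is tridiagonal, so $\infnorm{X_t} = \max_i\bigl(|(X_t)_{ii}| + |(X_t)_{i,i-1}| + |(X_t)_{i,i+1}|\bigr)$, and I would plug in the explicit entries of the solution of \eqref{eqn:gensubprob} from \Cref{thm:explsoln1} (equivalently the form \eqref{eqn:trisoln}). Let $\beta$ be as in \Cref{lem:xnorm} and \Cref{lem:beta}, i.e. $\beta = \max_t\max_{i\in[n-1]} |(H_t)_{i,i+1}|/\sqrt{(H_t)_{ii}(H_t)_{i+1,i+1}}$. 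Each $2\times2$ denominator appearing in the explicit formula equals $(H_t)_{ii}(H_t)_{jj}-(H_t)_{ij}^2 = (H_t)_{ii}(H_t)_{jj}\bigl(1-\beta_t^2\bigr) \ge (H_t)_{ii}(H_t)_{jj}(1-\beta)$ for the relevant neighbour $j$, from which $|(X_t)_{i,i\pm1}| \le \bigl((1-\beta)\sqrt{(H_t)_{ii}(H_t)_{jj}}\bigr)^{-1}$ and $(X_t)_{ii}\le (H_t)_{ii}^{-1}\bigl(1 + 2/(1-\beta)\bigr) \le 3\bigl((1-\beta)(H_t)_{ii}\bigr)^{-1}$. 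In the analysis $H_t = \epsilon I_n + \sum_{s\le t} g_s g_s^T/\lambda_s \succeq \epsilon I_n$, so every $2\times2$ principal submatrix of $H_t$ is strictly positive definite (this both justifies applying \Cref{thm:explsoln1} and gives $\beta<1$) and $(H_t)_{ii}\ge\epsilon$ for all $i$. Combining these yields $\infnorm{X_t} \le 5/\bigl(\epsilon(1-\beta)\bigr)$, uniformly in $t$.

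It then remains to invoke the two facts already proved earlier: \Cref{lem:beta} gives $1/(1-\beta)\le 8/\hat{\epsilon}^2$, while the parameter choice of \Cref{thm:regbound} sets $\epsilon = \hat{\epsilon}G_\infty\sqrt T$. Substituting, $\infnorm{X_t}\le \bigO\!\bigl(1/(\hat{\epsilon}^3 G_\infty\sqrt T)\bigr)$ for every $t$, and feeding this into the sum from the first paragraph finishes the argument up to tightening the numerical constant.

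The one genuinely non-routine point is recognizing that a $t$-independent bound $\infnorm{X_t}=\bigO(1/\sqrt T)$ is precisely what is wanted: this is exactly what the deliberately large damping $\epsilon=\hat{\epsilon}G_\infty\sqrt T$ buys, since it forces $(H_t)_{ii}\ge\epsilon=\Theta(\sqrt T)$ and hence shrinks the preconditioner, and because there are only $T$ rounds the $\sqrt T$ in the denominator of $\infnorm{X_t}$ cancels against the $T$ summands, leaving the optimal $\bigO(\sqrt T)$. The remaining work — the explicit-entry substitution and the $\hat{\epsilon}$-power bookkeeping (one power from $1/(H_t)_{ii}\le1/\epsilon$, two more from \Cref{lem:beta}) — is mechanical; in fact a slightly more careful estimate that does not route the off-diagonals through $1-\beta$ would give $\hat{\epsilon}^2$ rather than $\hat{\epsilon}^3$, so the stated bound is sharp in $n$ and $T$ but loose in $\hat{\epsilon}$.
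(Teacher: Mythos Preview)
Your proposal is correct and follows essentially the same approach as the paper: both bound $g_t^T X_t g_t$ via the Loewner chain $X_t \preceq \infnorm{X_t} I_n$, plug in the explicit tridiagonal entries from \Cref{thm:explsoln1}, use $(H_t)_{ii}\ge\epsilon$, and finish with \Cref{lem:beta}. The only cosmetic difference is that the paper first rescales to $\hat X_t=\diag(H_t)^{1/2}X_t\diag(H_t)^{1/2}$ and bounds $\infnorm{\hat X_t}\le 2/(1-\beta)$ before applying $\diag(H_t)\succeq\epsilon I_n$, whereas you bound $\infnorm{X_t}$ directly; this changes only the leading constant, not the argument.
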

\begin{proof}
Using \Cref{thm:explsoln1}, nonzero entries of $X_t$ can be written as follows:

\begin{align*}
    (X_t)_{ii} &= \frac{1}{H_{ii}}\left( 1 + \sum_{(i,j) \in E_{\mathcal{G}}} \frac{H_{ij}^2}{H_{ii}H_{jj}-H_{ij}^2} \right)\\
    (X_t)_{ii+1} &= -\frac{H_{ii+1}}{H_{ii}H_{i+1i+1}-H_{ii+1}^2}
\end{align*}

where, $E_{\mathcal{G}}$ denote the set of edges of the chain graph $\mathcal{G}$ in \Cref{thm:explsoln1}. Also, for brevity, the subscript is dropped for $H_t$. Let $\hat{X}_t = \sqrt{\diag(H)}X_t\sqrt{\diag(H)}$, then $\hat{X}_t$ can be written as 

\begin{align*}
    (\hat{X}_t)_{ii} &= \left( 1 + \sum_{(i,j) \in E_{\mathcal{G}}} \frac{\hat{H}_{ij}^2}{1-\hat{H}_{ij}^2} \right),\\
    (\hat{X}_t)_{ii+1} &= -\frac{\hat{H}_{ii+1}}{1-\hat{H}_{ii+1}^2},
\end{align*}

where, $\hat{H}_{ij} = H_{ij}/\sqrt{H_{ii}H_{jj}}$. Note that $\hat{X}_t \preceq \lVert \hat{X}_t \rVert_2 I_n\preceq \lVert \hat{X}_t \rVert_{\infty} I_n$, using $\max \{ |\lambda_1(\hat{X}_t)|, \ldots, |\lambda_n(\hat{X}_t)| \} \leq \lVert \hat{X}_t \rVert_{\infty} $ (property of spectral radius). So we upperbound  $\lVert \hat{X}_t \rVert_{\infty} = \max_{i\in[n]} \{|(\hat{X}_t)_{11}|+|(\hat{X}_t)_{12}|,\ldots,|(\hat{X}_t)_{ii-1}|+|(\hat{X}_t)_{ii}|+|(\hat{X}_t)_{ii+1}|,\ldots, |(\hat{X}_t)_{nn}|+|(\hat{X}_t)_{nn-1}|\}$ next. Individual terms $|(\hat{X}_t)_{ii-1}|+|(\hat{X}_t)_{ii}|+|(\hat{X}_t)_{ii+1}|$ can be written as follows:

\begin{align*}
    \sum_{(i,j)\in E_{\mathcal{G}}}|(\hat{X}_t)_{ij}| &= 1 + \sum_{(i,j)\in E_{\mathcal{G}}}\frac{\hat{H}_{ij}^2}{1-\hat{H}_{ij}^2} + \frac{|\hat{H}_{ij}|}{1-\hat{H}_{ij}^2}\\
    &= 1 + \sum_{(i,j)\in E_{\mathcal{G}}}\frac{|\hat{H}_{ij}|}{1-|\hat{H}_{ij}|} \\
    &\leq 2 \max_{i\in[n-1]} \frac{1}{1-|\hat{H}_{ii+1}|}
\end{align*}
The last inequality is because $|\hat{H}_{ij}|\leq 1$. Thus, $\lVert \hat{X}_t\rVert_{\infty} \leq 2 \max_{i\in[n-1]} \frac{1}{1-|\hat{H}_{ii+1}|}$. Now 
\begin{align*}
    g_t^TX_tg_t &\leq g_t^T\diag(H_t)^{-1/2} \hat{X}_t\diag(H_t)^{-1/2}g_t\\
    &\leq \lVert \hat{X}_t \rVert_{\infty}\lVert \diag(H_t)^{-1/2}g_t \rVert_2^2&& \inparen{\lnorm{\hat{X}_t}\leq \norm{\hat{X}_t}_{\infty}}\\
    &\leq 2 \max_{i\in[n-1]} \frac{1}{1-|\hat{H}_{ii+1}|}g_t^T\diag(H_t)^{-1}g_t .
\end{align*}
Using $\diag(H_t)\succeq \epsilon I_n$ (step 1 in Algorithm 1), where $\epsilon = \hat{\epsilon}G_{\infty}\sqrt{T}$ as set in Lemma A.8, gives 
\begin{align}
    g_t^TX_tg_t &\leq 2 \max_{i\in[n-1]} \frac{1}{1-|\hat{H}_{ii+1}|} \frac{\lVert g_t \rVert_2^2}{\hat{\epsilon}G_{\infty} \sqrt{T}} \nonumber\\
    &\leq 2 \max_{i\in[n-1]} \frac{nG_{\infty}}{\hat{\epsilon}(1-|\hat{H}_{ii+1}|)\sqrt{T}}\nonumber\\
    &\leq \frac{2nG_{\infty}}{\hat{\epsilon}(1-\beta)\sqrt{T}} && (\text{where }\beta=\max_{t\in[T]}\max_{i\in[n-1]}\inabs{(\hat{H}_t)_{ii+1}})\nonumber
\end{align}

 Summing up over $t$ gives
\begin{align*}
    \sum_t \frac{\eta}{2} g_t^TX_tg_t &\leq \sum_t \frac{16nG_{\infty}\eta}{\hat{\epsilon}^3\sqrt{T}} && (\text{Using \Cref{lem:beta}} )\\
    &\leq \frac{16nG_{\infty}\eta}{\hat{\epsilon}^3}\sqrt{T} 
\end{align*}


\end{proof}

\subsubsection{$\bigO(\sqrt{T})$ Regret}
\label{subsec:regupper}
In this section we derive a regret upper bound with a $\bigO(T^{1/2})$ growth. For this, we upper bound $T_2$ as well in this section. In  ~(\ref{eqn:T2term}), $T_2 = \sum_{t=2}^{T}(w_{t} - w^*)^T(X_{t}^{-1}-X_{t-1}^{-1})(w_{t} - w^*)$ can be upper bounded to a $\bigO(T^{1/2})$ by upperbounding entries of $X_{t}^{-1}-X_{t-1}^{-1}$ individually. The following lemmas constructs a telescoping argument to bound $\inabs{(X_{t}^{-1}- X_{t-1}^{-1})_{i,j}}$.

\renewcommand{\tH}{\tilde{H}}
\renewcommand{\tN}{\tilde{N}}
\newcommand{\tHiil}{\tH_{ii+1}}
\newcommand{\tHilil}{\tH_{i+1i+1}}
\newcommand{\tHilit}{\tH_{i+1i+2}}
\newcommand{\Hiil}{H_{ii+1}}
\newcommand{\Hilil}{H_{i+1i+1}}
\newcommand{\Hilit}{H_{i+1i+2}}

\begin{lemma}
\label{lem:diffN}
Let $H,\tH \in S_n^{++}$, such that $\tH = H+gg^T/\lambda$, where $g\in \R^{n}$, then 
\begin{align*}
    &\frac{\tH_{ij}}{\sqrt{\tH_{ii}\tH_{jj}}} - \frac{H_{ij}}{\sqrt{H_{ii}H_{jj}}}\\
    &= \frac{g_i g_j}{\lambda\sqrt{\tH_{ii}\tH_{jj}}} +  \frac{H_{ij}}{\sqrt{H_{ii}H_{jj}}} \inparen{\sqrt{\frac{H_{ii}H_{jj}}{\tH_{ii}\tH_{jj}}}-1} = \theta_{ij}
\end{align*}
\end{lemma}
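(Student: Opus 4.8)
The plan is to treat this as a direct algebraic identity produced by expanding the rank-one update entrywise; no inequalities or spectral facts are needed, only bookkeeping. First I would record the three scalar consequences of $\tilde H = H + gg^T/\lambda$, namely
\[
\tilde H_{ij} = H_{ij} + \frac{g_i g_j}{\lambda},\qquad \tilde H_{ii} = H_{ii} + \frac{g_i^2}{\lambda},\qquad \tilde H_{jj} = H_{jj} + \frac{g_j^2}{\lambda},
\]
and note that since both $H$ and $\tilde H$ lie in $S_n^{++}$, all four diagonal entries appearing under square roots are strictly positive, so every quotient below is well defined (this also covers the case $i=j$).

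Next I would split the first term on the left-hand side using $\tilde H_{ij} = H_{ij} + g_i g_j/\lambda$, which gives
\[
\frac{\tilde H_{ij}}{\sqrt{\tilde H_{ii}\tilde H_{jj}}} = \frac{g_i g_j}{\lambda\sqrt{\tilde H_{ii}\tilde H_{jj}}} + \frac{H_{ij}}{\sqrt{\tilde H_{ii}\tilde H_{jj}}}.
\]
Subtracting $H_{ij}/\sqrt{H_{ii}H_{jj}}$ from both sides isolates the first claimed summand and leaves the residual $H_{ij}\bigl(\tfrac{1}{\sqrt{\tilde H_{ii}\tilde H_{jj}}} - \tfrac{1}{\sqrt{H_{ii}H_{jj}}}\bigr)$. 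Finally I would pull $\tfrac{1}{\sqrt{H_{ii}H_{jj}}}$ out of this residual to rewrite it as $\tfrac{H_{ij}}{\sqrt{H_{ii}H_{jj}}}\bigl(\sqrt{\tfrac{H_{ii}H_{jj}}{\tilde H_{ii}\tilde H_{jj}}} - 1\bigr)$, which is exactly the second claimed summand; collecting the two pieces yields the displayed formula, and that expression is defined to be $\theta_{ij}$.

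I do not expect any genuine obstacle: the content of the lemma is organizational — it names the increment in the normalized off-diagonal entry $(\hat H_t)_{ij} = H_{ij}/\sqrt{H_{ii}H_{jj}}$ and cleanly separates it into a ``new gradient'' contribution $g_i g_j/(\lambda\sqrt{\tilde H_{ii}\tilde H_{jj}})$ and a ``rescaling'' contribution coming from the change in the diagonal normalizers, so that in the subsequent telescoping bound on $T_2$ these two effects can be estimated independently. The only points requiring a word of care are that the identity is asserted for every index pair $(i,j)$ and that the denominators never vanish, both immediate from $H,\tilde H\in S_n^{++}$.
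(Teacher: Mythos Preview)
Your proposal is correct and follows essentially the same approach as the paper: both proofs use $\tilde H_{ij}=H_{ij}+g_ig_j/\lambda$ and then separate the $H_{ij}$ contribution by factoring so that $\sqrt{H_{ii}H_{jj}}/\sqrt{\tilde H_{ii}\tilde H_{jj}}$ appears. The only cosmetic difference is the order of operations (the paper pulls out $1/\sqrt{H_{ii}H_{jj}}$ first and then expands $\tilde H_{ij}$, whereas you expand $\tilde H_{ij}$ first and then factor), which is immaterial.
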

\begin{proof}
\begin{align*}
    &\frac{\tH_{ij}}{\sqrt{\tH_{ii}\tH_{jj}}} - \frac{H_{ij}}{\sqrt{H_{ii}H_{jj}}} \\
    &= \frac{1}{\sqrt{H_{ii}H_{jj}}}(\tH_{ij}\frac{\sqrt{H_{ii}H_{jj}}}{\sqrt{\tH_{ii}\tH_{jj}}} - H_{ij})\\
    &= \frac{1}{\sqrt{H_{ii}H_{jj}}}\inparen{g_i g_j\frac{\sqrt{H_{ii}H_{jj}}}{\sqrt{\tH_{ii}\tH_{jj}}} + H_{ij} \inparen{\frac{\sqrt{H_{ii}H_{jj}}}{\sqrt{\tH_{ii}\tH_{jj}}}-1}}
\end{align*}
\end{proof}
The following Lemma bounds the change in the inverse of preconditioner $Y^{-1}$, when there is a rank one perturbation to $H\succ 0$ in following LogDet problem (\ref{eqn:gensubprob}) :
\begin{align*}
    Y&= \argmin_{X\in S_n(\G)^{++}} -\log \pdet{X}+ \Tr(XH)\\
    &= \argmin_{X\in S_n(\G)^{++}} \logdetdiv(X,H)
\end{align*}
\begin{lemma}[\textit{Rank 1 perturbation of LogDet problem (\ref{eqn:gensubprob})}]
\label{lem:tY_Y}
Let $H,\tilde{H}\in S_n^{++}$, such that $\tH = H+gg^T/\lambda$, where $g\in \R^{n}$. Also, $\tilde{Y} = \argmin_{X\in S_n(\G)^{++}} \logdetdiv(X,\tH)$ and 
$Y = \argmin_{X\in S_n(\G)^{++}} \logdetdiv(X,H)$, where $\G$ is a chain graph, then
\begin{equation*}
    \inabs{(\tilde{Y}^{-1} - Y^{-1})_{ii+k}} \leq  G_{\infty}^2\kappa (k\beta+ k+2)\beta^{k-1}/\lambda ,
\end{equation*}
where $i,i+k \leq n$, $G_{\infty} = \norm{g}_{\infty}$ and $\max_{i,j} |H_{ij}|/\sqrt{H_{ii}H_{jj}}\leq \beta < 1$. Let $ \kappa(\diag(H))\coloneqq$ condition number of the diagonal part of $H$, then $\kappa \coloneqq \max (\kappa(\diag(H)),\kappa(\diag(\tilde{H})))$. 
\end{lemma}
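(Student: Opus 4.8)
The plan is to reduce everything to the closed form for the inverse tridiagonal preconditioner in \Cref{lem:invxhat}, rewritten in a \emph{normalized} way. Setting $\hat H_{ij}\coloneqq H_{ij}/\sqrt{H_{ii}H_{jj}}$ and substituting $H_{i+l,i+l+1}=\hat H_{i+l,i+l+1}\sqrt{H_{i+l,i+l}H_{i+l+1,i+l+1}}$ into \Cref{eqn:invxhat}, the product of the $k$ off-diagonal factors contributes $\sqrt{H_{ii}}$ and $\sqrt{H_{i+k,i+k}}$ at the two ends and one copy of $H_{i+m,i+m}$ for each interior index $m$; the interior copies cancel the denominator of \Cref{eqn:invxhat}, leaving
\[
(Y^{-1})_{i,i+k}=\sqrt{H_{ii}H_{i+k,i+k}}\;\prod_{l=0}^{k-1}\hat H_{i+l,i+l+1},\qquad (\tilde Y^{-1})_{i,i+k}=\sqrt{\tilde H_{ii}\tilde H_{i+k,i+k}}\;\prod_{l=0}^{k-1}\hat{\tilde H}_{i+l,i+l+1}.
\]
Writing $A=\sqrt{\tilde H_{ii}\tilde H_{i+k,i+k}}$, $B=\sqrt{H_{ii}H_{i+k,i+k}}$, $P=\prod_l\hat{\tilde H}_{i+l,i+l+1}$, $Q=\prod_l\hat H_{i+l,i+l+1}$, I would estimate $(\tilde Y^{-1}-Y^{-1})_{i,i+k}=AP-BQ=(A-B)P+B(P-Q)$ by bounding the two summands separately. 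Throughout, $\beta$ denotes a common bound on the normalized off-diagonals of both $H$ and $\tilde H$ (which is what is available where the lemma is applied, since there $\beta$ bounds every $H_t$), so $|P|\le\beta^k$ and $|\hat H_{i+l,i+l+1}|,|\hat{\tilde H}_{i+l,i+l+1}|\le\beta<1$.

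For the first summand, since $\tilde H_{jj}=H_{jj}+g_j^2/\lambda\ge H_{jj}$, I would split $A-B\le\sqrt{\tilde H_{ii}}\,(\sqrt{\tilde H_{i+k,i+k}}-\sqrt{H_{i+k,i+k}})+\sqrt{H_{i+k,i+k}}\,(\sqrt{\tilde H_{ii}}-\sqrt{H_{ii}})$, use $\sqrt{\tilde H_{jj}}-\sqrt{H_{jj}}=\tfrac{g_j^2/\lambda}{\sqrt{\tilde H_{jj}}+\sqrt{H_{jj}}}\le\tfrac{g_j^2}{2\lambda\sqrt{H_{jj}}}$, the bound $|g_j|\le G_\infty$, and the fact that the rank-one update changes each diagonal by at most a bounded factor, to absorb the resulting ratios $\sqrt{H_{i+k,i+k}/H_{ii}}$, $\sqrt{\tilde H_{ii}/H_{i+k,i+k}}$ into $\kappa$ (using $\sqrt\kappa\le\kappa$); the leftover terms carrying extra factors $\lambda^{-1}$ are dominated. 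Together with $|P|\le\beta^k\le\beta^{k-1}$ this gives $|(A-B)P|=O(\kappa G_\infty^2\beta^{k-1}/\lambda)$ with a small constant.

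For the second summand, write $\hat{\tilde H}_{i+l,i+l+1}=\hat H_{i+l,i+l+1}+\theta_l$ with $\theta_l=\theta_{i+l,i+l+1}$ as in \Cref{lem:diffN}, and apply the telescoping identity $\prod_l x_l-\prod_l y_l=\sum_{l}(\prod_{m<l}x_m)(x_l-y_l)(\prod_{m>l}y_m)$; bounding every partial product with one factor omitted by $\beta^{k-1}$ yields $|P-Q|\le\beta^{k-1}\sum_{l=0}^{k-1}|\theta_l|$. It then remains to bound $B\,|\theta_l|$: for the first term of $\theta_l$ from \Cref{lem:diffN}, bound $B/\sqrt{\tilde H_{i+l,i+l}\tilde H_{i+l+1,i+l+1}}\le\kappa$ and $|g_{i+l}g_{i+l+1}|\le G_\infty^2$ to get $\le\kappa G_\infty^2/\lambda$; for the second term, use $1-\sqrt{H_{jj}/\tilde H_{jj}}\le g_j^2/(\lambda H_{jj})$ — hence $1-\sqrt{\tfrac{H_{i+l,i+l}H_{i+l+1,i+l+1}}{\tilde H_{i+l,i+l}\tilde H_{i+l+1,i+l+1}}}\le\tfrac{g_{i+l}^2}{\lambda H_{i+l,i+l}}+\tfrac{g_{i+l+1}^2}{\lambda H_{i+l+1,i+l+1}}$ — together with $|\hat H_{i+l,i+l+1}|\le\beta$ and $B/H_{jj}\le\kappa$, to get $\le 2\kappa\beta G_\infty^2/\lambda$. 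Summing over the $k$ indices $l$ and adding the $(A-B)P$ bound produces the claimed $\kappa G_\infty^2(k\beta+k+2)\beta^{k-1}/\lambda$.

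The content is bookkeeping rather than a new idea, and that is exactly where the difficulty lies: (i) extracting the clean normalized product form from \Cref{eqn:invxhat} via the end-point-versus-interior cancellation; (ii) ensuring that every ratio of (perturbed or unperturbed) diagonal entries that appears when the prefactor $\sqrt{H_{ii}H_{i+k,i+k}}$ is pushed past the various $\tilde H_{jj}$, $H_{jj}$ is uniformly controlled by $\kappa=\max(\kappa(\diag(H)),\kappa(\diag(\tilde H)))$ (the place where one must use that the rank-one update is not a large relative perturbation of the diagonal); and (iii) keeping the telescoping estimate tight enough that the polynomial in $k,\beta$ comes out exactly as $k\beta+k+2$ and the geometric factor as $\beta^{k-1}$, absorbing the stray extra power of $\beta$ from the $(A-B)P$ term using $\beta<1$. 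A minor point to state explicitly is that $|P|\le\beta^k$ and the telescoped partial-product bounds use $\beta$ as a bound on the normalized off-diagonals of $\tilde H$ as well, which is legitimate in the setting where \Cref{lem:tY_Y} is invoked.
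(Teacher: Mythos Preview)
Your plan is essentially the paper's: derive the normalized product form from \Cref{lem:invxhat}, telescope using $\theta_l$ from \Cref{lem:diffN}, and absorb all diagonal ratios into $\kappa$. The paper factors the difference as $A\bigl(P-(B/A)Q\bigr)$ and telescopes inside the bracket, obtaining $\sum_l\bigl[\prod_{m<l}N_m\bigr]\theta_l\bigl[\prod_{m>l}\tilde N_m\bigr]+Q\,(1-B/A)$; your split $(A-B)P+B(P-Q)$ is an algebraically equivalent rearrangement, with your $(A-B)P$ playing the role of the paper's residual $A\cdot Q\,(1-B/A)$.

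The gap is in how you bound $A-B$. By pulling out $A=\sqrt{\tilde H_{ii}\tilde H_{i+k,i+k}}$ uniformly, the paper ensures every diagonal ratio that arises is either a pure $\tilde H$-ratio ($\le\kappa(\diag\tilde H)\le\kappa$) or of the form $H_{jj}/\tilde H_{jj}\le 1$. Your split of $A-B$ combined with $\sqrt{\tilde H_{jj}}-\sqrt{H_{jj}}\le g_j^2/(2\lambda\sqrt{H_{jj}})$ produces the \emph{mixed} ratio $\sqrt{\tilde H_{ii}/H_{i+k,i+k}}$, which the lemma's hypotheses do not control: take $H=I_2$, $g=(10,10)^T$, $\lambda=1$, so $\diag(\tilde H)=101\,I_2$, $\kappa=1$, yet $\sqrt{\tilde H_{11}/H_{22}}=\sqrt{101}$. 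Your appeal to ``the rank-one update changes each diagonal by at most a bounded factor'' imports an assumption not in the statement. The one-line fix is to estimate $A-B=A(1-B/A)$ directly with $1-B/A\le\max\bigl(g_i^2/(\lambda\tilde H_{ii}),\,g_{i+k}^2/(\lambda\tilde H_{i+k,i+k})\bigr)$, giving the pure-$\tilde H$ ratio $A/\tilde H_{jj}\le\sqrt\kappa$; this is exactly how the paper handles its residual term. A secondary arithmetic point: your sum estimate $1-\sqrt{ab}\le(1-a)+(1-b)$ contributes $2\beta$ per $\theta_l$ and hence $k(1+2\beta)$, whereas the paper uses the sharper $1-\sqrt{ab}\le\max(1-a,1-b)$ to land on the stated $k\beta+k$ before adding the residual.
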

\begin{proof}
Using \Cref{lem:invxhat} will give the following:
\begin{align*}
    &\inabs{(\tilde{Y}^{-1} - Y^{-1})_{ii+k}} \\
    &= \inabs{\frac{\tH_{ii+1}\ldots \tH_{i+k-1 i+k}}{\tH_{i+1i+1}\ldots \tH_{i+k-1i+k-1}} - \frac{H_{ii+1}\ldots H_{i+k-1 i+k}}{H_{i+1i+1}\ldots H_{i+k-1i+k-1}}}\\
    &=  |\sqrt{\tH_{ii}}\tN_{ii+1}\ldots \tN_{i+k-1i+k}\sqrt{\tH_{i+ki+k}} \\
    &- \sqrt{H_{ii}}N_{ii+1}\ldots N_{i+k-1i+k}\sqrt{H_{i+ki+k}}|\\
    &= \sqrt{\tH_{ii}\tH_{i+ki+k}}
    \inabs{\tN_{ii+1}\ldots \tN_{i+k-1i+k}
     - N_{ii+1}\ldots N_{i+k- 1i+k} \sqrt{H_{ii}H_{i+ki+k}/\tH_{ii}\tH_{i+ki+k}} }
\end{align*}
where $N_{ij} = H_{ij}/\sqrt{H_{ii}H_{jj}}<1$ (Since determinants of 2x2 submatrices of H are positive). Expanding $\tN_{ii+1} = N_{ii+1} +\theta_{ii+1}$ (from \Cref{lem:diffN}), subsequently $\tN_{ii+2} = N_{ii+2} +\theta_{ii+2}$ and so on  will give 
\begin{align*}
    &\left |\tN_{ii+1}\ldots \tN_{i+k-1i+k} 
     - N_{ii+1}\ldots N_{i+k-1i+k}\sqrt{\frac{H_{ii}H_{i+ki+k}}{\tH_{ii}\tH_{i+ki+k}}}\right | = \\
    & \inabs{\theta_{ii+1}\tN_{i+1i+2}\ldots \tN_{i+k-1i+k} 
    +N_{ii+1} \left(\tN_{i+1i+2}\ldots \tN_{i+k-1i+k} - N_{i+1i+2}\ldots N_{i+k-1i+k}\sqrt{\frac{H_{ii}H_{i+ki+k}}{\tH_{ii}\tH_{i+ki+k}}} \right)}\\
\end{align*}
\begin{align*}
    &=|\theta_{ii+1}\tN_{i+1i+2}\ldots \tN_{i+k-1i+k} 
    +N_{ii+1}\theta_{i+1i+2}\tN_{ii+3}\ldots \tN_{i+k-1i+k}+ 
    \dots + N_{ii+1}\ldots N_{ii+k-1}\theta_{i+k-1i+k}\\
    &+ N_{ii+1}\ldots N_{ii+k} \inparen{1- \sqrt{\frac{H_{ii}H_{i+ki+k}}{\tH_{ii}\tH_{i+ki+k}}}}|\\
    &\leq (\sum_{l=0}^{k-1} |\theta_{i+li+l+1}|)\beta^{k-1}+\beta^{k-1}\inabs{1- \sqrt{\frac{H_{ii}H_{i+ki+k}}{\tH_{ii}\tH_{i+ki+k}}}},\\
    &\implies \inabs{(\tilde{Y}^{-1} - Y^{-1})_{ii+k}} 
    \leq \sqrt{\tH_{ii}\tH_{i+ki+k}}\cdot
    \inparen{(\sum_{l=0}^{k-1} |\theta_{i+li+l+1}|)\beta^{k-1}+\beta^{k-1}\inabs{1- \sqrt{\frac{H_{ii}H_{i+ki+k}}{\tH_{ii}\tH_{i+ki+k}}}}}
\end{align*}
where $\max_{i,j}|N_{i,j}|,\ \max_{i,j} |\tN_{i,j}|\leq \beta <1 $.  Expanding $\theta_{i+li+l+1}$ from \Cref{lem:diffN} in the term $\inabs{\theta_{i+li+l+1}}\sqrt{\tH_{ii}\tH_{i+ki+k}}$ will give:
\begin{align*}
    &\inabs{\theta_{i+li+l+1}}\sqrt{\tH_{ii}\tH_{i+ki+k}} \\
    &= \inabs{\sqrt{\tH_{ii}\tH_{i+ki+k}}\frac{g_{i+l} g_{i+l+1}}{\lambda\sqrt{\tH_{i+li+l}\tH_{i+l+1i+l+1}}}  +  
    \sqrt{\tH_{ii}\tH_{i+ki+k}}N_{i+li+l+1} \inparen{\sqrt{\frac{H_{i+li+l}H_{i+l+1i+l+1}}{\tH_{i+li+l}\tH_{i+l+1i+l+1}}}-1}}\\
    &\leq\inabs{\sqrt{\tH_{ii}\tH_{i+ki+k}}\frac{g_{i+l} g_{i+l+1}}{\lambda\sqrt{\tH_{i+li+l}\tH_{i+l+1i+l+1}}}} +  \inabs{\sqrt{\tH_{ii}\tH_{i+ki+k}}N_{i+li+l+1} \inparen{1-\sqrt{\frac{H_{i+li+l}H_{i+l+1i+l+1}}{\tH_{i+li+l}\tH_{i+l+1i+l+1}}}}}
\end{align*}
Since $H_{i+li+l}H_{i+l+1i+l+1} \leq \tH_{i+li+l}\tH_{i+l+1i+l+1}$,
\begin{align*}
    1- \sqrt{\frac{H_{i+li+l}H_{i+l+1i+l+1}}{\tH_{i+li+l}\tH_{i+l+1i+l+1}}} 
    &\leq \max\inparen{1- \frac{H_{i+li+l}}{\tH_{i+li+l}}, 1- \frac{H_{i+l+1i+l+1}}{\tH_{i+l+1i+l+1}}}\\
    &\leq \max\inparen{\frac{g_{i+l}^2}{\lambda\tilde{H}_{i+li+l}}, \frac{g_{i+l+1}^2}{\lambda\tilde{H}_{i+l+1i+l+1}}}
\end{align*}
Using the above, $H_{i,i}/H_{j,j}\leq \kappa$, and $|g_i|\leq G_{\infty}$, $\forall i,j\in [n]$, gives
\begin{align*}
    \sqrt{\tH_{ii}\tH_{i+ki+k}} |\theta_{i+li+l+1}| &\leq G_{\infty}^2 \kappa/\lambda + \beta G_{\infty}^2 \kappa/\lambda\\
    &\leq G_{\infty}^2\kappa (1+\beta)/\lambda
\end{align*}
Thus the following part of $\inabs{\inparen{\tilde{Y}^{-1}-Y^{-1}}_{ii+k}}$ can be upperbounded:
\begin{align*}
    &\sqrt{\tH_{ii}\tH_{i+ki+k}} \inparen{(\sum_{l=0}^{k-1} |\theta_{i+li+l+1}|)\beta^{k-1}} 
    \leq G_{\infty}^2\kappa (1+\beta) k\beta^{k-1}/\lambda 
\end{align*}
Also, $\sqrt{\tH_{ii}\tH_{i+ki+k}}\beta^{k-1}\inabs{1- \sqrt{\frac{H_{ii}H_{i+ki+k}}{\tH_{ii}\tH_{i+ki+k}}}} \leq \beta^{k-1}\kappa G_{\infty}^2/\lambda$, so 
\begin{align*}
    \inabs{\inparen{\tilde{Y}^{-1}-Y^{-1}}_{ii+k}} \leq G_{\infty}^2\kappa (k\beta+ k+2)\beta^{k-1} /\lambda
\end{align*}
\end{proof}
\begin{lemma}[$\bigO(\sqrt{T})$ upper bound of $T_2$]
\label{lem:t2upper}
Given that $\kappa(\diag(H_t))\leq \kappa$, $ \lnorm{w_t-w^*}\leq D_2$, $\max_{i,j} |(H_t)_{ij}|/\sqrt{(H_t)_{ii}(H_t)_{jj}}\leq \beta <1$, $\forall t\in[T]$ in \Cref{alg:sonew}, then $T_2$ in \cref{subsubsec:regdecomp} can be bounded as follows:
\begin{align*}
    T_2\leq \frac{2048\sqrt{T}}{\eta\hat{\epsilon}^5}(G_{\infty}D_2^2)
\end{align*}
where $\lambda_t = G_{\infty}\sqrt{t}$, and $\epsilon=\hat{\epsilon}G_{\infty}\sqrt{T}$ in \cref{alg:sonew}, and $\hat{\epsilon}\leq 1$ is a constant.
\end{lemma}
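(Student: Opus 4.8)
\textbf{Proof strategy for \Cref{lem:t2upper}.} The plan is to reduce $T_2$ to a sum of operator norms of the increments $X_{t+1}^{-1}-X_t^{-1}$ and then control each increment using the geometric off-diagonal decay established in \Cref{lem:tY_Y}. Since $X_{t+1}^{-1}-X_t^{-1}$ is symmetric, for each $t$ I would bound the quadratic form in $T_2$ by
$(w_{t+1}-w^*)^T(X_{t+1}^{-1}-X_t^{-1})(w_{t+1}-w^*)\le \lnorm{w_{t+1}-w^*}^2\,\lnorm{X_{t+1}^{-1}-X_t^{-1}} \le D_2^2\,\infnorm{X_{t+1}^{-1}-X_t^{-1}}$,
using the standard fact $\rho(M)=\lnorm{M}\le\infnorm{M}$ for symmetric $M$ (the same device used in \Cref{lem:t3}). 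Hence $T_2 \le \tfrac{D_2^2}{2\eta}\sum_{t=1}^{T-1}\infnorm{X_{t+1}^{-1}-X_t^{-1}}$, and the problem becomes bounding the maximum absolute row sum of $X_{t+1}^{-1}-X_t^{-1}$.

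For the row-sum bound I would split a row into its diagonal entry and the two geometrically decaying tails. The diagonal entry is immediate from \Cref{lem:invxhat}: $(X_t^{-1})_{ii}=(H_t)_{ii}$, so $\inabs{(X_{t+1}^{-1}-X_t^{-1})_{ii}}=(g_{t+1})_i^2/\lambda_{t+1}\le G_{\infty}^2/\lambda_{t+1}$, since $H_{t+1}=H_t+g_{t+1}g_{t+1}^T/\lambda_{t+1}$. For the off-diagonal entries I would apply \Cref{lem:tY_Y} with $H=H_t$, $\tilde H=H_{t+1}$, $Y=X_t$, $\tilde Y=X_{t+1}$, $\lambda=\lambda_{t+1}$, giving $\inabs{(X_{t+1}^{-1}-X_t^{-1})_{i,i+k}}\le G_{\infty}^2\kappa\,(k\beta+k+2)\beta^{k-1}/\lambda_{t+1}$ for $k\ge1$, and the same bound for the $k\le-1$ side by symmetry. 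Summing the series $\sum_{k\ge1}(k\beta+k+2)\beta^{k-1}=(1+\beta)/(1-\beta)^2+2/(1-\beta)\le 4/(1-\beta)^2$ and adding the diagonal term yields $\infnorm{X_{t+1}^{-1}-X_t^{-1}}\le C\,G_{\infty}^2\kappa/\bigl(\lambda_{t+1}(1-\beta)^2\bigr)$ for an absolute constant $C$ (one may take $C=9$, absorbing the diagonal contribution since $\kappa\ge1$ and $1/(1-\beta)^2\ge1$).

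Next I would sum over $t$: with $\lambda_{t+1}=G_{\infty}\sqrt{t+1}$ one has $\sum_{t=1}^{T-1}1/\lambda_{t+1}\le \tfrac1{G_{\infty}}\sum_{t=2}^{T}t^{-1/2}\le 2\sqrt{T}/G_{\infty}$, so $T_2\le C\,D_2^2 G_{\infty}\kappa\sqrt{T}/\bigl(\eta(1-\beta)^2\bigr)$. It remains to trade the problem-dependent quantities $\kappa$ and $1/(1-\beta)$ for the damping constant $\hat\epsilon$. By \Cref{eqn:upperdiaght} together with the initialization $H_0=\epsilon I_n$, $\epsilon=\hat\epsilon G_{\infty}\sqrt T$, the diagonal of $H_t$ satisfies $\epsilon\le(H_t)_{ii}\le 2G_{\infty}\sqrt t\le 2G_{\infty}\sqrt T$, so $\kappa=\kappa(\diag(H_t))\le 2/\hat\epsilon$ (and likewise for $\diag(H_{t+1})$, matching the definition of $\kappa$ in \Cref{lem:tY_Y}); and \Cref{lem:beta} gives $1/(1-\beta)\le 8/\hat\epsilon^2$, hence $1/(1-\beta)^2\le 64/\hat\epsilon^4$. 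Substituting yields $T_2\le \tfrac{2048\sqrt T}{\eta\hat\epsilon^5}(G_{\infty}D_2^2)$ after a generous choice of constant.

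\textbf{Main obstacle.} The delicate part is the second step: although $X_t^{-1}$ is a dense matrix, its off-diagonal entries decay geometrically at rate $\beta<1$ via the product formula of \Cref{lem:invxhat}, and the \emph{difference} $X_{t+1}^{-1}-X_t^{-1}$ must be shown to inherit a decay of the form (polynomial in $k$)$\times\beta^{k-1}$ with the right $1/\lambda_{t+1}$ prefactor — this is precisely the content of \Cref{lem:tY_Y}. Granting that lemma, the remaining work is the bookkeeping: summing the geometric-type series, handling both tails and the diagonal uniformly, and converting $\kappa,\beta$ into powers of $\hat\epsilon$ through \Cref{lem:beta} and \Cref{eqn:upperdiaght}.
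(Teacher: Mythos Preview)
Your proposal is correct and follows essentially the same approach as the paper: bound each quadratic form by $D_2^2\lnorm{X_{t+1}^{-1}-X_t^{-1}}$, pass to $\infnorm{\cdot}$ via the spectral-radius inequality, control the row sums entrywise using \Cref{lem:tY_Y}, sum the resulting geometric series in $k$, and finally convert $\kappa$ and $1/(1-\beta)$ into powers of $\hat\epsilon$ via \Cref{lem:beta} and \Cref{eqn:upperdiaght}. Your write-up is in fact more careful than the paper's own (rather terse) proof: you handle the diagonal entry $(X_{t+1}^{-1}-X_t^{-1})_{ii}=(g_{t+1})_i^2/\lambda_{t+1}$ separately (which \Cref{lem:tY_Y} as stated does not cover), and you make the series summation and the constant tracking explicit.
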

\begin{proof}
Note that  $T_2=\frac{1}{2\eta}\cdot \sum_{t=1}^{T-1} (w_{t+1} - w^*)^T(X_{t+1}^{-1}-X_{t}^{-1})(w_{t+1} - w^*)\leq \sum_{t=1}^{T-1} D_2^2 \lnorm{(X_{t+1}^{-1}-X_{t}^{-1})}/(2\eta)$. Using $\lnorm{A} = \rho(A)\leq \infnorm{A}$ for symmetric matrices $A$, we get 
\begin{align*}
    \lnorm{X_{t+1}^{-1}-X_{t}^{-1}} &\leq \infnorm{X_{t+1}^{-1}-X_{t}^{-1}}\\
    &= \max_i (\sum_j\inabs{(X_{t+1}^{-1}-X_{t}^{-1})_{ij}})\\
    &\leq  16\frac{G_{\infty} \kappa}{\sqrt{t}(1-\beta)^2} &&(\text{Lemma \ref{lem:tY_Y} })\\
    &\leq 1024\cdot\frac{G_{\infty} \kappa}{\sqrt{t}\hat{\epsilon}^4}
\end{align*}
Now using $\kappa \leq 2/\hat{\epsilon}$ (using  \Cref{eqn:upperdiaght} and $(H_t)_{ii}>\hat{\epsilon}$) and summing up terms in $T_2$ using the above will give the result. 
\end{proof}

\noindent
{Putting together $T_1$, $T_2$ and $T_3$ from } {\Cref{lem:t1}, \Cref{lem:t2upper} and \Cref{lem:t3} respectively, when $\epsilon$, $\lambda_t$ are defined as in \Cref{lem:t2upper}:} \\
\begin{align}
T_1 \leq &\frac{16D_2^2G_{\infty}\sqrt{T}}{\hat{\epsilon}^2\eta} \nonumber ,\\
T_2\leq &\frac{2048\sqrt{T}}{\eta\hat{\epsilon}^5}(G_{\infty}D_2^2)\\
\label{eqn:T3final}
T_3&\leq \frac{4nG_{\infty}\eta}{\hat{\epsilon}^3}\sqrt{T}
\end{align}

Setting $\eta = \frac{D_2}{\hat{\epsilon}\sqrt{n}}$

\begin{align*}
    R_T &\leq T_1+T_2+T_3 \leq O(\sqrt{n}G_{\infty}D_2\sqrt{T})
\end{align*}

\subsubsection{Non-convex guarantees}
\label{subsubsec:nonconvex}
Minimizing smooth non-convex functions $f$ is a complex yet interesting problem. 
In \citet{agarwal2019efficient}, this problem is reduced to an online convex optimization, where a sequence of objectives $f_t(w) = f(w) + c \lnorm{w-w_t}^2$ are minimized. Using this approach \citet{agarwal2019efficient}  established convergence guarantees to reach a stationary point via regret minimization. Thus non-convex guarantees can be obtained from regret guarantees and is our main focus in the paper.

\subsection{Numerical stability}
In this section we conduct perturbation analysis to derive an end-to-end componentwise condition number~(pg. 135, problem 7.11 in \cite{higham2002accuracy})  upper bound of the tridiagonal explicit solution in \Cref{thm:explsoln1}. In addition to this, we devise Algorithm~\ref{alg:num_stable} to reduce this condition number upper bound for the tridiagonal sparsity structure, and be robust to $H_t$ which don't follow the non-degeneracy condition: any principle submatrix of $H_t$ corresponding to a complete subgraph of $\G$.

\begin{theorem} [{Condition number of tridiagonal LogDet subproblem~(\ref{eqn:gensubprob}})]
\label{thm:condnum}
Let $H\in S_n^{++}$ be such that $H_{ii}=1$ for $i\in [n]$. Let $\Delta H$ be a symmetric perturbation such that $\Delta H_{ii}=0$ for $i\in [n]$, and $H+\Delta H \in S_n^{++}$. Let $P_{\G}(H)$ be the input to \ref{eqn:gensubprob}, where $\G$ is a chain graph, then
\begin{align}
\label{eqn:condnum}
\kappa_{\infty}^{\ell d}
 & \leq \max_{i\in [n-1]} 2/(1-\beta_i^2) =  \hat{\kappa}_{\infty}^{\ell d},
 \end{align}
 where, $\beta_i = H_{ii+1}$,$\kappa_{\infty}^{\ell d}\coloneqq$ componentwise condition number of (\ref{eqn:gensubprob}) for perturbation $\Delta{H}$. 
\end{theorem}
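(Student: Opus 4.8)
The plan is to turn the end-to-end map $P_\G(H)\mapsto X$ into an elementary, \emph{local} rational function by plugging the normalization $H_{ii}=1$ into the explicit solution of \Cref{thm:explsoln1}, and then to read the componentwise condition number off its partial derivatives. First I would set $\beta_i\coloneqq H_{ii+1}$; positive definiteness of $H$ with unit diagonal forces every $2\times2$ principal minor $1-\beta_i^2>0$, so \Cref{thm:explsoln1} (equivalently the form \eqref{eqn:trisoln}) applies and gives $L_{i+1\,i}=-\beta_i$, $D_{ii}^{-1}=1-\beta_i^2$ for $i\le n-1$ and $D_{nn}=1$. Hence the solution $X=LDL^T$ has as its only nonzero entries
\begin{align*}
X_{ii+1}=\frac{-\beta_i}{1-\beta_i^2},\quad X_{11}=\frac{1}{1-\beta_1^2},\quad X_{nn}=\frac{1}{1-\beta_{n-1}^2},\quad X_{ii}=\frac{1}{1-\beta_i^2}+\frac{\beta_{i-1}^2}{1-\beta_{i-1}^2}\ \ (2\le i\le n-1).
\end{align*}
Since an admissible perturbation has $\Delta H_{ii}=0$, the only input quantities that move are the $\beta_i$; the perturbed problem is again nondegenerate by hypothesis, and $X+\Delta X$ keeps the tridiagonal support, so entries with $|i-j|\ge2$ vanish identically and never contribute.

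Next I would invoke the definition of the componentwise condition number (pg.~135, Prob.~7.11 in \cite{higham2002accuracy}): for a differentiable map $y=f(x)$ it equals $\max_k\inabs{y_k}^{-1}\sum_\ell\inabs{\partial f_k/\partial x_\ell}\inabs{x_\ell}$. Restricting to the admissible perturbation set (active inputs the $\beta_i$) and to the nonzero output entries of $X$, this becomes
\begin{align*}
\kappa_\infty^{\ell d}=\max_{(i,j):\,X_{ij}\ne0}\ \frac{1}{\inabs{X_{ij}}}\sum_{k}\inabs{\frac{\partial X_{ij}}{\partial\beta_k}}\,\inabs{\beta_k}.
\end{align*}
By the chain structure each such sum has at most two terms: $X_{ii+1}$ and the corner entries $X_{11},X_{nn}$ depend on a single $\beta$, while an interior $X_{ii}$ depends only on $\beta_{i-1}$ and $\beta_i$. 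I would then differentiate the displayed formulas, obtaining $\partial X_{ii+1}/\partial\beta_i=-(1+\beta_i^2)/(1-\beta_i^2)^2$, $\partial X_{ii}/\partial\beta_i=2\beta_i/(1-\beta_i^2)^2$, and $\partial\bigl(\beta_{i-1}^2/(1-\beta_{i-1}^2)\bigr)/\partial\beta_{i-1}=2\beta_{i-1}/(1-\beta_{i-1}^2)^2$.

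The last step is three elementary estimates, all using $\beta_i^2<1$. For the off-diagonal entry, $\inabs{X_{ii+1}}^{-1}\inabs{\partial X_{ii+1}/\partial\beta_i}\inabs{\beta_i}=(1+\beta_i^2)/(1-\beta_i^2)\le 2/(1-\beta_i^2)$; for the corners, $2\beta_1^2/(1-\beta_1^2)<2/(1-\beta_1^2)$ and symmetrically at $n$. For an interior diagonal I would split the two contributions so each matches a term of $X_{ii}$: $2\beta_i^2/(1-\beta_i^2)^2=\bigl(2\beta_i^2/(1-\beta_i^2)\bigr)\bigl(1/(1-\beta_i^2)\bigr)\le\bigl(2/(1-\beta_i^2)\bigr)\bigl(1/(1-\beta_i^2)\bigr)$ and $2\beta_{i-1}^2/(1-\beta_{i-1}^2)^2\le\bigl(2/(1-\beta_{i-1}^2)\bigr)\bigl(\beta_{i-1}^2/(1-\beta_{i-1}^2)\bigr)$, so dividing by $X_{ii}=1/(1-\beta_i^2)+\beta_{i-1}^2/(1-\beta_{i-1}^2)$ produces a convex-combination bound dominated by $\max\{2/(1-\beta_i^2),\,2/(1-\beta_{i-1}^2)\}$. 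Taking the maximum over all entries gives $\kappa_\infty^{\ell d}\le\max_{i\in[n-1]}2/(1-\beta_i^2)=\hat\kappa_\infty^{\ell d}$.

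I expect the main obstacle to be matching the precise normalization in the Higham-style definition of $\kappa_\infty^{\ell d}$ to the structure of the problem — in particular, correctly handling the interior diagonal entries, which are sums of two positive terms and need the term-by-term split above, and excluding the off-support $0/0$ entries — rather than any single computation, each of which is routine once $H_{ii}=1$ collapses the solution to the displayed rational functions.
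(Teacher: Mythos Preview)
Your proposal is correct and follows essentially the same route as the paper: both reduce to the explicit rational formulas for $X_{ij}$ from \Cref{thm:explsoln1} under $H_{ii}=1$, compute the log-derivatives $|\beta_i\,\partial X_{ij}/\partial\beta_i|/|X_{ij}|$ (the paper via a first-order Taylor expansion, you via the Higham Jacobian formula), obtain $(1+\beta_i^2)/(1-\beta_i^2)$ for the off-diagonals, and handle the interior diagonal by the same weighted-average/convex-combination bound over the two contributing terms. The only cosmetic difference is that the paper writes $X_{ii}=1+\beta_{i-1}^2/(1-\beta_{i-1}^2)+\beta_i^2/(1-\beta_i^2)$ symmetrically and bounds each $g(x)=x^2/(1-x^2)$ piece by $|xg'(x)/g(x)|=2/(1-x^2)$, whereas you absorb the ``$1$'' into $1/(1-\beta_i^2)$ before splitting; the resulting inequality is identical.
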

The tridiagonal LogDet problem with inputs $H$ as mentioned in Theorem~\ref{thm:condnum}, has high condition number when $1-\beta_i^2 = H_{ii} - H_{ii+1}^2/H_{i+1i+1}$ are low and as a result the preconditioner $X_t$ in SONew (Algorithm \ref{alg:sonew}) has high componentwise relative errors. We develop \Cref{alg:num_stable} to be robust to degenerate inputs $H$, given that $H_{ii}>0$.  It finds a subgraph $\Gtil$ of $\G$ for which non-degeneracy conditions in \cref{thm:explsoln2} is satisfied  and (\ref{eqn:explsolnbanded}) is well-defined. This is done by removing edges which causes inverse $H_{I_jI_j}^{-1}$ to be singular or $(H_{jj}-H_{I_jj}^TH_{I_j I_j}^{-1}H_{I_j j})$ to be low. In the following theorem we also show that the condition number upper bound in \cref{thm:condnum} reduces in tridiagonal case. To test the robustness of this method we conducted an ablation study in \Cref{table:num_fix}, in an Autoencoder benchmark (from \cref{sec:experiments}) in bfloat16 where we demonstrate noticeable improvement in performance when \Cref{alg:num_stable} is used.

\begin{theorem} [Numerically stable algorithm]
\label{thm:numstab}
\Cref{alg:num_stable} finds a subgraph $\Gtil$ of $\G$, such that explicit solution for $\Gtil$ in (\ref{eqn:explsolnbanded}) is well-defined. Furthermore, when $\G$ is a tridiagonal/chain graph, the  component-wise condition number upper bound in (\ref{eqn:condnum}) is reduced upon using \Cref{alg:num_stable}, $\hat{\kappa}_{\ell d}^{\Gtil} <\hat{\kappa}_{\ell d}^{\G}$, where $\hat{\kappa}_{\ell d}^{\Gtil}$, $\hat{\kappa}_{\ell d}^{\G}$ are defined as in \Cref{thm:condnum} for graphs $\Gtil$ and $\G$ respectively.
\end{theorem}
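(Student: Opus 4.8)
The plan is to separate the statement into its two components and treat them independently: (a) the graph $\Gtil$ returned by \cref{alg:num_stable} satisfies the non-degeneracy hypothesis of \cref{thm:explsoln2}, so that \eqref{eqn:explsolnbanded} yields a well-defined positive definite preconditioner supported on $\Gtil$; and (b) when $\G$ is a chain, the bound \eqref{eqn:condnum} strictly decreases on passing from $\G$ to $\Gtil$. I would establish (a) first and then (b).

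For (a), note first that \cref{alg:num_stable} only deletes edges, so $\Gtil\subseteq\G$, and by \cref{thm:explsoln2} it suffices to check that every principal submatrix $H_{CC}$ indexed by a clique $C$ of $\Gtil$ is positive definite. In the tridiagonal case the cliques of $\Gtil$ are its vertices and its surviving edges: $H_{ii}>0$ holds throughout \cref{alg:sonew} since $H_{ii}=\epsilon+\sum_s (g_s)_i^2/\lambda_s$, and for a surviving edge $(i,i+1)\in E_{\Gtil}$ the $2\times 2$ principal submatrix has determinant $H_{i+1i+1}\bigl(H_{ii}-H_{ii+1}^2/H_{i+1i+1}\bigr)=H_{i+1i+1}\,S_{ii}$, which is positive because \cref{alg:num_stable} keeps the edge only when $S_{ii}\ge\gamma>0$. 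Since deleting edges from a chain breaks it into a disjoint union of contiguous sub-chains, the matrix $X=LDL^T$ assembled from \eqref{eqn:explsolntridiag} on each sub-chain is block diagonal on those index intervals, hence lies in $S_n(\Gtil)^{++}$ and, by \cref{thm:explsoln1} applied per block, solves \eqref{eqn:gensubprob}. For a general band I would use the structure of \cref{alg:num_stable}: it sweeps the vertices in the elimination order $1,\dots,n$ of \textsc{Sparsified\_Inverse} and, at vertex $j$, prunes edges incident to $j$ (shortening the forward neighbourhood $I_j$) until $H_{I_j I_j}$ is nonsingular and the pivot Schur complement $H_{jj}-H_{I_j j}^T H_{I_j I_j}^{-1}H_{I_j j}\ge\gamma$; this keeps $\Gtil$ chordal under the same ordering and makes every pivot Schur complement positive, which is precisely the certificate that the clique submatrices of $H$ over $\Gtil$ are positive definite, so \cref{thm:explsoln2} applies to $\Gtil$.

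For (b), I would invoke \cref{thm:condnum}, which for a chain graph and $H$ with unit diagonal gives $\hat{\kappa}_{\ell d}^{\G}=\max_{(i,i+1)\in E_{\G}}2/(1-\beta_i^2)$ with $\beta_i=H_{ii+1}$; the general (non-unit-diagonal) case reduces to this by the rescaling $H\mapsto\diag(H)^{-1/2}H\diag(H)^{-1/2}$, under which the solution of \eqref{eqn:gensubprob} transforms by $X\mapsto\diag(H)^{1/2}X\diag(H)^{1/2}$, componentwise relative errors are unchanged, and $1-\beta_i^2=S_{ii}/H_{ii}$. Since \eqref{eqn:gensubprob} and its solver decouple over connected components, the same formula holds for $\Gtil$: $\hat{\kappa}_{\ell d}^{\Gtil}=\max_{(i,i+1)\in E_{\Gtil}}2/(1-\beta_i^2)$. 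Let $i^\star\in\argmin_{i\in[n-1]}(1-\beta_i^2)$, so $\hat{\kappa}_{\ell d}^{\G}=2/(1-\beta_{i^\star}^2)$. Whenever \cref{alg:num_stable} prunes at least one edge $(j,j+1)$ — the regime of interest, namely $\gamma>\min_i S_{ii}$ — that edge obeys $1-\beta_j^2<\gamma$, hence $1-\beta_{i^\star}^2\le 1-\beta_j^2<\gamma$, so $(i^\star,i^\star+1)$ is itself pruned; every surviving edge then satisfies $1-\beta_i^2\ge\gamma>1-\beta_{i^\star}^2$, giving $\hat{\kappa}_{\ell d}^{\Gtil}\le 2/\gamma<2/(1-\beta_{i^\star}^2)=\hat{\kappa}_{\ell d}^{\G}$, the claimed strict inequality.

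I expect the main obstacle to be part (a) for the general banded pattern: one must make sure the pruning in \cref{alg:num_stable} leaves $\Gtil$ compatible with the \emph{fixed} elimination order of \textsc{Sparsified\_Inverse}, that is, that the assembled $X=LDL^T$ still has sparsity pattern exactly $\Gtil$ and solves \eqref{eqn:gensubprob}, because a subgraph of a chordal graph need not be chordal under the inherited vertex order — so the algorithm must prune so that each $I_j$ remains an initial interval and no spurious nonzeros are created in $X$. In the tridiagonal case this issue vanishes (deletions cut the chain into intervals and $X$ is block diagonal), and the clean, substantive content of the theorem is the monotonicity-with-strictness argument of part (b), whose only real point is that the pruning rule is forced to discard an edge attaining the maximum in \eqref{eqn:condnum}.
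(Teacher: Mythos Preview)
Your tridiagonal arguments for both (a) and (b) are correct and coincide with the paper's: for (b) the paper likewise observes that the index attaining the maximum in \eqref{eqn:condnum} necessarily lies in $K$, so deleting its incident edges strictly lowers the bound; for (a) the $2\times2$ determinant check is exactly what is needed.

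For the general banded case of (a), however, you have misread \cref{alg:num_stable}. It does not ``sweep the vertices \dots and, at vertex $j$, prune edges incident to $j$ until the pivot Schur complement $\ge\gamma$.'' Rather, it computes the bad set $K=\{i:S_{ii}\text{ undefined or }\le\gamma\}$ with respect to the \emph{original} neighbourhoods $I_i$ in $\G$, and then deletes, for every $i\in K$, all edges $E_i$ incident to $i$ together with the cross-edges $V_i^+\times V_i^-$. The paper's proof uses precisely this rule: for $j\in K$ one gets $I_j'=\emptyset$ and $D_{jj}=1/H_{jj}$; for $j\notin K$, if $l=\min\{i\in K\cap I_j\}$, then $(j,l)\in E_l$ is deleted, and for every $m\in I_j$ with $m>l$ the pair $(j,m)\in V_l^-\times V_l^+$ is deleted, so $I_j'=\{j+1,\dots,l-1\}$ is an initial segment of $I_j$. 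Since $j\notin K$ gives $S_{jj}>0$ and hence $H_{J_jJ_j}\succ0$ by Guttman rank additivity, the principal submatrix $H_{J_j'J_j'}\succ0$ as well, and the new pivot is a positive Schur complement. This is exactly the ``$I_j$ remains an initial interval'' property you correctly flag as the crux --- but it is delivered by the cross-edge deletion $V_i^+\times V_i^-$, not by any per-vertex shorten-until-good procedure. Your description would not, for example, explain why the edge $(j,m)$ with $j<l<m$, $l\in K$, $j,m\notin K$ disappears from $\Gtil$; under your reading it would survive and your chordality worry would be realised.
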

The proofs for Theorems \ref{thm:condnum} and \ref{thm:numstab} are given in the following subsections. 
\begin{algorithm}
\small
\begin{algorithmic}[1]
\State \textbf{Input:}  $\G -$ tridiagonal or banded graph, $H - $ symmetric matrix in $\R^{n\times n}$ with sparsity structure $\G$ and $H_{ii}>0$, $\gamma - $ tolerance parameter for low schur complements.
\State \textbf{Output:} Finds subgraph $\Gtil$ of $\G$ without any degenerate cases from \Cref{lem:degencases} and finds  preconditioner $\hat{X}$ corresponding to the subgraph 
\State Let $E_i = \inbrace{(i,j): (i,j)\in E_{\G}}$ be edges from vertex $i$ to its neighbours in graph $\G$.
\State Let $V_i^{+} = \inbrace{j: i<j, (i,j)\in E_{\G}}$ and $V_i^{-} = \inbrace{j: i>j, (i,j)\in E_{\G}}$, denote positive and negative neighbourhood of vertex $i$.
\State Let $K = \inbrace{i : H_{ii}-H_{I_i i}^T H_{I_i I_i}^{-1}H_{I_i i} \text{ is undefined or}\leq \gamma}$
\State Consider a new subgraph $\Gtil$ with edges $E_{\Gtil} = E_{\G} \setminus ( \bigcup_{i\in K}E_i \cup (V_i^{+} \times V_i^{-}))$

\State \Return $\hat{X}\coloneqq \Call{Sparsified\_Inverse }{\tilde{H}_t,\Gtil}$, where $\tilde{H}_t = P_{\Gtil}(H_t)$
\end{algorithmic}
\caption{Numerically stable banded LogDet solution}
\label{alg:num_stable}
\end{algorithm}
\label{subsec:numstab}
\subsubsection{Condition number analysis}
\begin{theorem} [\emph{Full version of Theorem \ref{thm:condnum}}]
\label{thm:condnumf}
Let $H\in S_n^{++}$ such that $H_{ii}=1$, for $i\in [n]$ and a symmetric perturbation $\Delta H$ such that $\Delta H_{ii}=0$, for $i\in [n]$ and $H+\Delta H \succ 0$. Let $\hat{X} = \argmin_{X\in S_n(\G)^{++}}  \logdetdiv\inparen{X,H^{-1}}$ and $\hat{X}+\Delta \hat{X} = \argmin_{X\in S_n(\G)^{++}}  \logdetdiv\inparen{X,(H+\Delta H)^{-1}}$, here $\G \coloneqq$ chain/tridiagonal sparsity graph  
 and $S_n(\G)^{++}$ denotes positive definite matrices which follows the sparsity pattern $\G$.
\begin{align*}
\kappa_{\ell d} &= \lim_{\epsilon \to 0}\sup\inbrace{ \frac{\inabs{\Delta \hatX_{ij}}}{\epsilon\inabs{\hatX_{ij}}} : \inabs{\Delta H_{k,l}} \leq \inabs{\epsilon H_{k,l}}, (k,l)\in E_{\G}} \\
& \leq \max_{i\in [n-1]} 1/(1-\beta_i^2)
 \end{align*}
 where, $\kappa_{\ell d}\coloneqq$ condition number of the LogDet subproblem, $\kappa_2 (.) \coloneqq $ condition number of a matrix in $\ell_2$ norm,  $\beta_i = H_{ii+1}/\sqrt{H_{ii}H_{i+1i+1}}$
\end{theorem}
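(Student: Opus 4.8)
The plan is to read off the perturbation behaviour directly from the closed form of the tridiagonal minimizer. By \Cref{thm:explsoln1} and \Cref{lem:invxhat} (equivalently \eqref{eqn:trisoln}), together with the normalization $H_{ii}=1$, every nonzero entry of $\hat X$ is an elementary rational function of the subdiagonal data $\beta_i\coloneqq H_{i,i+1}$ alone: the off-diagonal entries are $\hat X_{i,i+1}=-\beta_i/(1-\beta_i^2)$, and the diagonal entries are $\hat X_{ii}=1+g(\beta_{i-1})+g(\beta_i)$ with $g(\beta)\coloneqq \beta^2/(1-\beta^2)$ and the boundary convention $\beta_0=\beta_n=0$. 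Since an admissible perturbation keeps $\Delta H_{ii}=0$ and only $P_\G(H)$ enters \eqref{eqn:gensubprob}, it amounts to perturbing each $\beta_i$ by some $\Delta\beta_i$ with $|\Delta\beta_i|\le\epsilon|\beta_i|$; moreover $H+\Delta H\succ 0$ forces $1-\beta_i^2$ to stay bounded away from $0$ uniformly along the perturbation, so all of these rational expressions are smooth in a neighbourhood and the first-order (in $\epsilon$) analysis is legitimate when we finally take $\epsilon\to 0$.

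I would then bound the componentwise relative error entry by entry. For an off-diagonal entry, $\hat X_{i,i+1}$ is a function of the single variable $\beta_i$, and a one-line computation (write $\hat X_{i,i+1}=-\beta_i\cdot(1-\beta_i^2)^{-1}$ and add the relative errors of the two factors) gives $|\Delta\hat X_{i,i+1}|/|\hat X_{i,i+1}|\le \tfrac{1+\beta_i^2}{1-\beta_i^2}\,\epsilon + o(\epsilon)\le \tfrac{2}{1-\beta_i^2}\,\epsilon + o(\epsilon)$, which is of the claimed form $c/(1-\beta_i^2)$. For a diagonal entry, differentiating $g$ gives $|\Delta g(\beta_j)|\le \tfrac{2|\beta_j|}{(1-\beta_j^2)^2}|\Delta\beta_j|+o(\epsilon)$, hence $|\Delta g(\beta_j)|/g(\beta_j)\le \tfrac{2\epsilon}{1-\beta_j^2}+o(\epsilon)$; since $\hat X_{ii}$ is the sum of $1$ with the two nonnegative terms $g(\beta_{i-1}),g(\beta_i)$ and only those two are perturbed, $|\Delta\hat X_{ii}|/|\hat X_{ii}|$ is at most a weighted average of the per-term relative errors, so it is bounded by $\max_j \tfrac{2\epsilon}{1-\beta_j^2}+o(\epsilon)$, the spare $1$ in the denominator only helping. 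Taking the supremum over admissible $\Delta H$ and over all nonzero positions $(i,j)$ and letting $\epsilon\to0$ collapses both estimates into $\kappa_{\ell d}\le \max_{i\in[n-1]} c/(1-\beta_i^2)$; chasing the constant through the off-diagonal case, which carries the extra $1+\beta_i^2$ factor, pins down the precise value in the statement.

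I expect the difficulty to lie entirely in the bookkeeping rather than in any one idea. The three places that need care are: (i) the boundary indices $i=1$ and $i=n$, where one neighbouring $\beta$ is absent and the degenerate contribution $g(0)=0$ must be handled consistently with the constraint $|\Delta\beta_i|\le\epsilon|\beta_i|$ (which itself forces $\Delta\beta_i=0$ whenever $\beta_i=0$); (ii) making the $o(\epsilon)$ remainders uniform over all admissible $\Delta H$, so that $\lim_{\epsilon\to0}$ genuinely exists and equals the first-order coefficient — this is where the uniform lower bound on $1-\beta_i^2$ coming from $H+\Delta H\succ 0$ is used; and (iii) the diagonal entries, where a single output coordinate $\hat X_{ii}$ is driven by two independent inputs $\Delta\beta_{i-1},\Delta\beta_i$, so its relative error must be controlled as a ratio of sums rather than a single quotient. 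Once these are dispatched, the bound \eqref{eqn:condnum} follows by taking the maximum over the two entry types.
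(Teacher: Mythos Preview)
Your proposal is correct and follows essentially the same approach as the paper: both reduce to first-order perturbation analysis of the explicit rational formulas for the entries of $\hat X$, handling the off-diagonal entries via $f(\beta)=-\beta/(1-\beta^2)$ and the diagonal entries via $g(\beta)=\beta^2/(1-\beta^2)$ summed with the constant $1$. Note that the paper's own proof, like yours, actually yields the bound $2/(1-\beta_i^2)$ (consistent with the short version \Cref{thm:condnum}) rather than the $1/(1-\beta_i^2)$ appearing in the statement of \Cref{thm:condnumf}, so do not expect the constant-chasing to land on the latter.
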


\begin{proof}
Consider the offdiagonals for which $(\hatX + \Delta \hatX)_{ii+1} = -H_{ii+1}/(1-H_{ii+1}^2) = f(H_{ii+1})$,where $f(x)=-x/(1-x^2)$. Let $y=f(x)$, $\hat{y} = f(x+\Delta x)$ and $\inabs{\Delta x/x} \leq \epsilon$ then using Taylor series
\begin{align*}
    \inabs{\frac{(\hat{y}-y)}{y}} &= \inabs{\frac{xf'(x)}{f(x)}}\inabs{\frac{\Delta x}{x}} + O((\Delta x)^2)\\
    \implies \lim_{\epsilon \to 0} \inabs{\frac{(\hat{y}-y)}{\epsilon y}} &\leq  \frac{xf'(x)}{f(x)}
\end{align*}
Using the above inequality, with $x\coloneqq H_{ii+1}$ and $y \coloneqq \hatX_{ii+1}$, 
\begin{align}
\label{eqn:offdiags}
    \lim_{\epsilon \to 0} \inabs{\frac{\Delta \hatX_{ii+1}}{\epsilon\hatX_{ii+1}} } &\leq \frac{1+H_{ii+1}^2}{1-H_{ii+1}^2}\\
    &\leq \frac{2}{1-H_{ii+1}^2} \nonumber
\end{align}
Let $g(x) = x^2/(1-x^2)$, let $y_1 = g(w_1)$, $y_2 = g(x_2)$, $\hat{y}_1 = g(w_1+\Delta x)$, $\hat{y}_2 = g(x_2+\Delta x)$. Using Taylor series
\begin{align*}
    \inabs{\frac{(\hat{y}_1-y_1)}{y_1}} &= \inabs{\frac{x_1f'(x_1)}{f(x_1)}}\inabs{\frac{\Delta x_1}{x_1}} + O((\Delta x_1)^2)\\
    \inabs{\frac{(\hat{y}_2-y_2)}{y_2}} &= \inabs{\frac{x_2f'(x_2)}{f(x_2)}}\inabs{\frac{\Delta x_2}{x_2}} + O((\Delta x_2)^2)\\
\implies \lim_{\epsilon \to 0}\frac{\Delta y_1 + \Delta y_2}{\epsilon(1+y_1+y_2)} &\leq \max\inparen{\frac{2}{1-x_1^2},\frac{2}{1-x_2^2}}
\end{align*}
Putting $x_1\coloneqq H_{ii+1}$, $x_2 \coloneqq H_{ii-1}$ and analyzing $y_1 \coloneqq H_{ii+1}^2/(1-H_{ii+1}^2)$ and $y_2 \coloneqq H_{ii-1}^2/(1-H_{ii-1}^2)$ will result in the following
\begin{align}
\label{eqn:diags}
    \lim_{\epsilon \to 0}\inabs{\frac{\Delta \hatX_{ii}}{\hatX_{ii}}} &\leq \max\inparen{\frac{2}{1-H_{ii+1}^2},\frac{2}{1-H_{ii-1}^2}}
\end{align}
Since $\hatX_{ii} = 1 + H_{ii+1}^2/(1-H_{ii+1}^2) +H_{ii-1}^2/(1-H_{ii-1}^2)$. 
Putting together \Cref{eqn:diags} and \Cref{eqn:offdiags}, the theorem is proved.
\end{proof}

\subsubsection{Degenerate $H_t$}
 In SONew (\ref{alg:sonew}), the $H_t=P_{\G}(\sum_{s=1}^{t}g_sg_s^T/\lambda_t)$ generated in line \ref{line:ht} could be such that the matrix $\sum_{s=1}^{t} g_sg_s^T/\lambda_t$ need not be positive definite and so the schur complements $H_{ii} - H_{ii+1}^2/H_{i+1i+1}$ can be zero, giving an infinite condition number $\kappa_{\infty}^{\ell d}$ by Theorem \ref{thm:condnum}. The following lemma describes such cases in detail for a more general banded sparsity structure case. 


\begin{lemma} [Degenerate inputs to banded LogDet subproblem]
\label{lem:degencases}
Let $H = P_{\G}(GG^T)$, when $\epsilon=0$ in \cref{alg:sonew}, where $G\in \R^{n\times T}$ and let $g^{(i)}_{1:T}$ be $i^{th}$ row of $G$, which is gradients of parameter $i$ for $T$ rounds, then $H_{ij} = \inangle{g^{(i)}_{1:T}, g^{(j)}_{1:T}}$. 
\begin{itemize}
    \item Case 1: For tridiagonal sparsity structure $\G$: if $g^{(j)}_{1:T} =g^{(j+1)}_{1:T}$, then $H_{jj} - H_{jj+1}^2/H_{j+1 j+1} =0 $.
    \item Case 2: For $b>1$ in (\ref{eqn:explsolnbanded}):  If $\rank(H_{J_j J_j}) = \rank(H_{I_j I_j}) = b$, then  $(H_{jj}-H_{I_j j}^T H_{I_j I_j}^{-1}H_{I_j j}) = 0$ and $D_{jj}=\infty$. If $\rank(H_{I_j I_j})< b$ then the inverse $H_{I_j I_j}^{-1}$ doesn't exist and $D_{jj}$ is not well-defined.
\end{itemize}
\end{lemma}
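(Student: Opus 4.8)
The plan is to reduce the whole statement to elementary linear algebra by exploiting the Gram-matrix structure of $H$. With $\epsilon=0$, write $G_S$ for the submatrix of $G$ consisting of the rows indexed by a set $S$, so that row $k$ of $G$ is $g^{(k)}_{1:T}$ and $(GG^T)_{kl}=\inangle{g^{(k)}_{1:T},g^{(l)}_{1:T}}$. The one structural observation I need is that $J_j=I_j\cup\{j\}=\{j,j+1,\ldots,j+b\}$ is a \emph{complete} subgraph of the banded graph $\G$ (all its vertices lie within distance $b$ of one another), so $P_{\G}$ erases no entry of the $J_j\times J_j$ block of $GG^T$ and therefore
\begin{align*}
H_{J_jJ_j}=G_{J_j}G_{J_j}^T,\qquad H_{I_jI_j}=G_{I_j}G_{I_j}^T,\qquad H_{I_jj}=G_{I_j}\,g^{(j)}_{1:T},\qquad H_{jj}=\lnorm{g^{(j)}_{1:T}}^2 .
\end{align*}
Everything below is a one-line computation with these identities.

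For Case 1, $g^{(j)}_{1:T}=g^{(j+1)}_{1:T}$ gives $H_{jj}=H_{j+1\,j+1}=H_{j\,j+1}=\lnorm{g^{(j)}_{1:T}}^2$, so the Schur complement $H_{jj}-H_{j\,j+1}^2/H_{j+1\,j+1}$ equals $\lnorm{g^{(j)}_{1:T}}^2-\lnorm{g^{(j)}_{1:T}}^2=0$; this is exactly the equality case of Cauchy--Schwarz.

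For Case 2, I would first use that a Gram matrix has the rank of its factor: $\rank(H_{J_jJ_j})=\rank(G_{J_j})$ and $\rank(H_{I_jI_j})=\rank(G_{I_j})$, where $G_{I_j}$ is $G_{J_j}$ with the single row $g^{(j)}_{1:T}$ deleted. Hence $\rank(G_{J_j})=\rank(G_{I_j})=b$ forces $g^{(j)}_{1:T}$ into the row space of $G_{I_j}$, i.e.\ $g^{(j)}_{1:T}=G_{I_j}^T c$ for some $c\in\R^b$. Substituting this gives $H_{I_jj}=G_{I_j}G_{I_j}^T c=H_{I_jI_j}c$ and $H_{jj}=c^T G_{I_j}G_{I_j}^T c=c^T H_{I_jI_j}c$; since $H_{I_jI_j}$ is a symmetric $b\times b$ matrix of full rank $b$, it is invertible, and
\begin{align*}
H_{jj}-H_{I_jj}^T H_{I_jI_j}^{-1}H_{I_jj}=c^T H_{I_jI_j}c-(H_{I_jI_j}c)^T H_{I_jI_j}^{-1}(H_{I_jI_j}c)=c^T H_{I_jI_j}c-c^T H_{I_jI_j}c=0,
\end{align*}
so $D_{jj}^{-1}=0$ and $D_{jj}=\infty$. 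The last assertion is immediate: if $\rank(H_{I_jI_j})<b$ then the $b\times b$ matrix $H_{I_jI_j}$ is singular, $H_{I_jI_j}^{-1}$ does not exist, and the expression defining $D_{jj}$ in~(\ref{eqn:explsolnbanded}) is undefined.

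The only point that genuinely needs care — and the only conceptual rather than mechanical step — is the identity $H_{J_jJ_j}=G_{J_j}G_{J_j}^T$, which rests on $J_j$ being a clique of $\G$ (this is where the banded structure is used) and implicitly on $j+b\le n$. For the boundary vertices $j>n-b$ one replaces $I_j$ by $I_j\cap[n]$ and $b$ by $|I_j\cap[n]|$ throughout; the argument then goes through verbatim, and I do not expect any further obstacle.
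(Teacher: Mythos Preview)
Your proof is correct. Case~1 is identical to the paper's argument. For Case~2 you take a somewhat different route: the paper invokes the Guttman rank additivity formula, which for an invertible block $H_{I_jI_j}$ gives $\rank(H_{jj}-H_{I_jj}^TH_{I_jI_j}^{-1}H_{I_jj})=\rank(H_{J_jJ_j})-\rank(H_{I_jI_j})=b-b=0$, and since the Schur complement here is a scalar this forces it to vanish. You instead unpack the Gram structure $H_{J_jJ_j}=G_{J_j}G_{J_j}^T$, use the rank hypothesis to place $g^{(j)}_{1:T}$ in the row space of $G_{I_j}$, and then cancel the Schur complement by hand. Your argument is more elementary and self-contained (no named formula needed) and makes explicit use of the Gram-matrix hypothesis in the lemma; the paper's argument is shorter and works for any positive-semidefinite block matrix satisfying the rank condition, not just Gram matrices. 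Your observation that $J_j$ is a clique of $\G$, so that $P_{\G}$ leaves the $J_j\times J_j$ block untouched, is the point that actually ties the statement to the banded structure and is left implicit in the paper.
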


\begin{proof}

For $b=1$, if $g^{(j)}_{1:T} =g^{(j+1)}_{1:T}$, then $H_{jj+1} = H_{jj}$ $ = H_{j+1j+1} = \lnorm{g_{1:T}^{(j)}}^2$, thus $H_{jj} - $ $H_{jj+1}^2/H_{j+1j+1} =0 $.\\
For $b>1$, since $H_{I_j I_j}$, using Guttman rank additivity formula, $\rank(H_{jj} - H_{jj+1}^2/H_{j+1j+1}) =\rank(H_{J_jJ_j})-rank(H_{I_jI_j}) =0$, thus $H_{jj} - H_{jj+1}^2/H_{j+1j+1} =0$. \\
Furthermore, if $\rank(H)\leq b$, then all $b+1\times b+1$ principal submatrices of $H$ have rank $b$, thus $\forall j$, $H_{J_jJ_j}$ have a rank $b$, thus $D_{jj}$ for all $j$ are undefined. 

\end{proof}
If $GG^T = \sum_{i=1}^T g_ig_i$ is a singular matrix, then solution to the LogDet problem might not be well-defined as shown in \Cref{lem:degencases}. For instance, Case 1 can occur when preconditioning the input layer of an image-based DNN with flattened image inputs, where $j^{th}$ and $(j+1)^{th}$ pixel can be highly correlated throughout the dataset. Case 2 can occur in the first $b$ iterations in \Cref{alg:sonew} when the rank of submatrices  $\rank(H_{I_jI_j}) <b$ and $\epsilon=0$.
\subsubsection{Numerically Stable SONew proof}
\textit{Proof of Theorem \ref{thm:numstab}}

Let $I_i = \inbrace{j: i<j, (i,j)\in E_{\G}}$ and $I_i' = \inbrace{j: i<j, (i,j)\in E_{\Gtil}}$
Let $K = \inbrace{i : H_{ii}-H_{I_i i}^T H_{I_i I_i}^{-1}H_{I_i i} \text{ is undefined or 0}, i\in[n]}$ denote vertices which are getting removed by the algorithm, then for the new graph $\Gtil$, $D_{ii} = 1/H_{ii}, \forall i\in K$  since $H_{ii}>0$. \\
Let $\bar{K} = \inbrace{i : H_{ii}-H_{I_i i}^T H_{I_i I_i}^{-1}H_{I_i i} > 0 , i\in[n]}$. Let for some $j\in \bar{K}$, if
\begin{align*}
    l = \argmin \inbrace{i : j<i, i\in K\cap I_j },
\end{align*}
denotes the nearest connected vertex higher than $j$ for which $D_{ll}$ is undefined or zero, then according to the definition $E_{\Gtil}$ in \Cref{alg:num_stable}, $I_j' = \{j+1,\ldots l-1\} \subset I_j$, since $D_{jj}$ is well-defined, $H_{I_jI_j}$ is invertible, which makes it a positive definite matrix (since $H$ is PSD). Since $H_{jj}-H_{I_j j}^T H_{I_j I_j}^{-1}H_{I_j j}>0$, using Guttman rank additivity formula $H_{J_j J_j} \succ 0$, where $J_j = I_j \cup {j}$. Since $H_{J_j' J_j'}$ is a submatrix of $H_{J_j J_j}$, it is positive definite and hence its schur complement $H_{jj}-H_{I_j' j}^T H_{I_j' I_j'}^{-1}H_{I_j' j}>0$. Thus for all $j\in [n]$, the corresponding $D_{jj}$'s are well-defined in the new graph $\Gtil$.

Note that $\kappa_{\ell d}^{\Gtil} =\max_{i\in [n-1]} 1/(1-\beta_i^2) < \max_{i\in \bar{K}} 1/(1-\beta_i^2) = \kappa_{\ell d}^{\G}$, for tridiagonal graph, where $\beta_i = H_{ii+1}$, in the case where $H_{ii}=1$. This is because the $\argmax_{i\in [n-1]} 1/(1-\beta_i^2) \in K$.

\vspace{-0.8em}
\subsection{Additional Experiments, ablations, and details}
\label{subsec:ablation}
\subsubsection{Ablations} \label{sec:act_ablations}
\textbf{Effect of band size in banded-SONew}
Increasing band size in banded-SONew captures more correlation between parameters, hence should expectedly lead to better preconditioners. We confirm this through experiments on the Autoencoder benchmark where we take band size = 0 (diag-SONew), 1 (tridiag-SONew), 4, and 10 in \Cref{table:band_ablation}.

\begin{table}[t!]
\caption{\textbf{float32 experiments on Autoencoder benchmark using different band sizes.} Band size 0 corresponds to diag-SONew and 1 corresponds to tridiag-SONew. We see the training loss getting better as we increase band size
}
\centering
\resizebox{0.9\columnwidth}{!}{
\begin{tabular}{|c|c|c|c|c|}
        \hline
        \toprule
        \textbf{Band size} & 0 (diag-SONew) & 1 (tridiag-SONew) & 4 & 10\\
        \midrule
        \textbf{Train CE loss} & \hfil53.025 & \hfil51.723 & \hfil51.357  & \hfil51.226\\ 
        \bottomrule
        \end{tabular}
}
    \label{table:band_ablation}
\end{table}

\textbf{Effect of mini-batch size}
To find the effect of mini-batch size, in \Cref{table:batch_vs_loss}, We empirically compare SONew with state of the art first-order methods such as Adam and RMSProp, and second-order method Shampoo. We see that SONew performance doesn't deteriorate much when using smaller or larger batch size. First order methods on the other hand suffer significantly. We also notice that Shampoo doesn't perform better than SONew in these regimes.

\begin{table}[h]
\caption{\textbf{Comparison on Autoencoder with different batch-sizes}}
    \centering
    \begin{tabular}{|l|l|l|l|l|}
    \hline
    \toprule
        \textbf{Baseline}$\backslash$\textbf{Batch size} &  \textbf{100}  &  \textbf{1000} & \textbf{5000} & \textbf{10000} \\ \midrule
        RMSProp & 55.61 & 53.33 & 58.69 & 64.91 \\ \midrule
        Adam & 55.67 & 54.39 & 58.93 & 65.37 \\ 
        \midrule
        Shampoo(20) & 53.91 & 50.70 & 53.52 & 54.90 \\ \midrule
        tds & 53.84 & 51.72 & 54.24 & 55.87 \\ 
        \midrule
        bds-4 & 53.52 & 51.35 & 53.03 & 54.89 \\ 
        \bottomrule
    \end{tabular}
    \label{table:batch_vs_loss}
\end{table}


\textbf{Effect of Numerical Stability \Cref{alg:num_stable}}
On tridiag-SONew and banded-4-SONew, we observe that using \Cref{alg:num_stable} improves training loss. We present in \Cref{table:num_fix} results where we observed significant performance improvements.

\begin{table}[t!]
\caption{\textbf{bfloat16 experiments on Autoencoder benchmark with and without \Cref{alg:num_stable}.} We observe improvement in training loss when using \Cref{alg:num_stable}
}
\centering
\resizebox{1.0\columnwidth}{!}{
\begin{tabular}{|c|c|c|}
        \hline
        \toprule
        \textbf{Optimizer} & \textbf{Train CE loss - without \Cref{alg:num_stable}} & \textbf{Train CE loss - with \Cref{alg:num_stable}}\\
        \midrule
        tridiag-SONew & 53.150 & 51.936\\
        \midrule
        band-4-SONew & 51.950 & 51.84 \\
        \bottomrule
        \end{tabular}
}
    \label{table:num_fix}
\end{table}

\subsubsection{Memory Requirements} \label{sec:opt-mem}
We present a list of approximate memory requirements of different optimizers across different benchmarks in \Cref{table:opt-mem}. Note that for K-FAC and Shampoo, because preconditioner is updated once only a few steps, they require storing the latest computed preconditioners as well along with the statistics, causing even higher memory overhead.

\begin{table}[t!]
\caption{A rough estimate of memory requirement comparisons of different optimizers tested across benchmarks.}
\centering
\resizebox{1.0\columnwidth}{!}{
\begin{tabular}{|c|c|c|c|c|c|c|c|c|c|}
        \hline
        \toprule
        \textbf{Benchmark} & \textbf{\# model parameters} & \textbf{K-FAC} & \textbf{Shampoo} & \textbf{FishLeg} & \textbf{Eva} & \textbf{Adam} & \textbf{SGD+Momentum} & \textbf{RMSprop} & \textbf{tds-SONew}\\
        \midrule
         Autoencoder & n=1.4M & 5.56n & 6.56n & 4.28n & n & 2n & n & n & 3n \\
        \midrule
         GraphNetwork & n=3.5M & 8.6n & 10.6n & 4.8n & n & 2n & n & n & 3n \\
         \midrule
         Vision Transformer & n=22M & 6.4n & 7.2n & 3.7n & n & 2n & n & n & n \\
         \midrule
         Language Model & n=1.3B & 5.6n & 6.6n & 3.3n & n & 2n & n & n & 3n \\
        \bottomrule
        \end{tabular}
}
    \label{table:opt-mem}
\end{table}

\subsubsection{Hyperparaeter search space}
We provide the hyperparamter search space for experiments presented in \Cref{sec:experiments}. We search over $2k$ hyperparameters for each  Autoencoder experiment using a Bayesian Optimization package. The search ranges are: first order momentum term $\beta_1 \in [1e-1, 0.999]$, second order momentum term $\beta_2 \in [1e-1, 0.999]$, learning rate $\in [1e-7, 1e-1]$, $\epsilon \in [1e-10, 1e-1]$. We give the optimal hyperparameter value for each experiment in \Cref{table:opt-hpms}. For VIT and GraphNetwork benchmark, we search $\beta_1, \beta_2 \in [0.1,  0.999]$, $lr\in [1e-5, 1e-1]$, $\epsilon\in [1e-9, 1e-4]$, weight decay $\in [1e-5, 1.0]$, learning rate warmup $\in [2\%, 5\%, 10\%]*$total\_train\_steps, dropout$\in [00, 0.1]$, label smoothing over $\{0.0, 0.1, 0.2\}$ . We use cosine learning rate schedule. Batch size was kept = 1024, and 512 for Vision Transformer, and GraphNetwork respectively. We sweep over $200$ hyperparameters in the search space for all the optimizers.\\
For rfdSON \cite{rfdson}, there's no $\epsilon$ hyperparameter. In addition to the remaining hyperparameters, we tune $\alpha \in \{1e-5, 1.0\}$ (plays similar role as $\epsilon$) and $\mu_t \in [1e-5, 0.1]$.\\
For LLM \cite{primer} benchmark, we only tune the learning rate $\in [1e-2, 1e-3, 1e-4]$ while keeping the rest of the hyperparams as constant. This is due to the high cost of running experiments hence we only tune the most important hyperparameter. For Adafactor \cite{adafactor}, we use factored=False, decay method=adam, $\beta_1=0.9$, weight decay=$1e-3$, decay factor=$0.99$, and gradient clipping=1.0.

\subsubsection{Additional Experiments}
\label{sec:add_exp}

\textbf{VIT and GraphNetwork Benchmarks}:
In \Cref{fig:large-benchmark-train} we plot the training loss curves of runs corresponding to the best validation runs in \Cref{fig:large-benchmarks}. Furthermore, from an optimization point of view, we plot the best train loss runs in \Cref{fig:large-benchmark-train-best} got by searching over $200$ hyperparameters. We find that tridiag-SONew is $9\%$ and $80\%$ relatively better in ViT and GraphNetwork benchmark respectively (\Cref{fig:large-benchmark-train-best}), compared to Adam (the next best memory efficient baseline).

\textbf{Autoencoder float32 and bfloat16 experiments}:
We provide curves of all the baselines and SONew in \Cref{fig:autoencoder_complete}(a) and the corresponding numbers in \Cref{table:auto_fp32_complete} for float32 experiments.

To test numerical stability of SONew and compare it with other algorithm in low precision regime, we also conduct bfloat16 experiments on the Autoencoder benchmark (\Cref{table:auto_bf16_complete}). We notice that SONew undergoes the least degradation. Tridiagonal-sparsity SONew CE loss increases by only $0.21$ absolute difference (from $51.72$ in float32 (\ref{table:auto_fp32_complete}) to $51.93$), whereas Shampoo and Adam incur $0.70$ loss increase. It's worthwhile to note that SONew performs better than all first order methods while taking similar time and linear memory, whereas while Shampoo performs marginally better, it is $22\times$ slower than tridiagonal-SONew. The corresponding loss curves are given in \Cref{fig:autoencoder_complete}(b).\\
\textbf{Note:} In the main paper, our reported numbers for rfdSON on Autoencoder benchmark in \Cref{table:auto_fp32} for float32 experiments are erraneuous. Please consider the numbers provided in \Cref{table:auto_fp32_complete} and the corresponding curve in \Cref{fig:autoencoder_complete}(a). Note that there's no qualitiative change in the results and none of the claims made in the paper are affected. SONew is still significantly better than rfdSON. We also meticulously checked all other experiments, and they do not have any errors.

\begin{table*}[t]
\caption{\small \textbf{float32 experiments on Autoencoder benchmark.}  We observe that diag-SONew performs the best among all first order methods while taking similar time. tridiag and band-4 perform significantly better than first order methods while requiring similar linear space and time. Shampoo performs best but takes $\bigO(d_1^3+d_2^3)$ time for computing preconditioner of a linear layer of size $d_1\times d_2$, whereas our methods take $\bigO(d_1d_2)$ time, as mentioned in \Cref{table:complexity}. rfdSON takes similar space as SONew but performs considerably worse.
}
\centering
\vspace{-0.3em}
\resizebox{0.8\width}{!}{
\begin{tabular}{|c|c|c|c|c|c|c|c|}
        \hline
        \toprule
        \textbf{Optimizer} & \multicolumn{7}{c|}{\textbf{First Order Methods}} \\
        \midrule
        & \textbf{SGD} & \textbf{Nesterov} & \textbf{Adagrad} & \textbf{Momentum} & \textbf{RMSProp} & \textbf{Adam} & \textbf{diag-SONew} \\
        \midrule
        Train CE loss & \hfil67.654 & \hfil59.087 & \hfil54.393  & \hfil58.651 & \hfil53.330 & \hfil53.591 & \hfil53.025 \\ 
        \midrule
        Time(s) & \hfil62 & \hfil102 & \hfil62  & \hfil67 & \hfil62 & \hfil62 & \hfil63 \\
        \midrule
        \midrule
        \textbf{Optimizer} & \multicolumn{7}{c|}{\textbf{Second Order Methods}}\\
        \midrule
        & \multicolumn{2}{c|}{\textbf{Shampoo(20)}} &\textbf{rfdSON(1)} &\textbf{rfdSON(4)} & \textbf{tridiag-SONew} & \multicolumn{2}{c|}{\textbf{band-4-SONew}}\\
        \midrule
        Train CE loss & \multicolumn{2}{c|}{\hfil50.702} &\hfil 53.56 &\hfil 52.97 & \hfil51.723 & \multicolumn{2}{c|}{\hfil51.357} \\
        \midrule
        Time(s)& \multicolumn{2}{c|}{\hfil371} &\hfil 85 &\hfil 300 & \hfil 70 & \multicolumn{2}{c|}{\hfil260} \\ 
        \bottomrule
        \end{tabular}
}
    \label{table:auto_fp32_complete}
    \vspace{-0.5em}
\end{table*}

\begin{table*}[t]
\caption{ \small\textbf{bfloat16 experiments on Autoencoder benchmark} to test the numerical stability of SONew and robustness of \Cref{alg:num_stable}. We notice that diag-SONew degrades only marginally ($0.26$ absolute difference) compared to float32 performance. tridiag-SONew and band-4-SONew holds similar observations as well. Shampoo performs the best but has a considerable drop ($0.70$) in performance compared to float32 due to using matrix inverse, and is slower due to its cubic time complexity for computing preconditioners. Shampoo implementation uses 16-bit quantization to make it work in 16-bit setting, leading to further slowdown. Hence the running time in bfloat16 is even higher than in float32.}
\vspace{-0.5em}
\centering
\resizebox{0.8\width}{!}{
    \begin{tabular}{|c|c|c|c|c|c|c|c|}
        \hline
        \toprule
        \textbf{Optimizer} & \multicolumn{7}{c|}{\textbf{First Order Methods}} \\
        \midrule
        & \textbf{SGD} & \textbf{Nesterov} & \textbf{Adagrad} & \textbf{Momentum} & \textbf{RMSProp} & \textbf{Adam} & \textbf{diag-SONew} \\
        \midrule
        \midrule
        Train CE loss & \hfil80.454 & \hfil72.975 & \hfil68.854  & \hfil70.053 & \hfil53.743 & \hfil54.328 & \hfil53.29
        \\ 
        \midrule
        Train time(s) & \hfil36 & \hfil43 & \hfil37  & \hfil36 & \hfil37 & \hfil38 & \hfil44 \\ 
        \midrule
        \midrule
        \textbf{Optimizer} & \multicolumn{7}{c|}{\textbf{Second Order Methods}} \\
        \midrule
        & \multicolumn{2}{c|}{\textbf{Shampoo(20)}} & \textbf{rfdSON(1)} & \textbf{rfdSON(4)} & \textbf{tridiag-SONew} & \multicolumn{2}{c|}{\textbf{band-4-SONew}} \\
        \midrule
        \midrule
        Train CE loss & \multicolumn{2}{c|}{\hfil51.401} & \hfil 57.42 & \hfil55.53 & \hfil51.937 & \multicolumn{2}{c|}{\hfil51.84} \\
        \midrule
        Train time(s) & \multicolumn{2}{c|}{\hfil 1245} &\hfil 80 &\hfil 284 & \hfil 55 & \multicolumn{2}{c|}{\hfil 230} \\
        \bottomrule
        \end{tabular}
}
    \label{table:auto_bf16_complete}
\end{table*}

\begin{figure}[t!]
    \minipage{0.47\textwidth}
    \centering
    \hfill \includegraphics[width=1.0\textwidth]{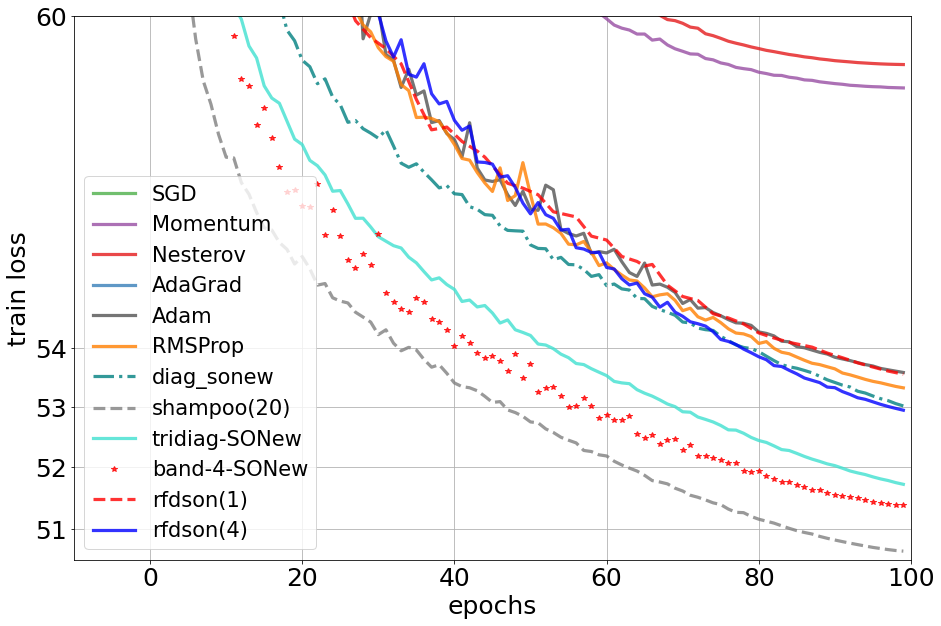}
    \caption*{(a) float32 - autoencoder}
    \label{fig:0a}
    \hspace*{\fill}
    \endminipage \hfill
    \minipage{0.47\textwidth}
    \centering
    \hfill\includegraphics[width=1.0\textwidth]{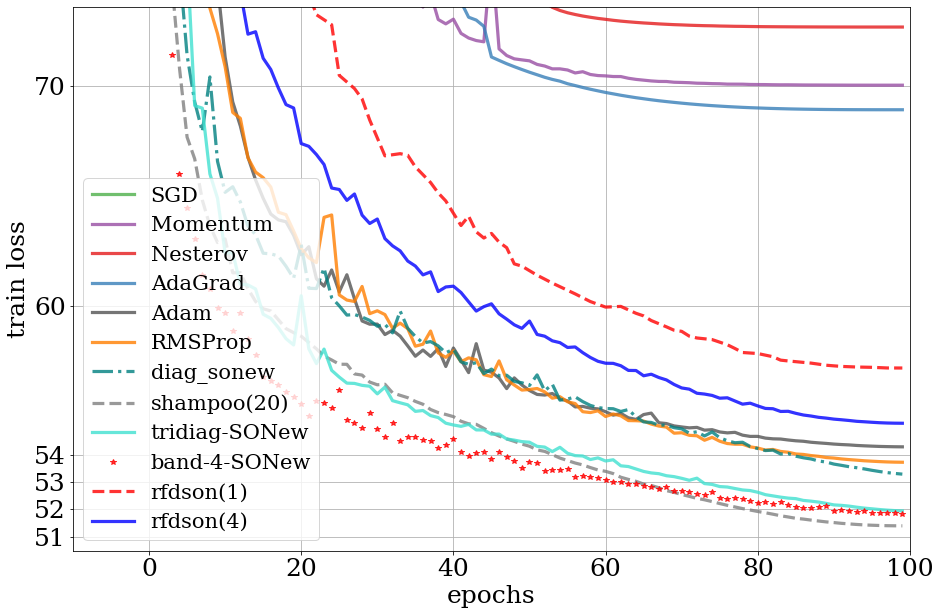}
    \caption*{(b) bfloat16 - autoencoder}
    \label{fig:0b}
    \hspace*{\fill}
    \endminipage \hfill
    \caption{\small Training curves of all the baselines for Autoencoder benchmar (a) float32 training (b) bfloat16 training}
    \label{fig:autoencoder_complete}
\end{figure}


\begin{figure}[t!]
    \minipage{0.45\textwidth}
    \centering
    \caption*{(a) VIT train CE loss}
    \hfill\includegraphics[width=1.0\textwidth]{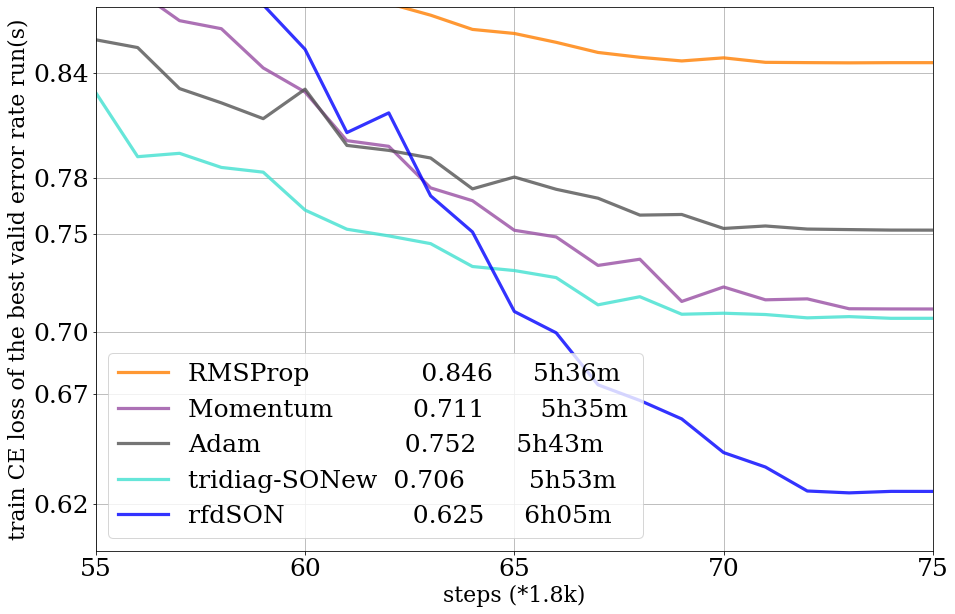}\hspace*{\fill}
    \label{fig:0b}
    \endminipage \hfill
    \minipage{0.45\textwidth}
    \centering
    \caption*{(b) GraphNetwork train CE loss}
    \hfill\includegraphics[width=1.0\textwidth]{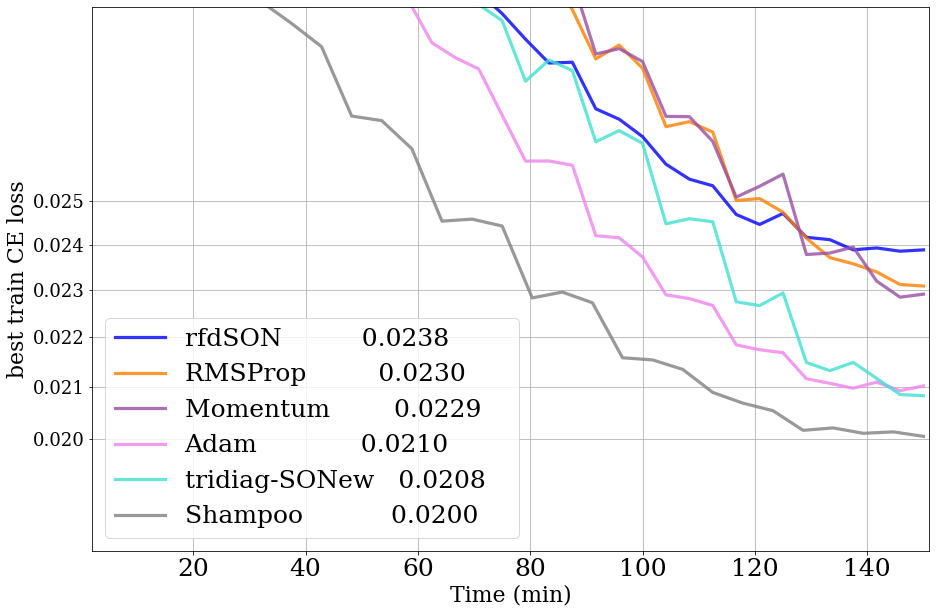}\hspace*{\fill}
    \label{fig:0b}
    \endminipage \hfill
    \caption{\small Train loss corresponding to the best validation runs in \Cref{fig:large-benchmarks} (a) VIT benchmark (b) GraphNetwork benchmark. We observe that tridiag-SONew match or perform better than Adam.}
    \label{fig:large-benchmark-train}
\end{figure}

\begin{figure}[t!]
    \minipage{0.45\textwidth}
    \centering
    \caption*{(a) Best VIT train CE loss}
    \hfill\includegraphics[width=1.0\textwidth]{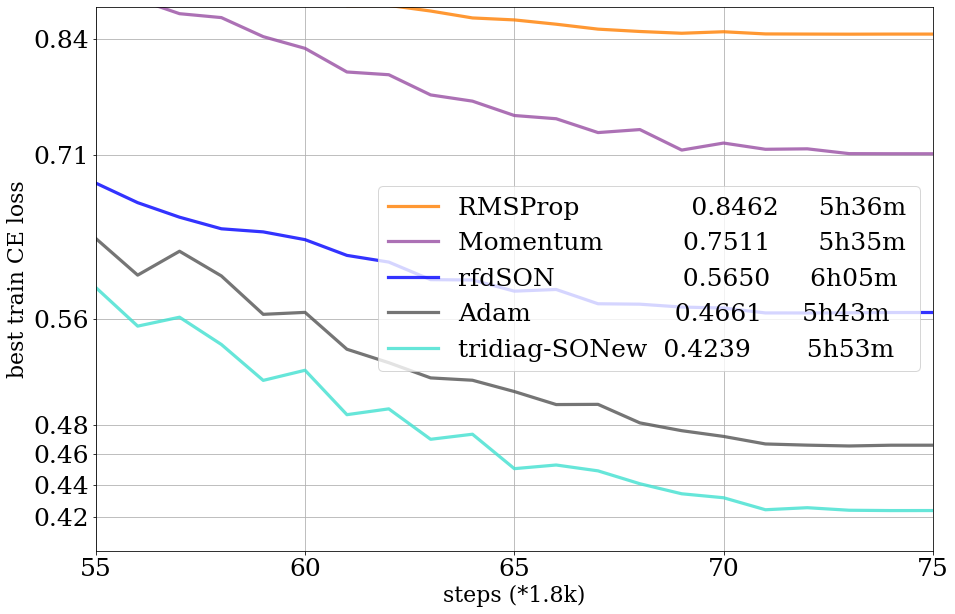}\hspace*{\fill}
    \label{fig:0b}
    \endminipage \hfill
    \minipage{0.45\textwidth}
    \centering
    \caption*{(b) Best GraphNetwork train CE loss}
    \hfill\includegraphics[width=1.0\textwidth]{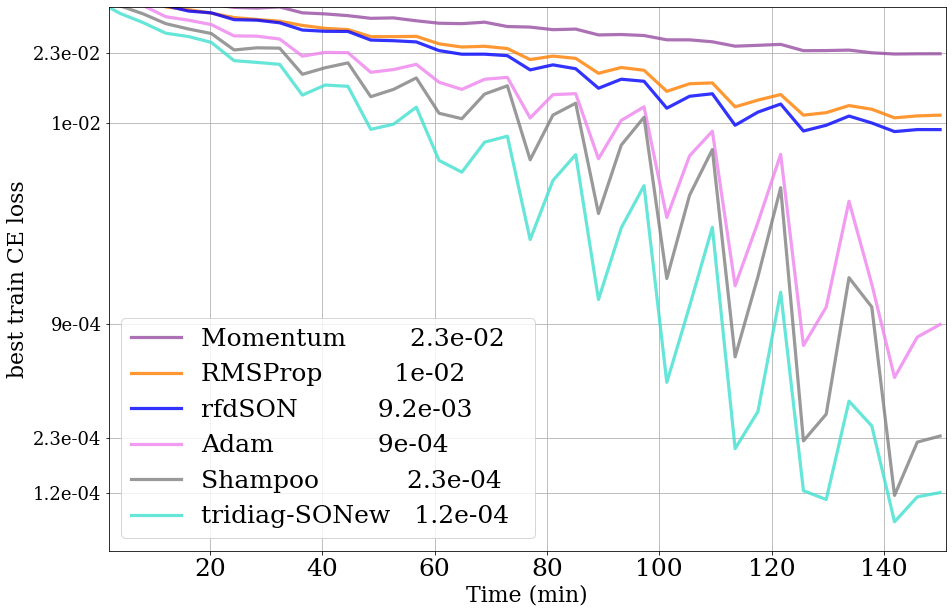}\hspace*{\fill}
    \label{fig:0b}
    \endminipage \hfill
    \caption{\small Best train loss achieved during hyperparam tuning. (a) VIT benchmark (b)GraphNetwork benchmark. We observe that tridiag-SONew significantly outperforms Adam, while being comparable or better than shampoo.}
    \label{fig:large-benchmark-train-best}
    \vspace{-1em}
\end{figure}

\begin{figure}[t!]
    \centering
    \includegraphics[width=0.6\textwidth]{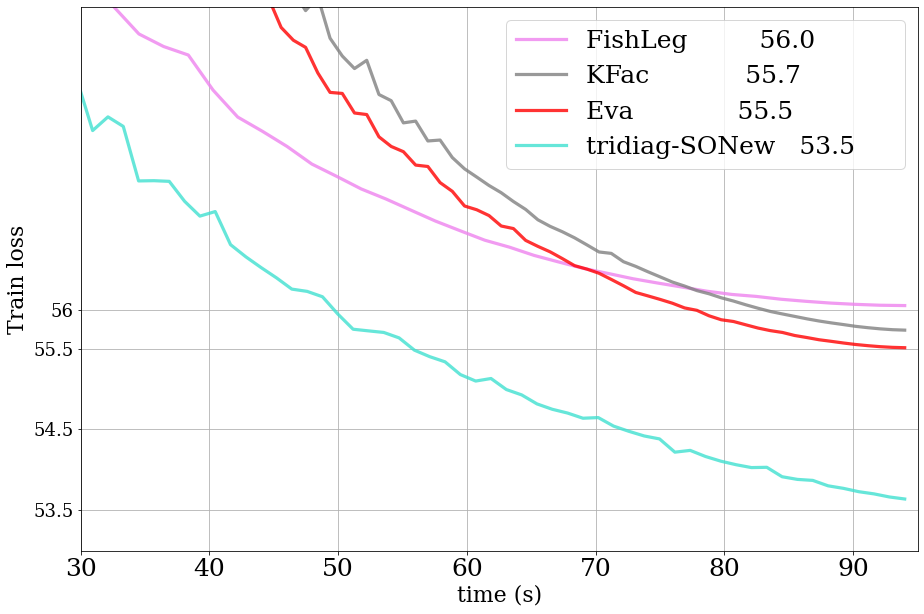}
    \caption{Autoencoder benchmark run using Pytorch on KFAC, FishLeg, Eva, and tridiag-SONew. We notice that tridiag-SONew beats all other baselines by a large margin.}
    \label{fig:autoencoder-extended}
\end{figure}

\paragraph{Autoencoder on KFAC, FishLeg, Eva:} For completeness, We compare SONew against KFAC \cite{martens2015optimizing}, FishLeg \cite{fishleg}, and Eva \cite{eva} on Autoencoder benchmark as used in their official implementation. The main difference is their implementation uses ReLU activation compared to Tanh that we used for all our Autoencoder experiments. As these baselines done have JAX implementation, we use their official PyTorch implementation and run tridiag-SONew in PyTorch as well. Hyperparameter search is conducted for SONew similar to as reported above, over learning rate, $\beta_1, \beta_2,$ and $\epsilon$. For KFAC and Eva, rather than $\beta_2$, damping factor is tuned over $[1e-5, 10]$ (default value specified is 0.03). kl\_clip is tuned as well over $[1e-5, 1.0]$. Preconditioners are updates once every 15 iterations to have same wall clock time as other baselines and SONew. For FishLeg, auxiliary learning rate is tuned $\in [1e-7, 1e-1]$ and damping $\in [1e-5, 1.0]$. All other hyperparameters are tuned similar to SONew. Eva is trained for 100 epochs, and for other methods we change number of epochs such that each experiment takes same amount of time. Each optimizer is tuned using 600 hyperparameters.
The results are in \Cref{fig:autoencoder-extended}, where notice that tridiag-SONew beats all the baselines by a large margin.

\subsubsection{Convex experiments}
As our regret bound applies to convex optimization, we compare SONew to rfdSON \cite{rfdson}, another recent memory-efficient second-order Newton method. We follow \cite{rfdson} for the experiment setup - each dataset is split randomly in 70\%/30\% train and test set. Mean squared loss is used. For tridiag-SONew, we use a total of $2*d$ space for $d$ parameters. Hence, for fair comparison we show rfdSON with $m=2$. Since the code isn't open sourced, we implemented it ourselves. In order to show reproducibility with respect to the reported numbers in \cite{rfdson}, we include results with $m=5$ as well. We see in the \Cref{table:convex-exps} that tridiag-SONew consitently matches or outperforms rfdSON across all 3 benchmarks. Each experiment was run for $20$ epochs and we report the best model's performance on test set.

\begin{table}[H]
\caption{\textbf{Comparison of rfdSON and tridiag-SONew in convex setting on three datasets. We optimize least square loss $
\sum_{t}(y_t-w^Tx_t)^2$ where $w$ is the learnable parameter and $(x_t,y_t)$ is the $t^{th}$ training point. Reported numbers is the accuracy on the test set.}
\vspace{-0.5em}
}
\centering
\minipage{0.45\textwidth}
\caption{(a) Dataset stats}
\resizebox{0.9\columnwidth}{!}{
\begin{tabular}{|c|c|c|}
        \hline
        \toprule
        Dataset & $\#$ total points & dimension \\
        \midrule
        a9a & 32,561 & 123 \\
        gisette & 6000 & 5000 \\
        mnist & 11791 & 780 \\
        \midrule
        \end{tabular}
}
\endminipage \hfill
\minipage{0.45\textwidth}
\caption{(b) RFD-SON vs tridiag-SONew}
\resizebox{1.2\columnwidth}{!}{
\begin{tabular}{|c|c|c|c|}
        \hline
        \toprule
        \textbf{Dataset} & RFD-SON, m=2 & RFD-SON, m=5 & tridiag-SONew\\
        \midrule
        a9a & 83.3 & 83.6 & 84.6\\
        gisette & 96.1 & 96.2 & 96.6\\
        mnist & 93.2 & 94.5 & 96.5\\
        \midrule
        \end{tabular}
}
\endminipage \hfill
\label{table:convex-exps}
\end{table}

\begin{table}[H]
\caption{\textbf{Optimal hyperparams for Autoencoder Benchmark}
}
\centering
\minipage{0.45\textwidth}
\caption{(a) float32 experiments optimal hyperparamters}
\resizebox{1.1\columnwidth}{!}{
\begin{tabular}{|c|c|c|c|c|}
        \hline
        \toprule
        \textbf{Baseline} & $\mathbf{\beta_1}$ & $\mathbf{\beta_2}$ & $\mathbf{\epsilon}$ & \textbf{lr}\\
        \midrule
        SGD & 0.99 & 0.91 & 8.37e-9 & 1.17e-2\\
        Nesterov & 0.914 & 0.90 & 3.88e-10 & 5.74e-3\\
        Adagrad & 0.95 & 0.90 & 9.96e-7 & 1.82e-2 \\
        Momentum & 0.9 & 0.99 & 1e-5 & 6.89e-3\\
        RMSProp & 0.9 & 0.9 & 1e-10 & 4.61e-4 \\
        Adam & 0.9 & 0.94 & 1.65e-6 & 3.75e-3\\
        Diag-SONew & 0.88 & 0.95 & 4.63e-6 & 1.18e-3 \\
        \midrule
        Shampoo & 0.9 & 0.95 & 9.6e-9 & 3.70e-3\\
        tridiag & 0.9 & 0.96 & 1.3e-6 & 8.60e-3\\
        band-4 & 0.88 & 0.95 & 1.5e-3 & 5.53e-3\\
        \midrule
        \end{tabular}
}
\endminipage \hfill
\minipage{0.45\textwidth}
\caption{(b) bfloat16 experiments optimal hyperparamters}
\resizebox{1.1\columnwidth}{!}{
\begin{tabular}{|c|c|c|c|c|}
        \hline
        \toprule
        \textbf{Baseline} & $\mathbf{\beta_1}$ & $\mathbf{\beta_2}$ & $\mathbf{\epsilon}$ & \textbf{lr}\\
        \midrule
        SGD & 0.96 & 0.98 & 2.80e-2 & 1.35e-2\\
        Nesterov &0.914 &0.945 & 8.48e-9 & 6.19e-3\\
        Adagrad & 0.95 & 0.93 & 2.44e-5 & 2.53e-2\\
        Momentum & 0.9 & 0.99 & 0.1 & 7.77e-3 \\
        RMSProp & 0.9 & 0.9 & 2.53e-10 & 4.83e-4\\
        Adam & 0.9 & 0.94 & 3.03e-10 & 3.45e-3\\
        Diag-SONew & 0.9 & 0.95 & 4.07e-6 & 8.50e-3\\
        \midrule
        Shampoo & 0.85 & 0.806 & 6.58e-4 & 5.03e-3\\
        ztridiag & 0.83 & 0.954 & 1.78e-6 & 7.83e-3\\
        band-4 & 0.9 & 0.96 & 1.52e-6 & 4.53e-3\\
        \midrule
        \end{tabular}
}
\endminipage \hfill
\label{table:opt-hpms}
\end{table}
\clearpage

\end{document}